\documentclass{article} 
\usepackage{iclr2027_conference,times}

\usepackage{amsmath,amsfonts,bm}

\def\eqref#1{equation~\ref{#1}}

\def\1{\bm{1}}

\DeclareMathAlphabet{\mathsfit}{\encodingdefault}{\sfdefault}{m}{sl}
\SetMathAlphabet{\mathsfit}{bold}{\encodingdefault}{\sfdefault}{bx}{n}

\usepackage{hyperref}
\usepackage{url}
\usepackage{xcolor}
\definecolor{linkblue}{HTML}{3674F6}
\usepackage{colortbl} 
\usepackage{booktabs} 
\usepackage{graphicx}
\usepackage{amsmath,amssymb,amsthm}
\usepackage{multirow}
\usepackage{wrapfig}
\usepackage{etoc}
\usepackage{needspace}
\usepackage{placeins}
\newtheorem{lemma}{Lemma}
\newtheorem{proposition}{Proposition}
\newtheorem{corollary}{Corollary}
\theoremstyle{remark}
\newtheorem{remark}{Remark}
\definecolor{myblue}{RGB}{135,206,235}

\title{GaugeVLM: Structuring Spatial Supervision with Measured Geometric Interventions}

\author{Hongbo Wang$^{1,2}$\quad Zihan Lin$^{1,3}$\quad Wenkui Yang$^{1,3}$\quad Shiran Ge$^{1,2,4}$\quad Yuang Ai$^{5}$ \\
\textbf{Jie Cao}$^{1,2}$\quad \textbf{Huaibo Huang}$^{1,2}$\quad \textbf{Ran He}$^{1,2,3}\thanks{Corresponding author.}$
\\
\\
$^1$ NLPR \& MAIS, Institute of Automation, Chinese Academy of Sciences \\
$^2$ School of Artificial Intelligence, University of Chinese Academy of Sciences \\
$^3$ School of Advanced Interdisciplinary Sciences, University of Chinese Academy of Sciences \\
$^4$ National University of Singapore \\
$^5$ The Chinese University of Hong Kong
\\
\\
\textbf{\textcolor{linkblue}{\url{https://wafer-bob.github.io/GaugeVLM/}}}
}

\newcommand{\model}{GaugeVLM}
\newcommand{\obj}{GaugeDPO}
\newcommand{\data}{Gauge-50K}
\newcommand{\bench}{Constancy-Bench}

\newcommand{\method}{\model}
\newcommand{\dring}{d_{\mathrm{ring}}}
\newcommand{\drel}{d_{\mathrm{rel}}}

\providecommand{\gain}[1]{\textcolor{teal!70!black}{\scriptsize$+#1$}}
\providecommand{\drop}[1]{\textcolor{red!75!black}{\scriptsize$-#1$}}

\iclrfinalcopy

\begin{document}
\etocdepthtag.toc{main}

\maketitle

\begin{figure}[h]
    \vspace{-5pt}
    \centering
    \includegraphics[width=.95\linewidth]{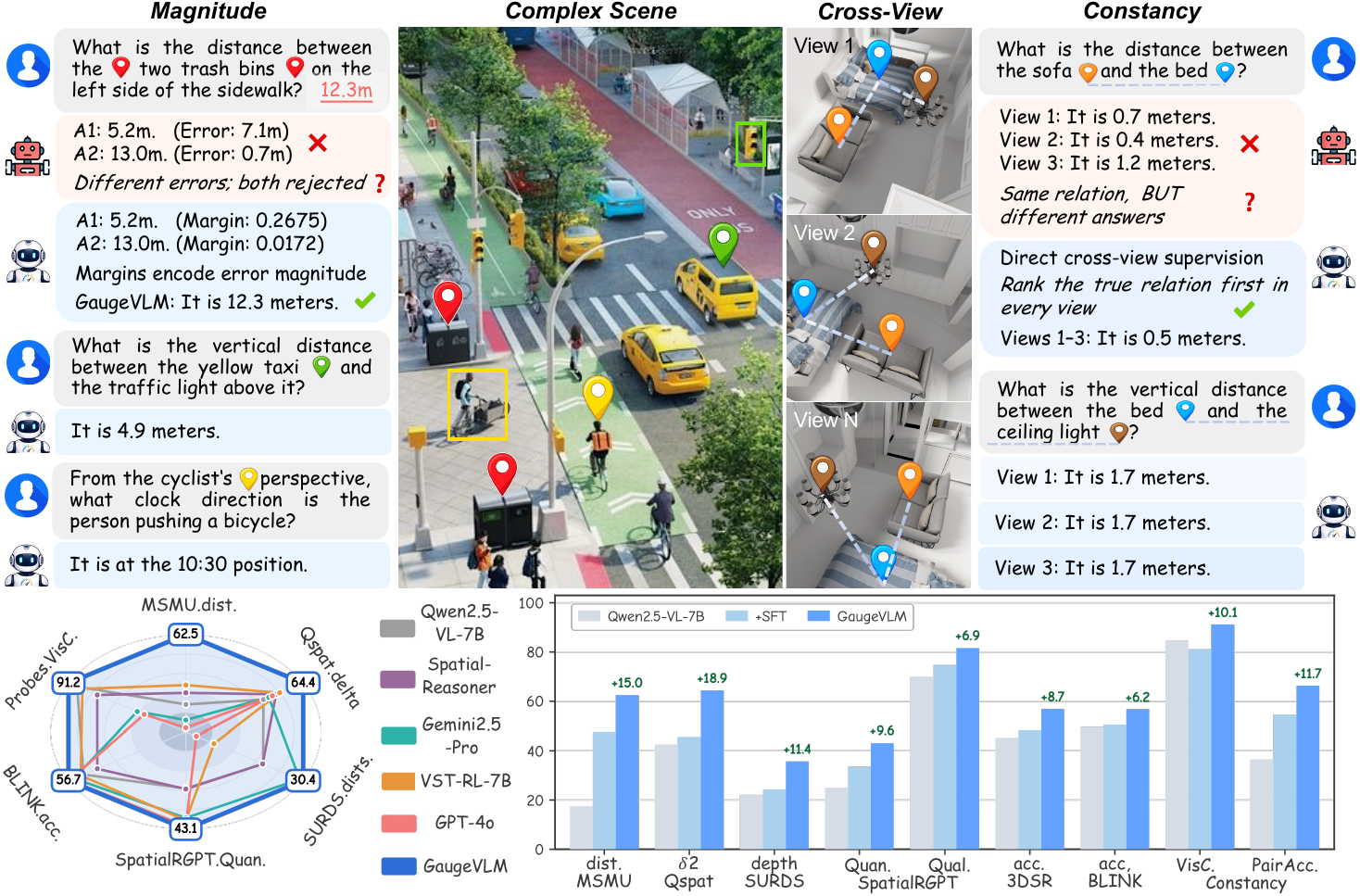}
    \label{fig:placeholder}
\end{figure}

\begin{abstract}
Vision-language models (VLMs) can contradict themselves across views of the same spatial relation and fail to respond when that relation changes. Addressing these failures requires supervision that captures error magnitude and geometric dependencies across observations, both of which remain implicit in training on individual answers or ordinal preferences. Therefore, we introduce \textbf{GaugeVLM}, which makes this structure explicit through controlled object and camera interventions in explicit 3D scenes, producing linked observations with measured differences between spatial relations and shared truths across views. To translate this structure into learning signals, its core objective, \textbf{GaugeDPO}, converts measured errors into preference margins, directly supervises correct canonical rankings across views, and links intervention-induced answer-odds contrasts to measured relation changes with view-specific scales. Our analysis bounds canonical prediction error and establishes that the cross-view and intervention constraints can be jointly satisfied. Empirically, GaugeVLM improves all 10 established spatial metrics over supervised fine-tuning across three VLM backbones, with the main 7B model gaining 15.0 and 18.9 percentage points on MSMU distance and QSpatial+, respectively. These gains also extend to autonomous driving and embodied reasoning, demonstrating the robust generalization across domains.
\end{abstract}

\section{Introduction}

Despite rapid advances in vision-language models (VLMs)~\citep{openaiGPT4TechnicalReport2023, chenInternVLScalingVision2024, baiQwen25VLTechnicalReport2025}, spatial reasoning remains brittle. Even after large-scale supervised fine-tuning (SFT) on spatial question answering~\citep{chenSpatialVLMEndowingVisionLanguage2024, chenSDVLMSpatialMeasuring2025, ogeziSpaREEnhancingSpatial2025, heluScalingSpatialReasoning2025}, VLMs still struggle with distances, directions, and relative relations~\citep{liaoReasoningPathsReference2024, ma3DSRBenchComprehensive3D2024, fuBLINKMultimodalLarge2024}, especially when relations must hold across viewpoints~\citep{liViewSpatialBenchEvaluatingMultiperspective2025, yangThinkingSpaceHow2025, yehSeeingAnotherPerspective2025, wangMindCubeSpatialMental2025}. These failures motivate examining the supervision signal alongside data and model capacity. Supervision based only on target answers tells the model what the answer is, but not how far an error is from the truth or whether the same relation should remain consistent across viewpoints.

Preference optimization offers a way to use incorrect answers as supervision, but the ordinal labels used by standard Direct Preference Optimization (DPO)~\citep{rafailovDirectPreferenceOptimization2024} do not quantify geometric error. For the same true distance, predictions off by ten centimeters and ten meters receive the same rejected label despite their different error magnitudes. This motivates an instance level offset that explicitly reflects how far a rejected answer deviates from the truth. Likewise, since the same allocentric error has the same geometric magnitude across camera views, its assigned offset should also remain unchanged. The central question is therefore how to derive offsets that capture error magnitude while remaining invariant to viewpoint.

Geometry provides the missing scale for spatial preferences. \textbf{\obj{}} grounds each preference offset in measured error, making supervision sensitive to error magnitude while assigning the same offset to the same allocentric error across views. The measurement comes from the construction itself. \textbf{\model{}} perturbs known relations in explicit 3D scenes and verbalizes the resulting alternatives as rejected answers, making their geometric deviations directly available without an intermediate reward estimate. These deviations should depend on geometry regardless of angular origin or distance units. We therefore build a bounded metric from circular direction differences and distance ratios, with explicit saturation. Used as a preference offset, this metric translates the severity absent from ordinal labels into the separation encouraged during training. To understand how it does so, we analyze the offset's local gradient effect and characterize the allocation of preference separation under an equal weight budget, then examine the learned mean gaps empirically.

Beyond measuring individual errors, we develop a 3D scene engine for \model{} to connect observations through controlled changes in the scene. Object interventions produce measured changes in the true relation, while camera interventions preserve it under a fixed object configuration and anchor frame. These linked observations specify when predictions should change and when they should remain constant. Accordingly, direct supervision enforces correct canonical rankings across views, while an intervention profile links answer odds contrasts to measured relation changes without requiring equal confidence across views. The same distinction guides evaluation. For views sharing a fixed relation, we show that half the largest disagreement in a common frame lower bounds the largest prediction error. Object interventions instead require checking predictions against their changed truths. The engine supplies \textbf{\data{}} for training and a probe within \textbf{\bench{}}.

Across three backbone families, \model{} improves all ten established spatial metrics over SFT. The main 7B model gains $15.0$ and $18.9$ percentage points on MSMU distance and QSpatial+, respectively, and leads the listed prior open spatial VLMs on nine of ten metrics under the evaluation. Gains extend to driving scenarios, including an $11.4$ percentage point improvement on SURDS depth. Matched ablations support the value of measured margins over constant, shuffled, and estimated reward margins. With only $25\%$ of the preference data, \obj{} surpasses DPO trained on the full preference dataset across all five evaluated metrics. Further evaluations show improved correctness across views and generalization to unseen intervention magnitudes and combinations.
\vspace{-5pt}
\section{Spatial Preference Needs a Geometric Margin}
\label{sec:background}

\subsection{Geometry-Grounded Offsets}
\label{sec:related}
DPO learns ordinal preferences, SimPO, IPO, AlphaDPO and ODPO
regulate gaps or offsets~\citep{rafailovDirectPreferenceOptimization2024,mengSimPOSimplePreference2024,azarGeneralTheoreticalParadigm2023a,wuAlphaDPOAdaptiveReward2025,aminiDirectPreferenceOptimization2024};
mDPO adds image comparisons~\citep{wangMDPOConditionalPreference2024}.
Spatial methods use QA, grounding and graded
preferences~\citep{chenSpatialVLMEndowingVisionLanguage2024,chenSDVLMSpatialMeasuring2025,ogeziSpaREEnhancingSpatial2025,heluScalingSpatialReasoning2025,chengSpatialRGPTGroundedSpatial,shenFineGrainedPreferenceOptimization2026},
transformed-input coupling or smooth numerical rewards~\citep{wangSVQAR1ReinforcingSpatial2025,jiaoSmoothOperatorSmooth2026}.
Ordinal labels do not quantify errors. We map controlled 3D perturbations to offsets with a bounded, unit-invariant metric. Moreover, cross-view and counterfactual
studies~\citep{bhatConsistentWrongEvidence2026,vellamchetiCVTBenchCounterfactualViewpoint2026,huangConsiSpaceLearningGeometric2026,yuMindEditBenchBenchmarkingObjectLevel2026}
motivate separate tests of viewpoint constancy and correctness
before and after target relocation.

\subsection{Formal Setup and Design Requirements}
\label{sec:prelim}
\label{sec:requirements}
Quantization maps a scene's physical relation in a fixed anchor frame to
$r^\star=Q(r_{\rm phys}^\star)\in\mathcal R_Q\subset\mathcal R$.
Camera $v$ supplies image--prompt input $x_v$. On $\mathcal R_Q$, verbalizer $\Phi$ and
parser $\Pi$ satisfy $\Pi(\Phi(r,v))=r$; invalid answers map to $\bot$.
Response-side pairs compare true and distinct rejected labels under one
input; camera-relative binary probe labels are separate.
For chosen--rejected triples $(x,y_w,y_l)\sim\mathcal D$ and $\beta>0$,
define the reference-relative score
$\rho_\theta(x,y)=\beta\log[\pi_\theta(y\mid x)/\pi_{\rm ref}(y\mid x)]$
with positive reference probabilities. With $\sigma(z)=(1+e^{-z})^{-1}$,
\begin{equation}
\mathcal L_{\rm Margin\text{-}DPO}
=-\mathbb E_{\mathcal D}\log\sigma\!\left(
\rho_\theta(x,y_w)-\rho_\theta(x,y_l)-\gamma\right),
\label{eq:margin-dpo}
\end{equation}
The prescribed offset $\gamma\ge0$ shifts the gap required at a fixed
loss level~\citep{aminiDirectPreferenceOptimization2024,wuAlphaDPOAdaptiveReward2025},
with $\gamma=0$ recovering DPO.

For spatial relations, the offset should reflect geometric error, while
the policy's predictions should preserve the true relation across cameras.
These impose distinct requirements on margins and predictions. Let
$\mathcal E:\mathcal R\times\mathcal R\to\mathbb R_{\ge0}^{k}$
collect deviations, with $\mathcal E(r,r^\star)=0$ iff $r=r^\star$.
\begin{description}
\item[(R1) Magnitude.]
The margin $\tilde\gamma:\mathcal R^2\times V\to\mathbb R_{\ge0}$
factors as $\tilde\gamma(r^-,r^\star,v)=\gamma(\mathcal E(r^-,r^\star))$,
where $\gamma$ is componentwise non-decreasing and strictly increasing
before saturation, with $\gamma(e)=0$ iff $e=0$. Thus each displacement
receives the same assigned margin across cameras.
\item[(R2) Constancy.]
With the object configuration and anchor frame fixed, an ideal policy
satisfies $\Pi(y_v)=r^\star$ for every identifiable view $v\in V$.
Correctness implies agreement; agreement alone permits consistent error.
Matching training statistics does not establish this property.
\end{description}
Camera invariance alone is insufficient: a constant margin fails (R1).
We characterize coordinate-invariant deviations and use a metric to assign
offsets, supplying geometric information absent from deterministic
ordinal labels.

\section{\obj{}: Learning from Geometric Interventions}
\label{sec:method}

\begin{figure}[t]
\vspace{-5pt}
    \centering
    \includegraphics[width=0.98\linewidth]{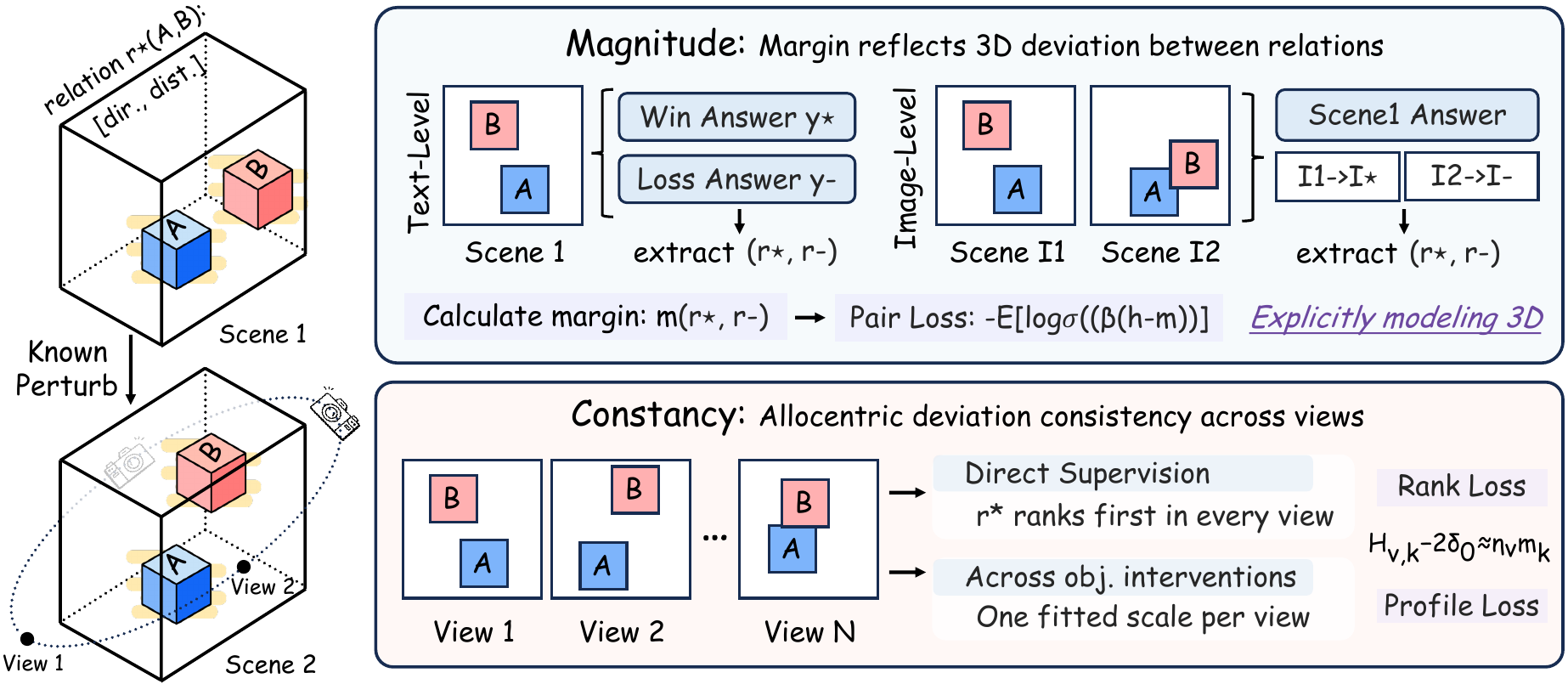}
    \caption{\textbf{\obj{} turns measured 3D interventions into supervision.} \emph{Left,} controlled perturbations define the margin $m(r^\star,r^-)$. \emph{Top,} it shifts response- and image-side pair losses. \emph{Bottom,} direct supervision and intervention profiles enforce correct, view-consistent predictions.}
    \label{fig:pipeline}
\end{figure}

Object interventions alter relations and camera changes preserve their truth.
\obj{} turns this geometry into preference and prediction margins,
then structures canonical answer support across interventions.

\subsection{From Geometry to Matched Preferences}
\label{sec:margin}
\label{sec:constancy}

\paragraph{A shared geometric metric.}
Let $r=(c,d)\in\mathcal R:=\mathcal C_{12}\times\mathbb R_{>0}$, with $\mathcal C_{12}=\mathbb Z/12\mathbb Z$. Clock hours advance clockwise about gravity from the anchor's canonical front at $12$ o'clock; $d$ is centroid distance in meters. Camera-independent errors should respect circular shifts and changes of units:
\begin{align}
D_c(c+k,c^\star+k)=D_c(c,c^\star),\text{ } D_d(\mu d,\mu d^\star)=D_d(d,d^\star),\quad\text{ } k\in\mathcal C_{12}, \text{ }\mu>0,
\label{eq:dist-inv}
\end{align}
We therefore use clock difference and distance ratio, with shortest-arc and capped log-distance:
\begin{gather}
\dring(c,c^\star)=\tfrac16\lvert c-c^\star\rvert_{12},
\quad
\drel(d,d^\star)=\tfrac1\kappa
\min\!\left(\left|\log(d/d^\star)\right|,\kappa\right),
\label{eq:ring}\\
m(r,r^\star)=
\alpha\,\dring(c,c^\star)
+(1-\alpha)\,\drel(d,d^\star),
\label{eq:margin}
\end{gather}
Here $\lvert\cdot\rvert_{12}\in\{0,\ldots,6\}$, $\kappa>0$ and
$\alpha\in(0,1)$. The metric $m\in[0,1]$ satisfies (R1) with
$\mathcal E=(\dring,\drel)$,
$\gamma(e)=\beta[\alpha e_c+(1-\alpha)e_d]$ and $\widetilde\gamma=\beta m$.

\paragraph{Matched response and image changes.}
Let $\Pi$ extract the terminal relation from the multi-step response
$\Phi(r,v)$, with representable domain $\mathcal R_Q\subset\mathcal R$.
A displacement $\delta\in\mathcal C_{12}\times\mathbb R_{>0}$ maps
$r^\star=(c^\star,d^\star)\in\mathcal R_Q$ to
$r^-=(c^\star+\delta_c,d^\star\delta_d)\ne r^\star$ in $\mathcal R_Q$,
fixing geometric error. Response-side pairs compare
$\Phi(r^\star,v)$ and $\Phi(r^-,v)$ on one input; image-side pairs
change the scene to $r^-$, fixing the question and response
$\Phi(r^\star,v)$. Symmetry gives the same rejected error.
Margins measure terminal error; preference scores use full responses.
For fields $J\in{{c},{d},{c,d}}$ predeclared,
$\Pi_J$ outputs in $\mathcal R_J=\prod_{j\in J}\mathcal R_j$, where
$\mathcal R_c=\mathcal C_{12}$ and $\mathcal R_d=\mathbb R_{>0}$.
For $u,u^\star\in\mathcal R_J$, use
\begin{equation}
m_J(u,u^\star)=\sum_{j\in J}a_j D_j(u_j,u_j^\star),
\quad(a_c,a_d)=(\alpha,1-\alpha),\quad(D_c,D_d)=(\dring,\drel),
\label{eq:projected-margin}
\end{equation}
Each $m_J$ is a metric satisfying (R1) with unchanged weights;
$m_{\{c,d\}}=m$ and $\Pi_{\{c,d\}}=\Pi$. Tasks retain their response formats.

\begin{proposition}[Constancy certificate]
\label{prop:certificate}
Let $G=\{x_v\}_{v=1}^{V_G}$, $V_G\ge2$, contain camera views of a
fixed object configuration and anchor frame, sharing truth $r_G^\star$.
For responses $\{y_v\}$, let $V_{\rm ok}=\{v:\Pi(y_v)\ne\bot\}$
and $a_v=\Pi(y_v)$ on valid views. If $V_{\rm ok}\ne\emptyset$,
\begin{equation}
\max_{v\in V_{\rm ok}}m(a_v,r_G^\star)
\ge \tfrac12\max_{u,v\in V_{\rm ok}}m(a_u,a_v),
\label{eq:certificate}
\end{equation}
The factor $\tfrac12$ is optimal; agreement alone does not certify correctness.
\end{proposition}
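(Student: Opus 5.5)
The argument is the triangle inequality for $m$, applied through the shared truth $r_G^\star$. First I will confirm that $m$ is a pseudometric on $\mathcal R$. The shortest-arc distance $\lvert c-c^\star\rvert_{12}$ is the graph metric on the 12-cycle, so $\dring$ is a metric. For $\drel$, the map $(d,d^\star)\mapsto|\log d-\log d^\star|$ is a metric on $\mathbb R_{>0}$. Capping it, $\min(\cdot,\kappa)$, preserves the triangle inequality: if either summand on the right already equals $\kappa$ the bound is trivial, and otherwise it follows from the uncapped inequality. A positive rescaling and a convex combination with weights $\alpha,1-\alpha$ then preserve symmetry, nonnegativity and the triangle inequality. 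Hence $m$ is a metric, consistent with the paper's remark that each $m_J$ is a metric.

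Next, fix $u,v\in V_{\rm ok}$. By symmetry and the triangle inequality,
$m(a_u,a_v)\le m(a_u,r_G^\star)+m(r_G^\star,a_v)\le 2\max_{w\in V_{\rm ok}}m(a_w,r_G^\star)$.
Taking the maximum over $u,v$ gives \eqref{eq:certificate}. The hypothesis $V_{\rm ok}\ne\emptyset$ makes both maxima well defined; if $|V_{\rm ok}|=1$ the right side is $0$ and the claim is trivial. The sharing of $r_G^\star$ across views, which holds because the object configuration and anchor frame are fixed, is exactly what lets one point serve as the midpoint in the triangle inequality.

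For optimality of $\tfrac12$, I will exhibit an instance with equality. Take two valid views with $r_G^\star=(c^\star,d^\star)$, $a_1=(c^\star+1,d^\star)$ and $a_2=(c^\star-1,d^\star)$, assuming these are representable in $\mathcal R_Q$; otherwise I will use distance perturbations $d^\star e^{\pm t}$ with $t\le\kappa/2$. Then $m(a_v,r_G^\star)=\alpha/6$ for both views, while $m(a_1,a_2)=\alpha\cdot 2/6$, so $\max_v m(a_v,r_G^\star)=\tfrac12\,m(a_1,a_2)$. Any constant larger than $\tfrac12$ would therefore fail.

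For the final clause, take all $a_v$ equal to some $r\ne r_G^\star$. Then the right side of \eqref{eq:certificate} is $0$ while the left side is $m(r,r_G^\star)>0$, because $\dring$ or $\drel$ is positive. Zero disagreement therefore certifies nothing about correctness.

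The only mildly delicate step is checking that the capped log-ratio still satisfies the triangle inequality, and confirming that the equality example lies in the representable set $\mathcal R_Q$. I expect the first to be routine. For the second, I will pick symmetric perturbations inside whatever quantization grid $Q$ uses.
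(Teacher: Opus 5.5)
Your proposal is correct and follows essentially the same route as the paper: you establish that $m$ is a metric (ring distance plus capped log-ratio, with capping handled as in the paper's capping lemma), apply the triangle inequality through the shared truth $r_G^\star$, and show sharpness with predictions one clock step on either side of the truth. Your log-symmetric fallback with $t\le\kappa/2$ is the paper's $\alpha=0$ construction, and it states precisely the condition that the paper's phrase ``below the cap'' leaves implicit.
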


\subsection{Learning Magnitude and Correctness Across Views}
\label{sec:objective}
\label{sec:constancy-term}

\begin{figure*}[t]
    \vspace{-10pt}
    \centering
    \includegraphics[width=\linewidth]{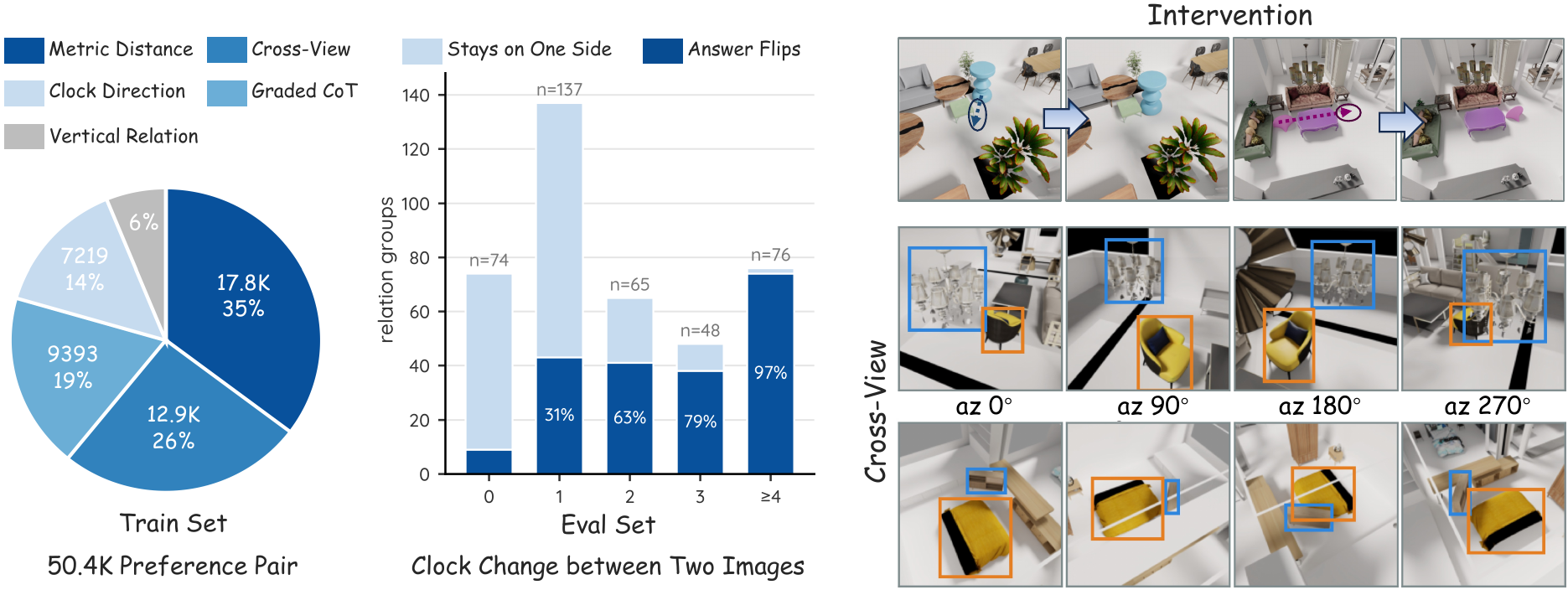}
    \caption{\textbf{One construction across training and evaluation.}
    Left, \data{} by pair family, with geometric displacement annotations for measured pairs.
    Middle, \bench{} groups held-out scenes by the clock change between two images and reports how often the answer flips.
    Right, an intervention moves one object and re-renders the scene from up to four views.}
    \label{fig:bench}
\end{figure*}

\paragraph{Magnitude-aware preferences.}
We initialize $\pi_\theta$ from SFT and freeze a copy as $\pi_{\rm ref}$.
Records $z=((x_w,y_w),(x_l,y_l))$, with image--question inputs $x$,
split into measured $\mathcal D_{\rm geom}$ and ordering-only
$\mathcal D_{\rm bin}$. Measured pairs must parse correctly;
using each branch's own serialized truth, we require
\begin{equation}
m_{J_z}\!\left(\Pi_{J_z}(y_w),r_{w,J_z}^\star\right)=0,
\qquad
m_z:=m_{J_z}\!\left(\Pi_{J_z}(y_l),r_{l,J_z}^\star\right)>0,
\label{eq:measured-pair}
\end{equation}
Ordering-only pairs, including vertical comparisons, require a valid
preference, not an exact preferred answer. Both use
$h_\theta(z)\!
=\log\!\bigl(\pi_\theta(y_w\!\mid\!x_w)/\pi_{\rm ref}(y_w\!\mid\! x_w)\bigr)\!
-\log\bigl(\pi_\theta(y_l\!\mid\! x_l)/\pi_{\rm ref}(y_l\!\mid\! x_l)\bigr)$. This is a within-input response contrast or an mDPO-style image
contrast~\citep{wangMDPOConditionalPreference2024}; the latter does not
identify a cross-input reward difference. All sequence scores sum
answer tokens through termination, excluding prompt and padding.
We set
\begin{equation}
\gamma_z=\beta m_z\ \ (z\in\mathcal D_{\rm geom}),\quad
{\rm or}
\quad
\gamma_z=0\ \ (z\in\mathcal D_{\rm bin}),
\qquad \beta>0,
\label{eq:mixed-offset}
\end{equation}
Zero offset denotes ordering-only supervision in the pair branch, not zero
geometric supervision in the full objective and not zero error.
Nonempty strata $k$ index task, pair side and supervision type; fixed weights
$w_k>0$, $\sum_k w_k=1$, define
$\mathcal D_{\rm mix}=\sum_k w_k\operatorname{Unif}(\mathcal D_k)$ and
\begin{equation}
\mathcal L_{\rm pair}
=-\mathbb E_{z\sim\mathcal D_{\rm mix}}
\log\sigma\!\left(\beta h_\theta(z)-\gamma_z\right),
\label{eq:gauge}
\end{equation}
At fixed $\beta$, larger measured errors demand larger gaps at a given
positive loss level. Zero offsets recover the DPO form in the pair branch; the full objective
retains direct and intervention terms.

\paragraph{Canonical prediction.}
For (R2), fix a finite grid $\mathcal D_Q\subset\mathbb R_{>0}$ before
training independently of test truths, with
$\mathcal C=\mathcal C_{12}\times\mathcal D_Q\subseteq\mathcal R_Q$
and group truths $r_G^\star=Q(r_{{\rm phys},G}^\star)\in\mathcal C$.
A unique terminated relation sequence $\psi(r)$ satisfies $\Pi(\psi(r))=r$.
Under a fixed prompt $q_{\rm rel}$ and $x_v^{\rm rel}=(I_v,q_{\rm rel})$,
inference returns $\psi(\hat r_v)$ under a fixed tie rule, without
test truth or GT rationales, where
\begin{equation}
s_\theta(x_v^{\rm rel},r)=\log\pi_\theta(\psi(r)\mid x_v^{\rm rel}),
\qquad
\hat r_v\in\arg\max_{r\in\mathcal C}s_\theta(x_v^{\rm rel},r),
\label{eq:canonical-predict}
\end{equation}

\paragraph{Direct supervision of the worst view.}
Set $\Delta(r,\!r^\star)\!=\!\delta_0\,\mathbb I[r\!\ne \!r^\star]\!+\!\xi\,m(r,\!r^\star)$,
where $\delta_0\!>\!0$ is a classification margin, $\xi\!>\!0$ scales geometry.
The most violated view--candidate comparison defines:
\begin{align}
\ell_{\rm dir}(G)
&=\left[
\max_{\substack{1\le v\le V_G,\text{ } r\in\mathcal C\setminus\{r_G^\star\}}}
\left\{s_\theta(x_v^{\rm rel},r)-s_\theta(x_v^{\rm rel},r_G^\star)
+\Delta(r,r_G^\star)\right\}\right]_+,
\label{eq:direct-group}\\
\mathcal L_{\rm dir}&=\mathbb E_{G\sim\mathcal D_{\rm grp}}\ell_{\rm dir}(G).
\label{eq:direct-loss}
\end{align}
\begin{proposition}[Direct control of canonical group predictions]
\label{prop:direct-group}
For a nonempty finite group with finite scores,
$r_G^\star\in\mathcal C$, and $|\mathcal C|\ge2$,
Eq.~\ref{eq:canonical-predict} satisfies
\begin{equation}
\delta_0\,\mathbb I[\exists v:\hat r_v\ne r_G^\star]
+\xi\max_v m(\hat r_v,r_G^\star)\le\ell_{\rm dir}(G),
\label{eq:direct-bound}
\end{equation}
Thus $\ell_{\rm dir}(G)<\delta_0$ guarantees correctness, with a unique
truth maximizer at zero loss; correct predictions may still incur loss.
Also $\frac{\xi}{2}\max_{u,v}m(\hat r_u,\hat r_v)\le\ell_{\rm dir}(G)$.
\end{proposition}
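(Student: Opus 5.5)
The plan is to evaluate the hinge inside $\ell_{\rm dir}(G)$ at each view's canonical prediction. That gives a per-view lower bound. Maximizing over views and adding the two kinds of term then yields Eq.~\ref{eq:direct-bound}. Since $\mathcal C$ is finite and every score is finite, the argmax in Eq.~\ref{eq:canonical-predict} exists, so each $\hat r_v$ is well defined under the fixed tie rule. Also $|\mathcal C|\ge2$ makes $\mathcal C\setminus\{r_G^\star\}$ nonempty, so the inner maximum in Eq.~\ref{eq:direct-group} ranges over a nonempty finite set and is finite.

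\emph{Per-view bound.} Fix $v$. If $\hat r_v=r_G^\star$, the contribution $\delta_0\mathbb I[\hat r_v\ne r_G^\star]+\xi m(\hat r_v,r_G^\star)$ is $0$, which is at most $\ell_{\rm dir}(G)$ because $[\cdot]_+\ge0$. If $\hat r_v\ne r_G^\star$, then $\hat r_v$ is an admissible candidate in Eq.~\ref{eq:direct-group}. Because $\hat r_v$ maximizes $s_\theta(x_v^{\rm rel},\cdot)$ over $\mathcal C\ni r_G^\star$, we have $s_\theta(x_v^{\rm rel},\hat r_v)-s_\theta(x_v^{\rm rel},r_G^\star)\ge0$. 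Hence
\[
\ell_{\rm dir}(G)\ge \Delta(\hat r_v,r_G^\star)=\delta_0\,\mathbb I[\hat r_v\ne r_G^\star]+\xi\,m(\hat r_v,r_G^\star).
\]

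\emph{Combining across views.} The left side of Eq.~\ref{eq:direct-bound} is not simply a maximum over views of the per-view bound. The indicator $\mathbb I[\exists v]$ may come from one view, while $\max_v m$ is attained at another. This is the step needing care, and it is resolved by the following observation. If every view is correct, the left side is $0$. Otherwise, choose $v^\ast$ attaining $\max_v m(\hat r_v,r_G^\star)$. If that maximum is positive, then $\hat r_{v^\ast}\ne r_G^\star$, because $m$ is a metric and vanishes only on equal arguments. So the indicator term is also realized at $v^\ast$, and the left side equals $\Delta(\hat r_{v^\ast},r_G^\star)\le\ell_{\rm dir}(G)$. If the maximum is $0$, every $\hat r_v$ equals $r_G^\star$ by the same metric property, which returns us to the all-correct case. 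This gives Eq.~\ref{eq:direct-bound}.

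\emph{Consequences.}
\begin{itemize}
\item If $\ell_{\rm dir}(G)<\delta_0$, Eq.~\ref{eq:direct-bound} forces the indicator to be $0$, so every view's prediction is correct.
\item At zero loss, every $r\ne r_G^\star$ satisfies $s_\theta(x_v^{\rm rel},r)\le s_\theta(x_v^{\rm rel},r_G^\star)-\Delta(r,r_G^\star)$ with $\Delta(r,r_G^\star)\ge\delta_0>0$. So $r_G^\star$ is the unique maximizer in every view.
\item Correct predictions may still incur loss: take scores where the truth wins by less than $\delta_0$, so the hinge is positive while the argmax is correct.
\item For the disagreement bound, the triangle inequality for $m$ (Section~\ref{sec:margin}) gives $m(\hat r_u,\hat r_v)\le m(\hat r_u,r_G^\star)+m(\hat r_v,r_G^\star)\le2\max_w m(\hat r_w,r_G^\star)$. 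Multiplying by $\xi/2$ and dropping the nonnegative indicator term from Eq.~\ref{eq:direct-bound} yields $\frac{\xi}{2}\max_{u,v}m(\hat r_u,\hat r_v)\le\ell_{\rm dir}(G)$. This is the same argument as Proposition~\ref{prop:certificate}.
\end{itemize}
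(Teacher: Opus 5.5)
Your proposal is correct and follows essentially the paper's argument: evaluate the hinge at an erroneous view's argmax prediction, whose score is at least the truth's, so that the indicator and geometric terms are realized at a single view; then apply the triangle inequality for the disagreement bound and read off uniqueness at zero loss. The only difference is that the paper takes the maximal-error prediction among the erroneous ones, whereas you take the view maximizing $m$ over all views and then appeal to positive definiteness of $m$. The paper's choice also covers the endpoint pseudometrics ($\alpha\in\{0,1\}$), where $\delta_0$ alone handles wrong labels at zero distance, while your case split is valid for the stated $\alpha\in(0,1)$.
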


\subsection{Linking Intervention Responses to Geometry}
\label{sec:intervention}

\providecommand{\gain}[1]{\textcolor{teal!70!black}{\scriptsize$+#1$}}
\providecommand{\drop}[1]{\textcolor{red!75!black}{\scriptsize$-#1$}}

\begin{table}[t]
\vspace{-12pt}
  \centering\scriptsize
  \setlength{\tabcolsep}{1.2pt}\renewcommand{\arraystretch}{.99}
    \caption{\textbf{Spatial understanding results.}
    $^{\S}$ marks models evaluated without their native additional visual cues for fair comparison. $^{\dag}$ marks constant outputs.}
  \label{tab:main}
  \resizebox{\linewidth}{!}{%
  \begin{tabular}{l c *{3}{c} c *{2}{c} *{2}{c} c c *{3}{c}}
  \toprule
  & & \multicolumn{3}{c}{\textbf{MSMU}}
    & \textbf{QSpat}
    & \multicolumn{2}{c}{\textbf{SURDS}}
    & \multicolumn{2}{c}{\textbf{SpatialRGPT}}
    & \textbf{3DSR}
    & \textbf{BLINK}
    & \multicolumn{3}{c}{\textbf{Constancy-Bench}} \\
  \cmidrule(lr){3-5}
  \cmidrule(lr){6-6}
  \cmidrule(lr){7-8}
  \cmidrule(lr){9-10}
  \cmidrule(lr){11-11}
  \cmidrule(lr){12-12}
  \cmidrule(lr){13-15}

  Model & \#P
  & \shortstack{dist.}
  & \shortstack{width}
  & \shortstack{height}
  & \shortstack{$\delta_2$}
  & \shortstack{dist.}
  & \shortstack{depth}
  & \shortstack{Quan.}
  & \shortstack{Qual.}
  & \shortstack{acc.}
  & \shortstack{acc.}
  & \shortstack{Rank\\ $\rho\!\uparrow$}
  & \shortstack{VisC.\\ \%\,$\uparrow$}
  & \shortstack{PairAcc.\\ \%\,$\uparrow$} \\

  \midrule
  \rowcolor{gray!18}
  \multicolumn{15}{l}{\emph{Frontier models}}\\

Claude Opus\,4.8 & --
& 2.5 & 1.1 & 2.2
& 76.2
& 11.7 & 14.7
& 41.5 & 55.8
& 66.4
& 70.1
& .127 & 52.7 & 62.0 \\

GPT-4o & --
& 2.5 & 0.0 & 5.5
& 47.5
& 2.7 & 0.0
& 41.7 & 74.3
& 55.3
& 56.2
& .136 & 32.4 & 45.4 \\

Gemini-2.5-Pro & --
& 7.5 & 6.7 & 18.7
& 45.5
& 29.5 & 34.3
& 38.1 & 80.9
& 54.2
& 54.1
& .227 & 37.8 & 64.4 \\

Kimi-K2.6 & 1T
& 15.0 & 9.0 & 18.7
& 64.4
& 48.5 & 53.1
& 42.0 & 89.3
& 62.8
& 72.8
& .200 & 46.0 & 68.8 \\

\midrule
\rowcolor{red!8}
\multicolumn{15}{l}{\emph{Prior open spatial VLMs}}\\

SpaceOm & 3B
& 15.0 & 5.6 & 3.3
& 50.5
& 3.0 & 9.5
& 24.0 & 47.6
& 49.9
& 47.1
& $-.094$ & 84.6 & 9.3 \\

SpatialLadder & 3B
& 17.5 & 1.1 & 6.6
& 50.5
& 0.0 & 0.0
& 17.5 & 43.2
& 48.2
& 44.9
& $-.029$ & 87.2 & 2.9 \\

VST-RL & 7B
& 30.0 & 30.3 & 50.5
& 51.5
& 7.3 & 4.6
& 39.1 & 61.2
& 57.7
& 50.8
& .264 & 80.4 & 56.1 \\

SpatialReasoner\rlap{$^{\S}$} & 8B
& 25.0 & 4.5 & 9.9
& 49.5
& 19.9 & 23.8
& 25.3 & 69.7
& 56.8
& 42.8
& .277 & 68.9 & 59.0 \\

SpatialRGPT\rlap{$^{\S}$} & 8B
& 25.0 & 2.2 & 4.4
& 55.4
& 2.3 & 1.5
& 25.7 & 60.1
& 53.4
& 45.9
& .000 & 85.8 & 45.9 \\

SD-VLM & 7B
& 47.5 & 47.2 & 68.1
& 27.7
& 0.01 & 0.0
& 22.5 & 49.2
& 26.6
& 16.5
& .000\rlap{$^{\dag}$} & 100.0\rlap{$^{\dag}$} & 0.0 \\

\midrule
\rowcolor{myblue!20}
\multicolumn{15}{l}{\emph{Ours --- Qwen family}}\\

\quad Qwen2.5-VL & 7B
& 17.5 & 3.4 & 12.1
& 42.6
& 20.0 & 22.3
& 25.0 & 70.0
& 44.9
& 49.5
& .115 & 84.5 & 36.6 \\

\quad {}\,GaugeSFT & 7B
& 47.5 & 48.3 & 67.0
& 45.5
& 22.4 & 24.3
& 33.5 & 74.7
& 48.2
& 50.5
& .109 & 81.1 & 54.6 \\

\rowcolor{blue!8}
\quad {}\,\method{} & 7B
& \textbf{62.5} & \textbf{55.1} & \textbf{71.4}
& \textbf{64.4}
& \textbf{30.4} & \textbf{35.7}
& \textbf{43.1} & \textbf{81.6}
& \textbf{56.9}
& \textbf{56.7}
& \textbf{.279} & \textbf{91.2} & \textbf{66.3} \\

\quad $\Delta_{\text{vs SFT}}$ & &
\gain{15.0} & \gain{6.7} & \gain{4.4}
& \gain{18.9}
& \gain{8.0} & \gain{11.4}
& \gain{9.6} & \gain{6.9}
& \gain{8.7}
& \gain{6.2}
& \gain{.170} & \gain{10.1} & \gain{11.7} \\

\midrule
\rowcolor{myblue!20}
\multicolumn{15}{l}{\emph{Ours --- GLM family}}\\

\quad GLM-4.1V & 9B
& 30.0 & 1.1 & 9.9
& 43.6
& 29.5 & 26.4
& 31.9 & 71.2
& 48.2
& 51.0
& .290 & 62.8 & 65.4 \\

\quad {}\,GaugeSFT & 9B
& 67.5 & 57.3 & 67.0
& 49.5
& 33.9 & 29.1
& 38.4 & 68.8
& 46.7
& 56.2
& .235 & 81.8 & 67.3 \\

\rowcolor{blue!8}
\quad {}\,\method{} & 9B
& \textbf{72.5} & \textbf{65.2} & \textbf{75.8}
& \textbf{58.4}
& \textbf{35.2} & \textbf{31.3}
& \textbf{43.5} & \textbf{78.7}
& \textbf{50.8}
& \textbf{58.2}
& \textbf{.311} & \textbf{90.5} & \textbf{69.8} \\

\quad $\Delta_{\text{vs SFT}}$ & &
\gain{5.0} & \gain{7.9} & \gain{8.8}
& \gain{8.9}
& \gain{1.3} & \gain{2.2}
& \gain{5.1} & \gain{9.9}
& \gain{4.1}
& \gain{2.0}
& \gain{.076} & \gain{8.7} & \gain{2.5} \\

\midrule

\rowcolor{myblue!20}
\multicolumn{15}{l}{\emph{Ours --- Mistral family}}\\

\quad Pixtral  & 12B
& 10.0 & 4.5 & 4.4
& 8.9
& 21.3 & 15.8
& 27.6 & 64.8
& 44.0
& 37.2
& .096 & 85.1 & 34.6 \\

\quad {}\,GaugeSFT & 12B
& 50.0 & 53.9 & 69.2
& 7.9
& 22.0 & 18.8
& 18.0 & 60.0
& 43.5
& 37.0
& .000\rlap{$^{\dag}$} & 100.0\rlap{$^{\dag}$} & 32.2 \\

\rowcolor{blue!8}
\quad {}\,\method{} & 12B
& \textbf{52.5} & \textbf{56.2} & \textbf{75.8}
& \textbf{11.9}
& \textbf{28.5} & \textbf{23.7}
& \textbf{28.0} & \textbf{67.0}
& \textbf{48.0}
& \textbf{42.6}
& \textbf{.138} & 99.3 & \textbf{41.0} \\

\quad $\Delta_{\text{vs SFT}}$ & &
\gain{2.5} & \gain{2.2} & \gain{6.6}
& \gain{4.0}
& \gain{6.5} & \gain{4.9}
& \gain{10.0} & \gain{7.0}
& \gain{4.5}
& \gain{5.6}
& \gain{.138} & \drop{0.7} & \gain{8.8} \\

\bottomrule
\end{tabular}}
\vspace{-3pt}
\end{table}

\begin{figure*}[t]
    \vspace{-1pt}
    \centering
    \includegraphics[width=\linewidth]{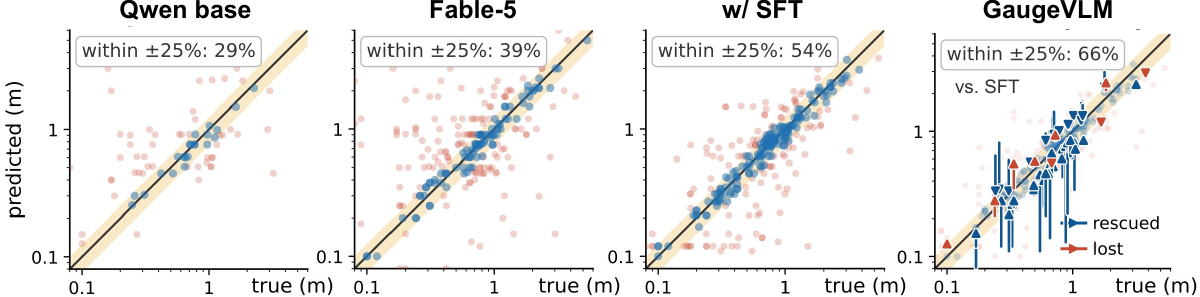}
    \caption{\textbf{Predicted versus true values across all MSMU metric tasks.} GaugeVLM achieves the best calibration even outperforming the Claude-Fable-5.
    Missing or unparseable numerical outputs are not plotted, so point counts vary across models.
    ``Within $\pm25\%$'' denotes $|\hat d-d^\star|/d^\star\le0.25$.}
    \label{fig:compare_fable5}
\end{figure*}

Correct rankings do not determine relative gaps across scene changes.
A block $B=\{(x_{i,v}^{\rm rel},r_i):i=0,\ldots,K;\ v=1,\ldots,V_B\}$
renders an original state and $K\ge2$ interventions from $V_B\ge2$
common cameras. Each state forms a direct group $G_i$, with
$r_i\in\mathcal C$ and $m_k=m(r_0,r_k)>0$ taking at least two distinct values.
Suppressing the canonical-input superscript, define
\begin{equation}
H_{v,k}=\underbrace{s_\theta(x_{0,v},r_0)-s_\theta(x_{0,v},r_k)}_{\text{original-state answer log-odds}}
+\underbrace{s_\theta(x_{k,v},r_k)-s_\theta(x_{k,v},r_0)}_{\text{intervened-state answer log-odds}},
\label{eq:interaction}
\end{equation}
This intervention contrast cancels separable score components $a(x)+b(r)$,
including input normalizers. Each direct gap includes classification margin $\delta_0$;
we fit the excess response:
\begin{equation}
\ell_{\rm int}(B)=\frac1{V_B}\sum_v\min_{\eta\ge0}
\frac1K\sum_{k=1}^K\big(H_{v,k}-2\delta_0-\eta m_k\big)^2,
\qquad \mathcal L_{\rm int}=\mathbb E_{B\sim\mathcal D_{\rm int}}\ell_{\rm int}(B),
\label{eq:intervention}
\end{equation}
Each camera fits one scale across a block's interventions, allowing
view-dependent confidence. This prescribed proportionality concerns
the sum of two gaps without requiring equality.
If all $\ell_{\rm dir}(G_i)$ and $\ell_{\rm int}(B)$ vanish, every canonical
prediction is correct and $H_{v,k}\!-\!2\delta_0\!=\!\eta_v m_k$ with $\eta_v\!\ge\!2\xi$.

\paragraph{Joint objective.}
The preference stage combines three separately averaged losses:
\begin{equation}
\mathcal L_{\rm \obj}=\mathcal L_{\rm pair}
+\lambda_{\rm dir}\mathcal L_{\rm dir}
+\lambda_{\rm int}\mathcal L_{\rm int},
\qquad\lambda_{\rm dir},\lambda_{\rm int}>0,
\label{eq:full}
\end{equation}
Only the pair term uses reference-relative scores. Fixed manifests define
group and block distributions, with every state directly supervised.
Guarantees are pointwise on evaluated groups and blocks for the specified
predictor; matched experiments
assess optimization and generalization.

\subsection{The Gauge Engine}
\label{sec:data}
\paragraph{From scene geometry to supervision.}
The rendering pipeline places Objaverse-LVIS or 3D-FUTURE objects in 3D-FRONT scenes using BlenderProc~\citep{deitkeObjaverseUniverseAnnotated2022a,fu3DFUTURE3DFurniture2020,fu3DFRONT3DFurnished2021,denningerBlenderProc2019}. Object poses determine the anchor-relative clock and centroid distance. Response-side rejections change the serialized relation; image-side pairs move the target and retain the original response. Each measured error is evaluated against its branch's truth. Vertical and reasoning-quality comparisons without a complete geometric target use the ordering-only branch. The supplied \data{} pool reports $50.4$K pairs across five task families and $1{,}174$ image-side scene pairs. These are pair-pool counts, not counts of the new intervention blocks.

\renewcommand{\gain}[1]{\textcolor{teal!70!black}{+#1}}
\renewcommand{\drop}[1]{\textcolor{red!75!black}{-#1}}

\begin{table}[t]\vspace{-10pt}
    \centering
    \scriptsize
    \setlength{\tabcolsep}{4.pt}
    \renewcommand{\arraystretch}{.96}

    \caption{\textbf{General Vision-Language Understanding Ability Results.} Average{$^{\ddag}$} denotes the mean of the seven percentage metrics, excluding MME.}
    \label{tab:general}

    \resizebox{.98\linewidth}{!}{%
        \begin{tabular}{l c c c c c c c c c c c}
            \toprule
            Model
            & \#P
            & MME$^{\mathrm{P}}$
            & MME$^{\mathrm{R}}$
            & POPE
            & MMStar
            & AI2D
            & SEED
            & SQA
            & HallB
            & MMMU
            & Average\rlap{$^{\ddag}$} \\
            \midrule

            Qwen2.5-VL
            & 7B
            & 1606
            & 622
            & 83.7
            & 58.1
            & 78.8
            & 73.0
            & 72.7
            & 63.3
            & 42.8
            & 67.5 \\

            \rowcolor{blue!8}
            GaugeVLM
            & 7B
            & \textbf{1640}
            & \textbf{629}
            & \textbf{85.9}
            & \textbf{60.3}
            & \textbf{80.1}
            & \textbf{75.0}
            & \textbf{78.9}
            & \textbf{64.5}
            & \textbf{44.4}
            & \textbf{69.9} \\

            \quad $\Delta_{\text{vs base}}$
            &
            & \gain{34}
            & \gain{7}
            & \gain{2.2}
            & \gain{2.2}
            & \gain{1.3}
            & \gain{2.0}
            & \gain{6.2}
            & \gain{1.2}
            & \gain{1.6}
            & \gain{2.4} \\

            \midrule
            GLM-4.1V
            & 9B
            & 1593
            & 555
            & 84.6
            & 28.4
            & 57.1
            & 73.1
            & 28.7
            & 49.8
            & 36.6
            & 51.2 \\

            \rowcolor{blue!8}
            GaugeVLM
            & 9B
            & \textbf{1595}
            & \textbf{560}
            & \textbf{88.9}
            & \textbf{32.1}
            & \textbf{64.3}
            & \textbf{74.0}
            & \textbf{42.7}
            & \textbf{52.2}
            & \textbf{40.7}
            & \textbf{56.4} \\

            \quad $\Delta_{\text{vs base}}$
            &
            & \gain{2}
            & \gain{5}
            & \gain{4.3}
            & \gain{3.7}
            & \gain{7.2}
            & \gain{0.9}
            & \gain{14.0}
            & \gain{2.4}
            & \gain{4.1}
            & \gain{5.2} \\

            \midrule
 
            Pixtral
            & 12B
            & 1598
            & 411
            & 85.4
            & 47.9
            & 78.1
            & 74.5
            & 81.9
            & 60.0
            & 46.0
            & 67.7 \\

            \rowcolor{blue!8}
            GaugeVLM
            & 12B
            & \textbf{1610}
            & \textbf{411}
            & \textbf{85.9}
            & \textbf{50.0}
            & \textbf{79.9}
            & \textbf{75.1}
            & \textbf{84.2}
            & \textbf{60.5}
            & \textbf{49.4}
            & \textbf{69.3} \\

            \quad $\Delta_{\text{vs base}}$
            &
            & \gain{12}
            & \gain{0.0}
            & \gain{0.5}
            & \gain{2.1}
            & \gain{1.8}
            & \gain{0.6}
            & \gain{2.3}
            & \gain{0.5}
            & \gain{3.4}
            & \gain{1.6} \\

            \bottomrule
        \end{tabular}%
    }
\end{table}
\begin{figure}[t]
    \centering
    \includegraphics[width=0.99\linewidth]{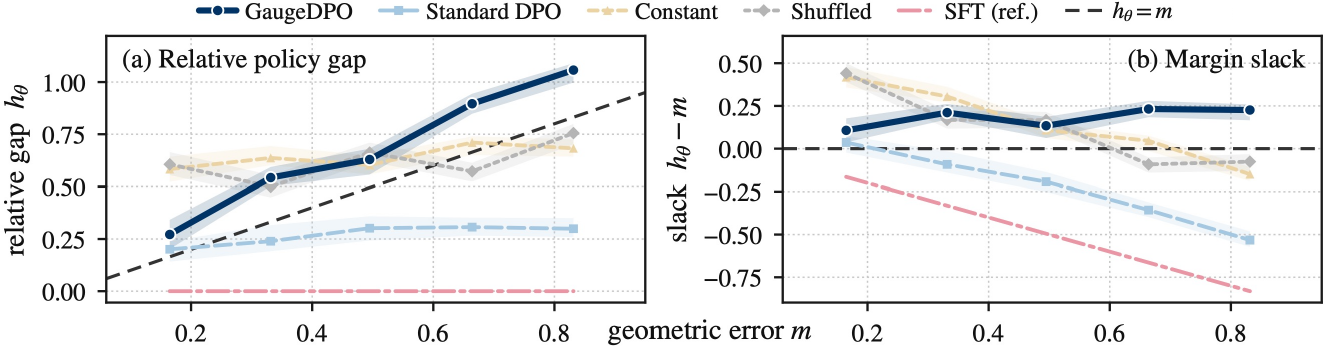}
    \caption{\textbf{Measured margins induce graded policy separation.} On held-out scenes, \model{} scales its reference-relative gap with geometric error (a) and maintains positive slack (b). SFT is the reference ($h_{\mathrm{SFT}}=0$); bands show 95\% paired scene-bootstrap CIs.}
    \label{fig:margin_h}
\end{figure}

\paragraph{Matched states and cameras.}
For the revised objective, each block holds the anchor, object identities, scene context and camera set fixed while changing the target relation at several measured magnitudes. A fixed state supplies a camera group; all states together supply the interaction profile. Blocks require co-visibility, an identifiable anchor frame and sufficient metric-scale evidence. Base scenes and related object assets are separated before generating split-specific states and views. The block manifest records physical and serialized truths, so quantization collisions and out-of-domain cases are explicit.
Full derivations are provided in App.~\ref{app:derivations}, with data construction details in App.~\ref{app:data}.

\section{Experiments}
\label{sec:experiments}

\subsection{Experimental Setup}
\label{sec:exp-setup}

\begin{table}[t]
\vspace{-10pt}
    \centering
    \scriptsize
    \setlength{\tabcolsep}{3.6pt}
    \renewcommand{\arraystretch}{.96}

    \caption{\textbf{Ablation of preference margin design and SFT initialization.} Zero-, constant-, shuffled- and measured-margin rows retain the same direct/profile supervision; only the pair offset changes.}
    \label{tab:abla}

    \resizebox{\linewidth}{!}{%
        \begin{tabular}{c l l c c c c c c c}
            \toprule
            &
            Settings &
            Margin $m$ &
            \shortstack{\textbf{MSMU}\\[-.3pt]dist.} &
            \shortstack{\textbf{QSpat}\\[-1.3pt]$\delta_2$} &
            \shortstack{\textbf{SRGPT}\\[-.3pt]Quan} &
            \shortstack{\textbf{3DSR}\\[1.3pt]acc.} &
            \shortstack{\textbf{BLINK}\\[1.3pt]acc.} &
            \shortstack{\textbf{Rank}\\[-.3pt]$\rho\uparrow$} &
            \shortstack{\textbf{PairAcc.}\\[-.3pt]\% $\uparrow$}
            \\
            \midrule

            (a)\quad &
            SFT Initialization &
            none &
            47.5 &
            45.5 &
            33.5 &
            48.2 &
            50.5 &
            .109 &
            54.6
            \\

            \midrule

            (b)\quad &
            {}+\,Zero-offset pair loss &
            $m \equiv 0$ &
            50.0 &
            49.5 &
            35.4 &
            50.3 &
            51.9 &
            .119 &
            56.6
            \\

            (c)\quad &
            {}+\,Constant margin &
            $m \equiv \bar{m}$ &
            52.5 & 
            52.5 & 
            37.2 & 
            51.6 & 
            53.1 & 
            .146 & 
            58.0 \\
            
            (d)\quad &
            {}+\,Shuffled margin &
            $m_{\sigma(i)}$ &
            52.5 & 
            51.5 & 
            36.6 & 
            51.8 & 
            52.6 & 
            .132 & 
            57.6 \\
            
            (e)\quad &
            {}+\,Reward margin &
            $\hat{m}_{\phi}$ (estimated) &
            57.5 & 
            56.4 & 
            39.7 & 
            53.7 & 
            52.9 & 
            .142 & 
            61.0 \\

            \rowcolor{blue!8}
            (f)\quad &
            {}+\,\obj{} &
            measured (Eq.~\ref{eq:margin}) &
            \textbf{62.5} &
            \textbf{64.4} &
            \textbf{43.1} &
            \textbf{56.9} &
            \textbf{56.7} &
            \textbf{.279} &
            \textbf{66.3}
            \\

            \midrule

            (g)\quad &
            \obj{} w/o SFT &
            measured (Eq.~\ref{eq:margin})&
            40.0 &
            54.5 &
            35.8 &
            49.4 &
            51.4 &
            .204 &
            52.2
            \\

            \bottomrule
        \end{tabular}%
        }
\vspace{-4pt}
\end{table}

\newcommand{\bc}[1]{\cellcolor{blue!8}#1}
\newcommand{\hc}[1]{\cellcolor{myblue!20}#1}
\begin{table}[t]
    \vspace{-10pt}
    \centering

    \begin{minipage}[t]{0.47\linewidth}
        \centering
        \scriptsize
        \setlength{\tabcolsep}{1.8pt}
        \renewcommand{\arraystretch}{1.15}

        \caption{\textbf{Data efficiency of geometric magnitude supervision.} With only 25\% preference data, GaugeDPO surpasses full-data DPO.}
        \label{tab:data_efficiency}

        \resizebox{\linewidth}{!}{%
        \begin{tabular}{c l c c c c c}
            \toprule
            \textbf{Data} &
            \textbf{Method} &
            \shortstack{\textbf{MSMU}\\[1pt]dist.} &
            \shortstack{\textbf{QSpat}\\[-.2pt]$\delta_2$} &
            \shortstack{\textbf{SRGPT}\\[1pt]Quan.} &
            \shortstack{\textbf{3DSR}\\[2.5pt]acc.} &
            \shortstack{\textbf{BLINK}\\[2.5pt]acc.} \\
            \midrule

            \multirow{2}{*}{10\%}
            & DPO
            & 47.5 & 46.5 & 33.7 & 48.8 & 50.8 \\
            & \bc{\obj{}}
            & \bc{\textbf{50.0}}
            & \bc{\textbf{48.5}}
            & \bc{\textbf{34.5}}
            & \bc{\textbf{49.6}}
            & \bc{\textbf{51.5}} \\

            \midrule

            \multirow{2}{*}{25\%}
            & DPO
            & 50.0 & 47.5 & 34.2 & 49.3 & 51.2 \\
            & \hc{\obj{}\rlap{$^{\star}$}}
            & \hc{\textbf{52.5}}
            & \hc{\textbf{51.5}}
            & \hc{\textbf{36.1}}
            & \hc{\textbf{51.0}}
            & \hc{\textbf{52.6}} \\

            \midrule

            \multirow{2}{*}{50\%}
            & DPO
            & 50.0 & 48.5 & 34.9 & 49.8 & 51.6 \\
            & \bc{\obj{}}
            & \bc{\textbf{57.5}}
            & \bc{\textbf{57.4}}
            & \bc{\textbf{39.5}}
            & \bc{\textbf{53.4}}
            & \bc{\textbf{54.7}} \\

            \midrule

            \multirow{2}{*}{100\%}
            & DPO
            & 50.0 & 49.5 & 35.4 & 50.3 & 51.9 \\
            & \bc{\obj{}}
            & \bc{\textbf{62.5}}
            & \bc{\textbf{64.4}}
            & \bc{\textbf{43.1}}
            & \bc{\textbf{56.9}}
            & \bc{\textbf{56.7}} \\

            \bottomrule
        \end{tabular}%
        }
    \end{minipage}
    \hfill
    \begin{minipage}[t]{0.51\linewidth}
        \centering
        \scriptsize
        \setlength{\tabcolsep}{1.0pt}
        \renewcommand{\arraystretch}{0.95}

        \caption{\textbf{Sensitivity analysis.}
        $\lambda_{\rm dir}$ weights direct supervision;
        $\lambda_{\rm int}$ weights intervention fitting.
        The other weight stays at its shaded default.}
        \label{tab:hparams}

        \resizebox{\linewidth}{!}{%
        \begin{tabular}{c c c c c c c c}
            \toprule
            \textbf{Para.} &
            \textbf{Value} &
            \shortstack{\textbf{MSMU}\\[1pt]dist.} &
            \shortstack{\textbf{QSpat}\\[-.2pt]$\delta_2$} &
            \shortstack{\textbf{SRGPT}\\[1pt]Quan.} &
            \shortstack{\textbf{3DSR}\\[2.5pt]acc.} &
            \shortstack{\textbf{BLINK}\\[2.5pt]acc.} &
            \shortstack{\textbf{PairAcc.}\\[1pt]\%\,$\uparrow$}
            \\
            \midrule

            & $0$
            & 57.5 & 60.4 & 40.6 & 54.8 & 55.3 & 64.4 \\

            & $0.03$
            & 60.0 & 63.4 & 42.4 & 56.1 & 56.3 & 65.9 \\

            $\lambda_{\rm dir}$
            & \bc{$0.1$}
            & \bc{\textbf{62.5}}
            & \bc{\textbf{64.4}}
            & \bc{43.1}
            & \bc{\textbf{56.9}}
            & \bc{56.7}
            & \bc{\textbf{66.3}} \\

            & $0.3$
            & \textbf{62.5} & \textbf{64.4} & \textbf{43.3} & 56.7 & \textbf{56.8} & 65.9 \\

            & $1.0$
            & 60.0 & 62.4 & 42.1 & 55.8 & 56.0 & 65.4 \\

            \midrule

            & $0$
            & 60.0 & 63.4 & 42.5 & 56.1 & 56.2 & 60.0 \\

            & $0.25$
            & \textbf{62.5} & \textbf{64.4} & 42.9 & 56.7 & 56.5 & 64.9 \\

            $\lambda_{\rm int}$
            & \bc{$0.5$}
            & \bc{\textbf{62.5}}
            & \bc{\textbf{64.4}}
            & \bc{\textbf{43.1}}
            & \bc{56.9}
            & \bc{\textbf{56.7}}
            & \bc{\textbf{66.3}} \\

            & $0.75$
            & \textbf{62.5} & \textbf{64.4} & 43.0 & \textbf{57.0} & \textbf{56.7} & \textbf{66.3} \\

            & $1$
            & 60.0 & 63.4 & 42.6 & 56.6 & 56.3 & 65.4 \\

            \bottomrule
        \end{tabular}%
        }
    \end{minipage}
\end{table}

\paragraph{Models and benchmarks.}
We instantiate \model{} on Qwen2.5-VL-7B-Instruct, GLM-4.1V-9B, and Pixtral-12B~\citep{baiQwen25VLTechnicalReport2025,teamGLM45VGLM41VThinkingVersatile2026,agrawal2024pixtral12b}, using Qwen as the main backbone. We compare with SpaceOm, SpatialLadder, VST-RL, SpatialReasoner, SpatialRGPT, SD-VLM~\citep{spaceom, liSpatialLadderProgressiveTraining2025, yangVisualSpatialTuning2025, shenFineGrainedPreferenceOptimization2026, chengSpatialRGPTGroundedSpatial, chenSDVLMSpatialMeasuring2025}, and frontier models Kimi-K2.6, Claude-Opus-4.8, GPT-4o, Gemini-2.5-Pro~\citep{moonshotKimiK262026, anthropicClaudeOpus482026, openaiGPT4oSystemCard2024, googleGemini25ProModelCard2025}.
Evaluation spans benchmarks from diverse domains to assess both in-domain reasoning and cross-domain generalization, including general spatial understanding on MSMU, QSpatial$^+$ (QSpat.), SpatialRGPT-Bench, 3DSRBench, and BLINK~\citep{chenSDVLMSpatialMeasuring2025, liaoReasoningPathsReference2024, chengSpatialRGPTGroundedSpatial, ma3DSRBenchComprehensive3D2024, fuBLINKMultimodalLarge2024}, with driving-oriented reasoning on SURDS~\citep{guoSURDSBenchmarkingSpatial2025}.

\paragraph{Evaluation metrics.}
\label{sec:eval-metrics}
In Tab.~\ref{tab:main}, \textit{Rank} is Spearman correlation between predicted and true distances; \textit{VisC}. is prediction agreement (\%) under photometric changes at fixed geometry and camera. Let $\mathcal P,\mathcal G,\mathcal B$ contain relocation pairs, fixed-truth camera groups and canonical multi-state blocks. Set $C_S=\prod_{j\in S}\mathbb I[\hat a_j=a_j^\star]$ for $S\in\mathcal P\cup\mathcal B$. For the tolerance-based view audit, $C_G^{\rm tol}$ and $A_G^{\rm tol}$ indicate valid groups passing correctness and agreement thresholds, respectively; overbars denote manifest means. \textit{ViewAgree}$_{\rm tol}$ is $100\overline{A^{\rm tol}}_{\mathcal G}$, \textit{WrongAgree}$_{\rm tol}$ is
$100\overline{A^{\rm tol}(1-C^{\rm tol})}_{\mathcal G}$, and \textit{Mean worst} $m$ is
$\overline{\max_{i,v}m(\hat r_{i,v},r_i)}_{\mathcal B}$.
We average $E_{\rm shape}$ over eligible block--camera profiles, and main 7B PairAcc uses $205$ pairs. With $U_v=H_v-2\delta_0\mathbf1$, $m_B=(m_1,\ldots,m_K)$ and the fitted
$\eta_v^*$ from Eq.~\ref{eq:intervention},
\begin{align}
(\mathrm{PairAcc},\mathrm{GroupAcc}_{\rm tol},\mathrm{BlockAcc})
&=100\big(\overline C_{\mathcal P},\overline{C^{\rm tol}}_{\mathcal G},\overline C_{\mathcal B}\big),
\label{eq:pairacc}\\
E_{\rm shape}(B,v)
&=\frac{\|U_v-\eta_v^*m_B\|_2^2}{\|U_v\|_2^2},
\qquad \|U_v\|_2>\tau_U>0,
\label{eq:profile-evaluation}
\end{align}

\paragraph{Training and scoring units.}
All backbones use spatial SFT on $25{,}378$ QA turns followed by preference optimization, with a frozen SFT checkpoint as reference. In the SFT stage, we use the OSD dataset following SpatialRGPT together with 10K samples drawn from the MSMU training set, and fine-tune the language-model parameters with LoRA of rank $128$ and $\alpha_{\rm LoRA}=128$, while the vision encoder is frozen. The SFT adapter is merged before the subsequent \obj{} stage, which uses our constructed \date{} data and performs full-parameter fine-tuning of the model and vision encoder. The recorded defaults are $\lambda_{\rm dir}=.1$, $\lambda_{\rm int}=.5$, $\beta=.3$, geometric $\alpha=.5$, and $\kappa=\log5$, with $\tau_c = 1/6$ and $\tau_d = 0.25$ in normalized ring/log-distance units. Chunking reduces peak activation memory, not
candidate-scoring cost. All training is conducted on four NVIDIA A100 80GB GPUs. Full training details and settings are provided in App.~\ref{app:training}.

\subsection{Main Results}
\label{sec:exp-results}

\textbf{Broad Gains and Competitive Performance.}
Across backbones, we improve all $30/30$ established spatial-task metrics over both base and SFT checkpoints (Tab.~\ref{tab:main}). The VisC exception ($-0.7$) accompanies recovery from SFT output collapse. QSpatial gains reach $18.9$ points from preference learning versus $3.0$ from imitation. Gains also extend to driving and embodied 3D benchmarks. At 7B, \model{} leads prior open spatial VLMs on $9/10$ established metrics under the evaluated input protocol, trailing VST-RL by $<1$ point on 3DSRBench. It leads the listed frontier models on all three MSMU tasks and exceeds GPT-4o and Gemini-2.5-Pro on all reported metrics. Fig.~\ref{fig:compare_fable5} additionally compares distance predictions with Claude-Fable-5~\citep{anthropicClaudeFable52026}. Additionally, across three families in Tab.~\ref{tab:general}, all metrics from eight general benchmarks~\citep{fuMMEComprehensiveEvaluation2025,liEvaluatingObjectHallucination2023,chenAreWeRight,kembhavi2016ai2d,liSEEDBench2BenchmarkingMultimodal2023,luLearnExplainMultimodal2022,guanHallusionBenchAdvancedDiagnostic2024,yueMMMUMassiveMultidiscipline2024} match or exceed their base-model values, including the hallucination metrics.

\begin{table*}[t]
\vspace{-10pt}
\centering
\begingroup
\renewcommand{\arraystretch}{0.97}
\setlength{\tabcolsep}{3.8pt}

\begin{minipage}[t]{0.55\textwidth}
\centering
\caption{\textbf{Supervision comparison.}
Worst $m$ is the mean worst-case error.
Profile metrics use eligible profiles ($|U_v|*2>0.05$, finite $U_v$);
$E*{\rm shape}$ and zero-$\eta_v^\star$ use their 1,022-profile intersection.}
\label{tab:intervention-mechanism}

\footnotesize
\resizebox{\linewidth}{!}{%
\begin{tabular}{lccccc}
\toprule
& \multicolumn{2}{c}{\textbf{Behavior}}
& \multicolumn{3}{c}{\textbf{Profile diagnostics}} \\
\cmidrule(lr){2-3}
\cmidrule(lr){4-6}
\textbf{Objective}
& \shortstack{Block\\acc.$\uparrow$}
& \shortstack{Worst\\$m\downarrow$}
& $E_{\rm shape}\downarrow$
& \shortstack{Eligible\\(\%)}
& \shortstack{$\eta_v^\star{=}0$\\(\%)$\downarrow$} \\
\midrule
Pair + direct
& 60.0 & .186 & .241 & 88.2 & 20.9 \\

\quad + Single-gap
& 63.7 & .170 & .112 & 91.5 & 9.4 \\

\quad + EqSim-style
& 65.0 & .163 & .174 & 90.4 & 14.9 \\

\midrule
\rowcolor{blue!8}
\textbf{GaugeDPO (full)}
& \textbf{70.0}
& \textbf{.134}
& \textbf{.047}
& \textbf{94.6}
& \textbf{4.2} \\
\bottomrule
\end{tabular}%
}
\end{minipage}
\hfill
\begin{minipage}[t]{0.44\textwidth}
\vspace{0pt}
\centering
\caption{\textbf{Generalization.}
Each setting contains 100 held-out scenes and 200 blocks;
``unseen'' denotes held-out magnitudes or intervention combinations.}
\label{tab:intervention-ood}

\footnotesize
\resizebox{\linewidth}{!}{%
\begin{tabular}{
    l
    c
    c
    >{\columncolor{blue!8}}c
    >{\columncolor{blue!8}}c
}
\toprule
& \multicolumn{2}{c}{\textbf{Pair + direct}}
& \multicolumn{2}{c}{\cellcolor{blue!8}\textbf{GaugeDPO}} \\
\cmidrule(lr){2-3}
\cmidrule(lr){4-5}
\textbf{Setting}
& \shortstack{Block\\acc.$\uparrow$}
& \shortstack{Worst\\$m\downarrow$}
& \shortstack{Block\\acc.$\uparrow$}
& \shortstack{Worst\\$m\downarrow$} \\
\midrule
Familiar
& 68.0 & .140
& \textbf{76.0} & \textbf{.098} \\

Unseen mag.
& 60.0 & .186
& \textbf{71.0} & \textbf{.135} \\

Unseen combos.
& 52.0 & .232
& \textbf{63.0} & \textbf{.169} \\

\midrule
\textbf{Overall}
& 60.0 & .186
& \textbf{70.0} & \textbf{.134} \\
\bottomrule
\end{tabular}%
}
\end{minipage}

\endgroup
\end{table*}
  \begin{wraptable}{r}{0.48\linewidth}
    \vspace{-20pt}
    \centering
    \footnotesize
    \setlength{\tabcolsep}{2pt}
    \renewcommand{\arraystretch}{1.0}

    \caption{\textbf{Tolerance-based cross-view audit.}
    Ablations set the indicated weights to zero and retain the others.
    Agreement and correctness use separate tolerances.
    Invalid groups remain in all denominators.
    Values are percentages.}

    \label{tab:crossview}

    \resizebox{\linewidth}{!}{%
    \begin{tabular}{lcccccc}
    \toprule
    & \multicolumn{3}{c}{\textbf{Agreement}}
    & \multicolumn{2}{c}{\textbf{Accuracy}}
    & \textbf{Invalid} \\
    \cmidrule(lr){2-4}
    \cmidrule(lr){5-6}
    Model
    & Total & Correct & Wrong
    & Group & Item & Invalid \\
    \midrule

    SFT Init
    & 60.0 & 40.0 & 20.0
    & 50.0 & 65.0 & 5.0 \\

    $\lambda_{\rm dir},\lambda_{\rm int}=0$
    & 68.0 & 50.5 & 17.5
    & 60.5 & 71.5 & 4.0 \\

    $\lambda_{\rm dir}=0$
    & 69.5 & 54.5 & 15.0
    & 62.5 & 73.5 & 3.5 \\

    $\lambda_{\rm int}=0$
    & 72.5 & 60.0 & 12.5
    & 67.5 & 77.0 & 3.5 \\

    \rowcolor{blue!8}
    GaugeVLM
    & \textbf{74.5} & \textbf{63.0} & \textbf{11.5}
    & \textbf{69.5} & \textbf{79.0} & \textbf{3.0} \\

    \bottomrule
    \end{tabular}%
    }

    \vspace{-12pt}
  \end{wraptable}

\paragraph{Geometric separation and cross-view correctness.}
On held-out scenes (Fig.~\ref{fig:margin_h}), only the measured-margin variant shows consistently increasing mean reference-relative gaps with geometric error and positive mean slack at all plotted levels. DPO yields near-uniform mean gaps, while constant and shuffled margins weaken this pattern. 
At the prediction level, Tab.~\ref{tab:crossview} evaluates tolerance-based cross-view agreement under the three-term objective. The pair-only model $(\lambda_{\rm dir},\lambda_{\rm int}=0)$ achieves $50.5\%$ accurate and $17.5\%$ wrong agreement. Adding the direct or intervention term improves these to $60.0/12.5\%$ and $54.5/15.0\%$, respectively. With both enabled, GaugeVLM reaches $63.0\%$ accurate and $11.5\%$ wrong agreement, improving by $12.5$ and $6.0$ percentage points over the pair-only baseline.

\subsection{Ablations and Controls}
\label{sec:exp-ablation}

\paragraph{Margin source and geometric assignment.}
In Tab.~\ref{tab:abla}, measured margins lead all five task metrics and Rank $\rho$. With direct and interaction supervision fixed, they outperform the zero-offset control by $12.5$ points on MSMU distance and $14.9$ on QSpatial, raising $\rho$ from $.119$ to $.279$. Constant, shuffled, and reward-model offsets also improve over this control but trail measured margins by up to $10.0$ and $12.9$ points on these tasks. Removing SFT degrades task accuracy and rank correlation, supporting spatial SFT before geometry-aware preference optimization.
\vspace{-4pt}

\paragraph{Data efficiency and sensitivity analysis.}
Tab.~\ref{tab:data_efficiency} shows that GaugeDPO outperforms DPO on all metrics at each pair-draw budget; its 25\%-budget scores already exceed the full-budget DPO baseline. Tab.~\ref{tab:hparams} varies $\lambda_{\mathrm{dir}}$ and $\lambda_{\mathrm{int}}$ individually while fixing the other at its default; setting either coefficient to zero ablates the corresponding loss term. The results show a broad intermediate plateau, with task accuracy more sensitive to the direct-supervision weight.
\vspace{-4pt}

\paragraph{Intervention supervision and generalization.}
Tabs.~\ref{tab:intervention-mechanism} and~\ref{tab:intervention-ood}
compare matched supervision objectives and held-out test settings.
Adding the interaction profile raises
BlockAcc from $60.0\%$ to $70.0\%$ and lowers mean worst $m$ from
$.186$ to $.134$, while improving profile fit over the single-gap
and symmetry controls. The BlockAcc gain remains $11.0$ points
on both unseen magnitudes and unseen magnitude--camera combinations.
App.~\ref{app:exp-results} provides additional transfer evaluations, sensitivity sweeps, and detailed camera outcomes and controls.
\vspace{-4pt}
\section{Conclusion}

We presented GaugeVLM, which learns spatial relations from measured geometric errors. GaugeDPO combines geometric preference offsets, direct cross-view supervision, and intervention profiles linking answer-odds changes to geometric magnitude. Across three backbone families, GaugeVLM improves over SFT on established spatial benchmarks. Matched ablations support geometric margin assignment and intervention supervision, with gains extending to unseen magnitudes and magnitude–camera combinations. These results demonstrate the value of geometric supervision for spatial accuracy and responses to object interventions.

\subsection*{AI Use Statement}
We used Claude-Fable-5 and GPT-6-Astra to assist with language editing, literature retrieval and discovery, and the generation of synthetic datasets. The synthetic data were produced through the explicit 3D scene engine described in this paper, with scene construction, geometric annotations, filtering, validation, and all experimental procedures reviewed and verified by the authors. The authors take full responsibility for the final manuscript, datasets, analyses, and the accuracy of all statements and claims.

\subsection*{Reproducibility Statement}
The supplementary material details the theoretical assumptions and proofs, data construction and filtering procedures, and training and evaluation protocols. It specifies the hyperparameters, canonical candidate-scoring rules, parsing conventions, and aggregation procedures for pair-, group-, and block-level metrics. Matched controls and repeated training runs assess the contributions and robustness of the proposed objectives.

\bibliography{references,manual_refs}
\bibliographystyle{iclr2027_conference}

\newpage

\appendix
\providecommand{\tofill}{\textcolor{orange!85!black}{\textit{pend.}}}
\providecommand{\hyp}[1]{\textcolor{orange!85!black}{#1}}
\renewcommand{\thesection}{\Alph{section}}

\section*{Appendix Overview}
This appendix develops the theoretical foundations of measured geometric supervision and provides details of data construction, training, evaluation, and supplementary experiments.

{\renewcommand{\baselinestretch}{0.88}\small
\hypersetup{hidelinks}
\etocsettocstyle{\noindent\textbf{Contents}\par\vspace{0.3em}\hrule\vspace{0.5em}}{\vspace{0.3em}\hrule\vspace{0.3em}}
\etocsettocdepth{subsection}
\etocsettagdepth{main}{none}
\etocsettagdepth{appendix}{subsection}
\tableofcontents
}

\etocdepthtag.toc{appendix}

\clearpage

\section{Derivations}
\label{app:derivations}

Throughout, $\sigma(t) = (1 + e^{-t})^{-1}$ denotes the logistic function, and expectations are over $(x, y_w, y_l) \sim \mathcal{D}$ unless stated otherwise.

\subsection{Relation Metrics and Consistency Certificates}
\label{app:metric}
\label{app:quantization}
\label{app:certificate}
\label{app:aligned-certificate}

The same geometric metric defines preference margins and quantifies disagreement across views. We establish its invariance and metric properties, then derive error certificates from the triangle inequality.

\paragraph{Representation and quantization.}
Clock direction is defined relative to the anchor's canonical front, with the dominant wall normal as a fallback. Both reference frames are independent of the camera. Questions identify the anchor and specify the reference direction through visual evidence or an explicit prompt convention. Metric-distance questions additionally require scale information in the input.

The clock--distance metric is defined on $\mathcal R=\mathcal C_{12}\times\mathbb R_+$, and response templates represent a subset $\mathcal R_Q$. The quantizer $Q$ specifies the clock bins and distance precision; on $\mathcal R_Q$, the parser exactly inverts the verbalizer. Rejected labels are retained only when they remain distinct from the truth after serialization.
Let $e_Q(r)=m(Qr,r)$ for relations in the same clock--distance space. By the triangle inequality,
\[
\left|m(Qr,Qr^\star)-m(r,r^\star)\right|\le e_Q(r)+e_Q(r^\star).
\]
Thus, margins computed from serialized labels approximate the corresponding geometric errors up to the quantization error of the two relations. This bound concerns the discrete clock representation and distance precision. Extending it to continuous azimuth requires a separate angular quantization term.

\paragraph{Invariant discrepancies and metric construction.}

\begin{proposition}[Invariant discrepancies]
\label{prop:coord-inv}
For nonnegative discrepancies $D_c,D_d$, the clock invariance in Eq.~\ref{eq:dist-inv} holds iff
$D_c(c,c^\star)=\phi(c-c^\star)$
for a unique $\phi:\mathcal C_{12}\to\mathbb R_{\ge0}$, and
the distance invariance in Eq.~\ref{eq:dist-inv} holds iff
$D_d(d,d^\star)=\psi(d/d^\star)$
for a unique $\psi:\mathbb R_{>0}\to\mathbb R_{\ge0}$.
\end{proposition}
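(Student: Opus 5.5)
Both equivalences follow the same pattern: restrict the invariance to one representative per orbit of the diagonal action, and check that the resulting one-variable function both determines $D$ and is forced. For the clock part, $\mathcal C_{12}$ acts on pairs by $(c,c^\star)\mapsto(c+k,c^\star+k)$. The invariant $c-c^\star$ labels orbits exactly: two pairs with the same difference are related by the shift $k=c'-c$.

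For the sufficiency direction ($\Leftarrow$), suppose $D_c(c,c^\star)=\phi(c-c^\star)$. Then
\[
D_c(c+k,c^\star+k)=\phi\bigl((c+k)-(c^\star+k)\bigr)=\phi(c-c^\star),
\]
since subtraction in $\mathbb Z/12\mathbb Z$ cancels the common shift. Nonnegativity of $D_c$ gives $\phi\ge0$ on the image, and the image is all of $\mathcal C_{12}$.

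For the necessity direction ($\Rightarrow$), assume shift invariance. Define $\phi(t):=D_c(t,0)$ for $t\in\mathcal C_{12}$. Applying invariance with $k=-c^\star$ gives $D_c(c,c^\star)=D_c(c-c^\star,0)=\phi(c-c^\star)$.

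Uniqueness holds because the map $(c,c^\star)\mapsto c-c^\star$ is surjective. Any $\phi'$ with the same representation satisfies $\phi'(t)=D_c(t,0)=\phi(t)$ for every $t$.

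The distance part repeats this argument for the multiplicative group $\mathbb R_{>0}$ acting by scaling, with the ratio $d/d^\star$ as the orbit invariant. Set $\psi(s):=D_d(s,1)$. Applying invariance with $\mu=1/d^\star$ gives $D_d(d,d^\star)=D_d(d/d^\star,1)=\psi(d/d^\star)$. Conversely, any $\psi$ of this form is scale invariant, because $(\mu d)/(\mu d^\star)=d/d^\star$. Uniqueness follows because the ratio map onto $\mathbb R_{>0}$ is surjective, since $(s,1)\mapsto s$.

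There is no real obstacle in this argument. The only points needing care are that the representatives $(t,0)$ and $(s,1)$ lie in the domains: $0\in\mathcal C_{12}$, and $1\in\mathbb R_{>0}$ is allowed because $\mu>0$ is arbitrary. Uniqueness is meant as uniqueness of $\phi$ and $\psi$ as functions, which surjectivity of the invariant maps guarantees.
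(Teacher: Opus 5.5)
Your proposal is correct and follows essentially the same route as the paper: sufficiency by cancellation of the common shift or scale, and necessity by applying invariance with $k=-c^\star$ and $\mu=1/d^\star$, which defines $\phi(t)=D_c(t,0)$ and $\psi(s)=D_d(s,1)$. You also argue uniqueness explicitly, via surjectivity of the difference and ratio maps, which fills in a step the paper's proof leaves implicit.
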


\begin{proof}[Proof of Prop.~\ref{prop:coord-inv}]
Sufficiency is immediate: $(c + k) - (c^\star + k) = c - c^\star$ in $\mathcal{C}_{12}$ and $(\lambda d)/(\lambda d^\star) = d / d^\star$, so any $\phi, \psi$ of those arguments give invariant discrepancies. For necessity, take $k = -c^\star$ in $D_c(c + k, c^\star + k) = D_c(c, c^\star)$ to get $D_c(c, c^\star) = D_c\big((c - c^\star) \bmod 12,\, 0\big)$, so $\phi(u) := D_c(u, 0)$ represents it through the cyclic difference alone; take $\lambda = 1 / d^\star$ in $D_d(\lambda d, \lambda d^\star) = D_d(d, d^\star)$ to get $D_d(d, d^\star) = D_d(d/d^\star, 1)$, so $\psi(t) := D_d(t, 1)$ represents it through the ratio alone. Since $d/d^\star = \exp(\log d - \log d^\star)$, a function of the ratio is equivalently a function of the log-ratio $s = \log d - \log d^\star$.
\end{proof}

Prop.~\ref{prop:coord-inv} determines the ratio form of the distance discrepancy; we choose the absolute log-ratio for symmetry and the triangle inequality.

\begin{proposition}[Log distance and relative error]
\label{prop:rel}
On $\mathbb{R}_+$, $L(d, d') = |\log d - \log d'|$ is a metric. The (uncapped) relative error $R(d, d^\ast) = |d - d^\ast| / d^\ast$ satisfies, with $s = \log d - \log d^\ast$,
\begin{equation}
|s|\, e^{-|s|} \;\leq\; R(d, d^\ast) \;=\; |e^{s} - 1| \;\leq\; |s|\, e^{|s|},
\label{eq:app-mvt}
\end{equation}
so $R = |s| + O(s^2)$ agrees with the log metric to first order.
\end{proposition}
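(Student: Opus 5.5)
The statement has two parts: the metric property of $L$, and the two-sided bound (\ref{eq:app-mvt}). For the metric property I will pull the absolute difference on $\mathbb R$ back along the bijection $\log:\mathbb R_+\to\mathbb R$. Nonnegativity and symmetry are inherited directly. Identity of indiscernibles follows from injectivity of $\log$: $L(d,d')=0$ iff $\log d=\log d'$ iff $d=d'$. The triangle inequality is the scalar one, $|\log d-\log d''|\le|\log d-\log d'|+|\log d'-\log d''|$. This also explains why the capped version $\drel=\tfrac1\kappa\min(L,\kappa)$ used in Eq.~\ref{eq:ring} remains a metric: truncating a metric at a constant preserves the triangle inequality. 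That remark is not needed for the statement itself.

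For the identity, I set $s=\log d-\log d^\ast$, so $d/d^\ast=e^s$. Then $R(d,d^\ast)=|d/d^\ast-1|=|e^s-1|$ is immediate.

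For the two-sided bound I will use the mean value theorem on $t\mapsto e^t$ between $0$ and $s$. It gives $e^s-1=s\,e^{\theta}$ for some $\theta$ strictly between $0$ and $s$ (any $\theta$ when $s=0$), hence $|e^s-1|=|s|e^{\theta}$. Since $|\theta|\le|s|$, we have $e^{-|s|}\le e^{\theta}\le e^{|s|}$, and both inequalities follow at once. The case $s=0$ is trivial since all three quantities vanish.

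For the first-order claim, the Taylor expansion $e^s-1=s+O(s^2)$ gives $R=|s+O(s^2)|=|s|+O(s^2)$ near $0$. Alternatively, $e^{\pm|s|}=1\pm|s|+O(s^2)$ squeezes $R$ between $|s|\mp s^2+O(s^2)$.

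There is no serious obstacle here. The only care needed is the direction of the bounds when $s<0$, which is where the absolute values matter. I will handle it uniformly by bounding $e^{\theta}$ through $|\theta|\le|s|$ rather than splitting into the cases $s\ge0$ and $s<0$. If a sharper lower bound were wanted, one could note $e^\theta\ge\min(1,e^s)$, but the stated symmetric form suffices.
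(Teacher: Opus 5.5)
Your proposal is correct and follows essentially the same route as the paper: the metric property via pulling back the absolute-value metric along the bijection $\log$, and the two-sided bound via the mean value theorem applied to $e^s-1$. The only difference is that the paper splits into $s>0$ and $s<0$, which yields the sharper one-sided bounds $s\le e^s-1$ and $1-e^s\le|s|$ that you mention at the end. Your uniform bound $e^{-|s|}\le e^{\theta}\le e^{|s|}$ from $|\theta|\le|s|$ gives exactly the stated symmetric inequality.
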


\begin{proof}
$L$ is the pullback of the absolute-value metric under the bijection $\log\colon \mathbb{R}_+ \to \mathbb{R}$, hence a metric. For Eq.~\ref{eq:app-mvt}, apply the mean value theorem to $e^{s} - 1$. For $s > 0$, $e^{s} - 1 = s\, e^{\xi}$ with $e^{\xi} \in (1, e^{s})$ gives $s \leq e^{s} - 1 \leq s\, e^{s}$. For $s < 0$, $1 - e^{s} = |s|\, e^{\xi}$ with $e^{\xi} \in (e^{s}, 1)$ gives $|s|\, e^{-|s|} \leq 1 - e^{s} \leq |s|$.
\end{proof}

\begin{lemma}[Capping preserves the triangle inequality]
\label{lem:cap}
If $d$ is a metric on a set $\mathcal{X}$ and $\kappa > 0$, then $d_\kappa = \min(d, \kappa)$ is a metric on $\mathcal{X}$.
\end{lemma}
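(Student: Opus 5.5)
We check the metric axioms for $d_\kappa=\min(d,\kappa)$ one at a time, taking them from $d$ wherever possible. Only the triangle inequality needs real work.

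\emph{Nonnegativity and symmetry.} These carry over directly: $\min(d(x,y),\kappa)\ge0$, and $d_\kappa(x,y)=d_\kappa(y,x)$ because $d$ is symmetric.

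\emph{Identity of indiscernibles.} Since $\kappa>0$, we have $d_\kappa(x,y)=0$ iff $\min(d(x,y),\kappa)=0$. This holds iff $d(x,y)=0$, that is, iff $x=y$.

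\emph{Triangle inequality.} Fix $x,y,z\in\mathcal X$; we want
\[
d_\kappa(x,z)\le d_\kappa(x,y)+d_\kappa(y,z).
\]
We split according to whether either term on the right is capped.
\begin{itemize}
\item If $d(x,y)\ge\kappa$ or $d(y,z)\ge\kappa$, then the right-hand side is at least $\kappa$, since both terms are nonnegative and one of them equals $\kappa$. The left-hand side is at most $\kappa$ by definition, so the inequality holds.
\item Otherwise both terms are uncapped, so $d_\kappa(x,y)=d(x,y)$ and $d_\kappa(y,z)=d(y,z)$. Then
\[
d_\kappa(x,z)\le d(x,z)\le d(x,y)+d(y,z)=d_\kappa(x,y)+d_\kappa(y,z),
\]
using $\min(\cdot,\kappa)\le\cdot$ and the triangle inequality for $d$.
\end{itemize}
An equivalent route is to note that $t\mapsto\min(t,\kappa)$ is nondecreasing and subadditive on $\mathbb R_{\ge0}$, with value $0$ only at $0$. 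Any such function composed with a metric is again a metric. We prefer the direct case split because it is self-contained.

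The only point needing care is the first case. There the bound comes from the cap on the left-hand side, not from the triangle inequality for $d$.

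\emph{Use in the paper.} Applied with $d=L$ from Prop.~\ref{prop:rel}, the lemma shows that $\kappa\,\drel$ is a metric, hence so is $\drel$ after scaling by $1/\kappa$. The ring distance $\dring$ is a scaled shortest-arc metric. A positive combination of metrics on the two factors of a product space is a metric, so $m$ and each $m_J$ are metrics.
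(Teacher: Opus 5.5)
Your proof is correct and follows the paper's argument: the same case split on whether $d(x,y)\ge\kappa$ or $d(y,z)\ge\kappa$, with the cap bounding the left side in the first case and the triangle inequality for $d$ doing the work in the second. You also invoke $\kappa>0$ explicitly for the identity of indiscernibles, which makes precise a step the paper only says follows ``directly'' from $d$.
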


\begin{proof}
Symmetry and positive definiteness follow directly from those of $d$. For the triangle inequality, first consider $d(a,b)\ge\kappa$ or $d(b,c)\ge\kappa$. Then
\[
d_\kappa(a,b)+d_\kappa(b,c)\ge\kappa\ge d_\kappa(a,c).
\]
Otherwise both terms are below the cap, and
\[
d_\kappa(a,b)+d_\kappa(b,c)=d(a,b)+d(b,c)\ge d(a,c)\ge d_\kappa(a,c).
\]
\end{proof}

\begin{proposition}[The margin is a metric on $\mathcal{R}$]
\label{prop:margin-metric}
Let $\alpha\in[0,1]$, $\alpha_c=\alpha$, $\alpha_d=1-\alpha$, and $\kappa>0$. The function $m$ of Eq.~\ref{eq:margin} is a pseudometric on $\mathcal{R} = \mathcal{C}_{12} \times \mathbb{R}_+$, and a metric whenever $\alpha_c > 0$ and $\alpha_d > 0$. It is symmetric, satisfies the triangle inequality, is invariant under Eq.~\ref{eq:dist-inv}, and takes values in $[0, 1]$ when $\alpha_c + \alpha_d = 1$.
\end{proposition}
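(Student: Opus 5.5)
The plan is to assemble $m$ as a nonnegative weighted sum of two component (pseudo)metrics, one for each coordinate, and use the fact that sums of pseudometrics with nonnegative weights are pseudometrics.

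\emph{Clock component.} We first show that $\dring$ is a metric on $\mathcal C_{12}$. Write $|u|_{12}=\min(\bar u,12-\bar u)$, where $\bar u\in\{0,\dots,11\}$ is the representative of $u$. This is the graph distance on the $12$-cycle, i.e.\ the quotient distance $\min_{n\in\mathbb Z}|\tilde c-\tilde c^\star+12n|$ between integer lifts.

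\begin{itemize}
\item Symmetry follows from $|u|_{12}=|-u|_{12}$.
\item Definiteness: $|u|_{12}=0$ iff $u=0$.
\item Triangle inequality: choose lifts realizing the minima for $(a,b)$ and $(b,c)$. Their sum bounds some lift difference for $(a,c)$, which is at least the minimum.
\end{itemize}

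Scaling by $\tfrac16$ preserves all of this, and since $|\cdot|_{12}\le6$ we get $\dring\in[0,1]$.

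\emph{Distance component.} By Prop.~\ref{prop:rel}, $L(d,d')=|\log d-\log d'|$ is a metric on $\mathbb R_+$. By Lemma~\ref{lem:cap}, $\min(L,\kappa)$ is a metric, so $\drel=\kappa^{-1}\min(L,\kappa)$ is a metric with values in $[0,1]$.

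\emph{Combination.} Set $m=\alpha_c\dring\circ\pi_c+\alpha_d\drel\circ\pi_d$. Each summand is a pseudometric on $\mathcal R$ (a pullback along a projection), and the weights are nonnegative. Hence $m$ is nonnegative, symmetric, satisfies the triangle inequality by adding the componentwise inequalities, and has $m(r,r)=0$, so $m$ is a pseudometric.

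When $\alpha_c,\alpha_d>0$, suppose $m(r,r')=0$. Both nonnegative summands must then vanish, which forces $c=c'$ and $d=d'$; so $m$ is a metric. If one weight is zero, the corresponding coordinate may differ while $m=0$, which is exactly why only a pseudometric is claimed in that case.

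\emph{Invariance and range.} Invariance under Eq.~\ref{eq:dist-inv} is immediate. $\dring$ depends only on $c-c^\star$, and $\drel$ depends only on $d/d^\star$, matching the forms in Prop.~\ref{prop:coord-inv}. For the range, $\alpha_c+\alpha_d=1$ together with both components lying in $[0,1]$ gives $m\in[0,1]$ as a convex combination.

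The only step needing genuine care is the triangle inequality for the cyclic distance $|\cdot|_{12}$. Everything else reduces to earlier results or to linearity.
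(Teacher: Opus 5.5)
Your proposal is correct and follows essentially the same route as the paper's proof: the same integer-lift argument for the cyclic triangle inequality, Prop.~\ref{prop:rel} with Lemma~\ref{lem:cap} for the capped log-distance, coordinatewise inheritance for the nonnegative combination, point separation exactly when both weights are positive, and the convex-combination bound for the range. The paper additionally remarks that, with both weights positive, $m=1$ only for opposite directions with distance ratio at least $e^\kappa$, which the statement does not require.
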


\begin{proof}
On $\mathcal{C}_{12}$, $\delta(a,b) = \min_{k}|a - b + 12k|$ is a metric, the triangle inequality following from $\delta(a,c) \leq |a - c + 12(k_1{+}k_2)|$ for $k_1, k_2$ attaining $\delta(a,b)$ and $\delta(b,c)$. Its maximum is $6$, so $\dring = \delta/6$ takes values in $[0,1]$. On $\mathbb{R}_+$, $L$ is a metric by Prop.~\ref{prop:rel}, so $\min(L, \kappa)$ is one by Lemma~\ref{lem:cap} and $\drel = \min(L, \kappa)/\kappa$ is its positive rescaling. A nonnegative combination of metrics on the two factors is a pseudometric on the product, symmetry and the triangle inequality being inherited coordinatewise. It separates points exactly when both weights are positive, $m(r, r') = 0$ then forcing $c = c'$ and $\log d = \log d'$. Invariance holds because $\dring$ depends only on $(c - c^\star) \bmod 12$ and $\drel$ only on $d / d^\star$. Finally $\drel$ reaches $1$ once $|\log(d/d^\star)| \geq \kappa$, so $\alpha_c + \alpha_d = 1$ gives $m \in [0,1]$, with $m=1$, for two positive normalized weights, only when the direction is opposite and the maximum of the distance ratio and its reciprocal is at least $e^\kappa$.
\end{proof}

The metric serves two complementary roles: $m(r^-,r^\star)$ assigns a preference margin from a rejected answer's geometric error, while $m(a_u,a_v)$ measures disagreement between predictions from two views. The latter supports the error certificate below through symmetry and the triangle inequality. For $\kappa=\log5$, a distance scaled by $\lambda$ contributes $\alpha_d\min(|\log\lambda|,\kappa)/\kappa$. The scale ranges $[1.3,1.6]$, $[1.8,2.5]$, and $[3,5]$ therefore correspond to $0.16$--$0.30$, $0.37$--$0.57$, and $0.68$--$1.00$ of the maximum distance contribution, respectively. The tolerances $\tau_c$ and $\tau_d$ in Eq.~\ref{eq:constancy} use these normalized direction and log-distance units.

\paragraph{Disagreement as an error certificate.}
For views sharing one relation, the triangle inequality converts prediction spread into a lower bound on the largest error. The same argument extends to different truths when a known isometry aligns them.

\begin{proof}[Proof of Prop.~\ref{prop:certificate}]
Write $a^\ast$ for the true answer, so that $m_u = m(a_u, a^\ast)$ and $m_v = m(a_v, a^\ast)$. By Prop.~\ref{prop:margin-metric}, $m$ is symmetric and satisfies the triangle inequality on $\mathcal{R}$, hence
\begin{equation*}
m(a_u, a_v) \leq m(a_u, a^\ast) + m(a^\ast, a_v) = m_u + m_v \leq 2 \max(m_u, m_v) \leq 2 \max_{v \in V_{\mathrm{ok}}} m_v.
\end{equation*}
Taking the maximum and dividing by two proves Eq.~\ref{eq:certificate}. Sharpness follows by taking predictions one clock step on either side of the truth with equal distances when $\alpha>0$; when $\alpha=0$, use two distances symmetric about the truth in log coordinates below the cap. In either case the spread is exactly twice the largest error.
\end{proof}

\begin{remark}[Certificate under truncation]
The distance component saturates when $|\log(d/d^\star)|\ge\kappa$. Equation~\ref{eq:certificate} remains a valid lower bound in the capped metric, with reduced resolution for larger distance errors.
\end{remark}

\begin{remark}[Direction of the implication]
\label{rem:cert-direction}
Equation~\ref{eq:certificate} turns prediction disagreement into a lower bound on error. Because identical predictions can share the same error, we complement this certificate with direct supervision in Eq.~\ref{eq:direct-group} and evaluate correctness together with agreement.
\end{remark}

\begin{corollary}[Sharp distance certificates]
\label{cor:cert-dist}
Let $d_u,d_v$ be positive predictions with a shared positive truth $d^\star$. Define $\rho=\max(d_u/d_v,d_v/d_u)$. Then
\[
\max_{w\in\{u,v\}}|\log(d_w/d^\star)|\ge\tfrac12\log\rho,
\qquad
\max_{w\in\{u,v\}}\frac{|d_w-d^\star|}{d^\star}\ge\frac{\rho-1}{\rho+1}.
\]
Both constants are sharp, with equality attained at the geometric and arithmetic means of the predictions, respectively.
\end{corollary}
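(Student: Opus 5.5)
The plan is to reduce both bounds to the triangle inequality on a line and then exhibit the equality cases.

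\emph{Log bound.} Set $s_w=\log(d_w/d^\star)$ for $w\in\{u,v\}$. Then $\log\rho=|s_u-s_v|$. The triangle inequality for the log metric $L$ of Prop.~\ref{prop:rel} gives $|s_u-s_v|\le|s_u|+|s_v|\le 2\max_w|s_w|$, which is the first inequality. This is the uncapped version of the argument used for Prop.~\ref{prop:certificate}. Equality needs $|s_u|=|s_v|$ with opposite signs, i.e.\ $s_u=-s_v$. That means $d^\star=\sqrt{d_ud_v}$, the geometric mean.

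\emph{Relative-error bound.} Assume without loss of generality $d_u\le d_v$, so $d_v=\rho d_u$. Set $t=d^\star$ and $f(t)=\max(|d_u-t|,|d_v-t|)/t$. Minimize $f$ over $t>0$ by cases.
\begin{itemize}
\item For $t\le d_u$: $f(t)=(d_v-t)/t$, which is decreasing in $t$.
\item For $t\ge d_v$: $f(t)=(t-d_u)/t$, which is increasing in $t$.
\item For $d_u\le t\le d_v$: $f(t)=\max(t-d_u,\,d_v-t)/t$. The first branch $1-d_u/t$ is increasing and the second $d_v/t-1$ is decreasing.
\end{itemize}
So the minimum lies in the middle interval where the two branches cross, at $t=(d_u+d_v)/2$, the arithmetic mean. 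There $f=\frac{(d_v-d_u)/2}{(d_u+d_v)/2}=\frac{\rho-1}{\rho+1}$. Hence $f(t)\ge(\rho-1)/(\rho+1)$ for every $t>0$, with equality exactly at the arithmetic mean. This gives the second inequality and its sharpness.

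The only delicate step is the monotonicity bookkeeping in the relative-error case. $R$ is not a metric in $d^\star$ (it is asymmetric), so we cannot simply halve a triangle inequality. We also cannot rely on Prop.~\ref{prop:rel} alone: the first-order comparison $R\approx|s|$ would give the weaker bound $\tfrac12\log\rho\,e^{-\frac12\log\rho}$, not the stated constant. That is why the direct one-variable minimization is used.

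Both constants are sharp because each is attained at the exhibited mean. Note that the two optimal truths differ (geometric versus arithmetic mean), which is consistent with the statement.
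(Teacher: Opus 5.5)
Your proof is correct. The log bound matches the paper: both use the triangle inequality $\log\rho\le|\log(d_u/d^\star)|+|\log(d_v/d^\star)|$, with equality at the geometric mean. For the relative-error bound, however, you take a different route.

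You treat the truth $t=d^\star$ as a free variable and minimize $f(t)=\max_w|d_w-t|/t$ by piecewise monotonicity over $(0,d_u]$, $[d_u,d_v]$ and $[d_v,\infty)$. The paper instead fixes $d^\star$ and lets $r$ be the larger relative error. The case $r\ge1$ is trivial because $(\rho-1)/(\rho+1)<1$. For $r<1$ it sandwiches the predictions, $\min(d_u,d_v)\ge(1-r)d^\star$ and $\max(d_u,d_v)\le(1+r)d^\star$, so $\rho\le(1+r)/(1-r)$, which rearranges to $r\ge(\rho-1)/(\rho+1)$. Sharpness is then a one-line evaluation at the arithmetic mean.

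The paper's inversion is shorter and needs no case split on where the truth lies relative to the predictions, but it only exhibits an equality point. Your minimization costs some bookkeeping but gives more. Since $f$ is strictly decreasing up to $(d_u+d_v)/2$ and strictly increasing after it, the arithmetic mean is the \emph{unique} truth attaining the bound. You therefore characterize the equality case rather than merely verify it, as you also do for the geometric mean in the log bound.
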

\begin{proof}
The first inequality follows from $\log\rho\le |\log(d_u/d^\star)|+|\log(d_v/d^\star)|$.
For the second, write $a=\min(d_u,d_v)$ and $b=\max(d_u,d_v)$. If both relative errors are at most $r<1$, then $b\le(1+r)d^\star$ and $a\ge(1-r)d^\star$, whence $\rho\le(1+r)/(1-r)$ and $r\ge(\rho-1)/(\rho+1)$. The bound is automatic for $r\ge1$. At $d^\star=(a+b)/2$, both errors equal $(b-a)/(b+a)$. The geometric mean similarly attains the log bound.
\end{proof}
For relation-changing interventions, the same argument applies when a known isometry aligns the two ground-truth relations, as formalized below.

\begin{proposition}[Transformation-aligned certificate]
Let $(\mathcal{A},d_{\mathcal A})$ be a metric space, and let $T:\mathcal{A}\to\mathcal{A}$ be a known isometry satisfying $a_B^*=T a_A^*$. For predictions $\hat a_A,\hat a_B\in\mathcal{A}$, set $e_A=d_{\mathcal A}(\hat a_A,a_A^*)$ and $e_B=d_{\mathcal A}(\hat a_B,a_B^*)$. Then
\begin{equation}
\max(e_A,e_B)\geq\tfrac12 d_{\mathcal A}(T\hat a_A,\hat a_B).
\label{eq:aligned-certificate}
\end{equation}
\end{proposition}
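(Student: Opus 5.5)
The plan is to reduce the claim to the triangle inequality in $(\mathcal A,d_{\mathcal A})$, exactly as in the proof of Prop.~\ref{prop:certificate}. The only new ingredient is the isometry $T$. It lets us transport the error on the $A$ side into the $B$ frame without changing its size. Concretely, since $T$ preserves distances,
\[
d_{\mathcal A}(T\hat a_A,a_B^*)=d_{\mathcal A}(T\hat a_A,Ta_A^*)=d_{\mathcal A}(\hat a_A,a_A^*)=e_A,
\]
using the hypothesis $a_B^*=Ta_A^*$ in the first equality.

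The steps, in order, are as follows.

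\begin{enumerate}
\item Apply the triangle inequality through the shared point $a_B^*$:
\[
d_{\mathcal A}(T\hat a_A,\hat a_B)\le d_{\mathcal A}(T\hat a_A,a_B^*)+d_{\mathcal A}(a_B^*,\hat a_B).
\]
\item Use symmetry of $d_{\mathcal A}$ to identify the second term with $e_B$, and the display above to identify the first with $e_A$.
\item Bound the sum $e_A+e_B$ by $2\max(e_A,e_B)$ and divide by two, which gives Eq.~\ref{eq:aligned-certificate}.
\end{enumerate}

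Note that only the isometry property is used, not surjectivity or invertibility of $T$. Taking $T=\mathrm{id}$ recovers the two-view case of Prop.~\ref{prop:certificate}.

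There is no real obstacle. The one point needing care is the choice of frame: the comparison must be made in the $B$ frame, pushing $\hat a_A$ forward by $T$. The alternative of pulling $\hat a_B$ back would require $T^{-1}$, which the statement does not assume exists.

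If desired, I would add that the factor $\tfrac12$ is sharp, by transporting the sharpness example of Prop.~\ref{prop:certificate} through $T$. Take $\hat a_B$ and $T\hat a_A$ on either side of $a_B^*$ at equal distance, for instance one clock step each way when $\mathcal A=\mathcal R$ with the metric $m$ and $T$ a clock rotation combined with a distance rescaling. Both maps are isometries of $m$ by the invariance in Prop.~\ref{prop:margin-metric}. For the sharpness example one should check that the chosen $\hat a_A$ lies in $\mathcal A$, which is immediate when $T$ is a bijection such as a clock shift composed with a positive scaling.
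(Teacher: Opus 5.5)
Your proof is correct and is essentially the paper's argument: the triangle inequality through $a_B^*=Ta_A^*$, the isometry property to identify $d_{\mathcal A}(T\hat a_A,Ta_A^*)$ with $e_A$, and then $e_A+e_B\le 2\max(e_A,e_B)$. Your observation that only distance preservation is used, not invertibility of $T$, is also accurate.
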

\begin{proof}
By the triangle inequality and the isometry property,
\[
d_{\mathcal A}(T\hat a_A,\hat a_B)
\leq d_{\mathcal A}(T\hat a_A,T a_A^*)+d_{\mathcal A}(a_B^*,\hat a_B)
=e_A+e_B\leq2\max(e_A,e_B).
\]
\end{proof}
For a fixed allocentric relation across cameras, $T$ is the identity. For binary left/right pairs whose ground truth reverses, $T$ swaps the labels. Common cyclic shifts and positive distance rescalings are also isometries of the clock--distance metric. Applying the certificate to an object intervention requires a known transformation of this kind, established from the scene geometry. We evaluate intervention correctness with PairAcc., which requires both answers to match their respective truths.

\subsection{Measured Preference Objective and Mixed Supervision}
\label{app:graded}
\label{app:objective}
\label{app:gradient}
\label{app:mixed-supervision}

The measured preference term combines a graded comparison model with reference-relative policy scores. We derive its response-side likelihood and gradient, then specify how measured and ordinal-only comparisons enter the mixed objective.

\paragraph{Graded preferences as a Gumbel comparison.}

Give each response a Gumbel-perturbed reward $\tilde{r}(x, y) = r(x, y) + \epsilon_y$, with $\epsilon_{y_w}, \epsilon_{y_l}$ i.i.d.\ standard Gumbel. Their difference is standard logistic, $\Pr[\epsilon_l - \epsilon_w \leq t] = \sigma(t)$. Conditioning on $\epsilon_w = u$ gives $\int f(u) F(u + t)\, du$, which $z = e^{-u}$ turns into $\int_0^\infty \exp(-z(1 + e^{-t}))\, dz = \sigma(t)$.

\begin{lemma}[Probability of clearing a gap]
\label{lem:graded}
For any $M \in \mathbb{R}$, $\ \Pr\big[\tilde{r}(x, y_w) > \tilde{r}(x, y_l) + M\big] = \sigma\big(r(x, y_w) - r(x, y_l) - M\big)$. At $M = 0$ this is the Bradley-Terry model.
\end{lemma}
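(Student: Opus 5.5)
The plan is to reduce the statement to the distribution of the noise difference. Write $\tilde r(x,y_w)-\tilde r(x,y_l)-M = r(x,y_w)-r(x,y_l)-M+(\epsilon_{y_w}-\epsilon_{y_l})$. The event in question is then
\[
\epsilon_{y_l}-\epsilon_{y_w}<\Delta,
\qquad
\Delta:=r(x,y_w)-r(x,y_l)-M .
\]
The probability therefore equals the CDF of $\epsilon_l-\epsilon_w$ evaluated at $\Delta$.

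The key step is the fact, recorded just before the lemma, that the difference of two i.i.d.\ standard Gumbel variables is standard logistic: $\Pr[\epsilon_l-\epsilon_w\le t]=\sigma(t)$. I would verify it explicitly. Condition on $\epsilon_w=u$ with density $f(u)=e^{-u}\exp(-e^{-u})$ and CDF $F(u)=\exp(-e^{-u})$. This gives
\[
\Pr[\epsilon_l-\epsilon_w\le t]=\int_{\mathbb R} f(u)\,F(u+t)\,du .
\]
Substituting $z=e^{-u}$, so that $dz=-e^{-u}du$, turns the integrand into $\exp(-z)\exp(-ze^{-t})$ over $z\in(0,\infty)$. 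The integral is then
\[
\int_0^\infty \exp\bigl(-z(1+e^{-t})\bigr)\,dz=\frac{1}{1+e^{-t}}=\sigma(t).
\]
The Gumbel distribution is continuous, so strict versus non-strict inequality does not change the probability. Setting $t=\Delta$ yields $\sigma(r(x,y_w)-r(x,y_l)-M)$.

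For $M=0$ the formula reads $\Pr[\tilde r_w>\tilde r_l]=\sigma(r_w-r_l)$, which is the Bradley--Terry preference probability. The argument is otherwise routine. The only care needed is the change of variables and checking that $M$ enters purely as a shift of $t$, so the result holds for every real $M$, including negative values.
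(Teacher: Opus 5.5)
Your proof is correct and matches the paper's argument: you rewrite the event as $\epsilon_{y_l}-\epsilon_{y_w}<r(x,y_w)-r(x,y_l)-M$ and apply the Gumbel-difference-is-logistic identity, which you verify by conditioning on $\epsilon_{y_w}=u$ and substituting $z=e^{-u}$. Your remark that continuity makes the strict/non-strict distinction irrelevant fills in a step the paper leaves implicit, since it states the CDF with $\le$ but applies it to a strict inequality.
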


\begin{proof}
$\Pr[\tilde{r}(x,y_w) > \tilde{r}(x,y_l) + M] = \Pr[\epsilon_l - \epsilon_w < r(x,y_w) - r(x,y_l) - M]$, and the logistic law applies at $t = r(x,y_w) - r(x,y_l) - M$.
\end{proof}

Setting $M = \beta\, m(y_l)$, with $\beta$ converting the unit-free margin into reward units, gives the chosen graded-comparison model of Sec.~\ref{sec:objective}, whose negative log-likelihood gives the measured response-side branch of Eq.~\ref{eq:gauge}; \citet{aminiDirectPreferenceOptimization2024} give the same interpretation of an offset preference model. Using $M=\beta m$ expresses the margin in reward units and avoids an additional offset-scale hyperparameter once $\alpha$ and $\kappa$ are fixed.

\paragraph{From regularized rewards to policy scores.}
For a reward $r$ and positive reference $\pi_{\mathrm{ref}}$, assume $0<Z(x)<\infty$ and consider
\begin{equation}
\max_{\pi} \; \mathbb{E}_{y \sim \pi(\cdot \mid x)}\big[r(x, y)\big] - \beta\, \mathrm{KL}\big(\pi(\cdot \mid x) \,\|\, \pi_{\mathrm{ref}}(\cdot \mid x)\big).
\label{eq:app-klrl}
\end{equation}
Define $\pi_r(y \mid x) = \frac{1}{Z(x)} \pi_{\mathrm{ref}}(y \mid x) \exp\big(r(x, y) / \beta\big)$ with $Z(x) = \sum_y \pi_{\mathrm{ref}}(y \mid x) \exp(r(x, y)/\beta)$. Eq.~\ref{eq:app-klrl} rewrites as
\begin{equation*}
\begin{aligned}
\mathbb{E}_{y \sim \pi}\Big[r(x,y) - \beta \log \tfrac{\pi(y \mid x)}{\pi_{\mathrm{ref}}(y \mid x)}\Big]
&= -\beta\, \mathbb{E}_{y \sim \pi}\Big[\log \tfrac{\pi(y \mid x)}{\pi_r(y \mid x)}\Big] + \beta \log Z(x)\\
&= -\beta\, \mathrm{KL}\big(\pi \,\|\, \pi_r\big) + \beta \log Z(x),
\end{aligned}
\end{equation*}
which is maximized at $\pi = \pi_r$. Inverting the definition of $\pi_r$,
\begin{equation}
r(x, y) = \beta \log \frac{\pi_r(y \mid x)}{\pi_{\mathrm{ref}}(y \mid x)} + \beta \log Z(x).
\label{eq:app-invert}
\end{equation}

For two responses to the same prompt, the $\beta \log Z(x)$ terms cancel:
\begin{equation}
r(x, y_w) - r(x, y_l) = \beta \Big[\log \tfrac{\pi_r(y_w \mid x)}{\pi_{\mathrm{ref}}(y_w \mid x)} - \log \tfrac{\pi_r(y_l \mid x)}{\pi_{\mathrm{ref}}(y_l \mid x)}\Big] = \beta\, h(x, y_w, y_l).
\label{eq:app-diff}
\end{equation}

Substituting Eq.~\ref{eq:app-diff} into Lemma~\ref{lem:graded} with $M = \beta m(y_l)$, and parameterizing the optimal policy as $\pi_\theta$,
\begin{equation*}
p_m(y_w \succ y_l \mid x) = \sigma\big(\beta\, h_\theta(x, y_w, y_l) - \beta\, m(y_l)\big),
\end{equation*}
whose negative log-likelihood over measured response pairs is the corresponding branch of Eq.~\ref{eq:gauge}:
\begin{equation*}
\mathcal{L}_{\mathrm{mag}} = -\,\mathbb{E}\Big[\log \sigma\Big(\beta\big(h_\theta(x, y_w, y_l) - m(y_l)\big)\Big)\Big],
\end{equation*}

\begin{corollary}[Degenerate margins]
\label{cor:degenerate}
(i) If $m \equiv 0$, then $\mathcal{L}_{\mathrm{mag}} = \mathcal{L}_{\mathrm{DPO}}$ identically. (ii) If $m \equiv c$ for a constant $c > 0$, then $\mathcal{L}_{\mathrm{mag}}$ is the DPO loss with a fixed target margin $\gamma = \beta c$ inside the Bradley-Terry model, a fixed-offset comparison. SimPO~\citep{mengSimPOSimplePreference2024} also uses a constant offset; the present loss retains the reference model and summed sequence scores.
\end{corollary}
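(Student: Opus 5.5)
Corollary~\ref{cor:degenerate} is a direct substitution into $\mathcal L_{\rm mag}$. Both parts follow by evaluating the offset term $\beta\,m(y_l)$ pointwise and comparing the resulting integrand with the DPO form of Eq.~\ref{eq:margin-dpo}. No optimization or limiting argument is needed. Throughout I use that $h_\theta$ is defined as the reference-relative log-ratio contrast, so $\beta h_\theta(x,y_w,y_l)=\rho_\theta(x,y_w)-\rho_\theta(x,y_l)$.

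For (i), I set $m(y_l)=0$ for every triple in the support of $\mathcal D$. The argument of $\sigma$ then becomes $\beta h_\theta(x,y_w,y_l)=\rho_\theta(x,y_w)-\rho_\theta(x,y_l)$. This is Eq.~\ref{eq:margin-dpo} with $\gamma=0$, which the paper identifies as DPO. Since the integrands agree pointwise, the expectations agree for every $\theta$, so the losses coincide identically as functions of $\theta$, not merely at optima. Their gradients coincide as well.

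For (ii), I set $m\equiv c>0$. The integrand becomes $-\log\sigma(\rho_\theta(x,y_w)-\rho_\theta(x,y_l)-\beta c)$, which is Eq.~\ref{eq:margin-dpo} with the fixed offset $\gamma=\beta c$. To justify the phrase ``inside the Bradley--Terry model,'' I will invoke Lemma~\ref{lem:graded} with the constant $M=\beta c$. That lemma gives $\Pr[\tilde r(x,y_w)>\tilde r(x,y_l)+\beta c]=\sigma(r(x,y_w)-r(x,y_l)-\beta c)$. Substituting the reward difference from Eq.~\ref{eq:app-diff} recovers the same loss as the negative log-likelihood of a fixed-offset comparison.

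The remark on SimPO is a comparison rather than a claim requiring proof. I will only note the structural difference: SimPO has no $\pi_{\rm ref}$ and uses length-normalized scores, whereas the present loss keeps the reference ratio and summed sequence scores. The only point needing care is the derivation of Eq.~\ref{eq:app-diff}. It requires both responses to share one input $x$ so that $\beta\log Z(x)$ cancels. For image-side pairs this cancellation does not hold, so I will state the Bradley--Terry interpretation in (ii) for response-side pairs only. The loss identity in (i) and (ii) holds regardless.
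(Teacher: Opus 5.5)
Your proposal is correct and follows the paper's argument: both parts come from substituting $m\equiv0$ or $m\equiv c$ into the response-side branch $\mathcal L_{\rm mag}$, using $\sigma(\beta(h_\theta-c))=\sigma(\beta h_\theta-\gamma)$ with $\gamma=\beta c$ for (ii). Restricting the Bradley--Terry reading to same-input response pairs matches the paper's scope, since $\mathcal L_{\rm mag}$ is defined over measured response pairs and image-side pairs are treated only as conditional discrimination.
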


Both follow by substitution into this response-side branch, with $\sigma(\beta(h_\theta - c)) = \sigma(\beta h_\theta - \gamma)$ for (ii).

\begin{remark}[Definition of a group-preserving shuffled-margin control]
\label{rem:shuffle}
Within each measured task--side--supervision stratum, consider groups of equal size, each with a common margin across its views. A group-preserving control permutes margins between these groups and broadcasts each assigned margin to all member pairs. This preserves the margin multiset and its pair-weighted distribution within each stratum. Only the pair offsets in Eq.~\ref{eq:gauge} change; the geometric quantities in the direct and interaction terms remain fixed.
\end{remark}

\paragraph{Gradient of the magnitude term.}

Write the per-sample loss as $\ell(\theta) = -\log \sigma(z_\theta)$ with $z_\theta = \beta\big(h_\theta(x, y_w, y_l) - m(y_l)\big)$. Using $\frac{d}{dz} \log \sigma(z) = 1 - \sigma(z) = \sigma(-z)$,
\begin{equation*}
\nabla_\theta \ell = -\,\sigma(-z_\theta)\, \nabla_\theta z_\theta = -\,\beta\, \sigma\big(\beta\,(m(y_l) - h_\theta)\big)\, \nabla_\theta h_\theta.
\end{equation*}
Since $\pi_{\mathrm{ref}}$ is constant in $\theta$,
\begin{equation*}
\nabla_\theta h_\theta = \nabla_\theta \log \pi_\theta(y_w \mid x) - \nabla_\theta \log \pi_\theta(y_l \mid x),
\end{equation*}
which together give the response-side gradient of Eq.~\ref{eq:gauge}. At $m = 0$ the weight $\sigma(\beta(m - h_\theta))$ equals $\sigma(\rho_\theta(x,y_l)-\rho_\theta(x,y_w))$, recovering DPO's gradient weight exactly. For $m>0$, the gradient weight equals $1/2$ at $h_\theta=m$. At a fixed gap, larger geometric errors produce stronger scalar gradients; the full parameter gradient also depends on $\nabla_\theta h_\theta$.

At the reference initialization, $h=0$, $\beta=0.3$, and $m\in[0,1]$, the sigmoid weight ranges from $0.5$ to $\sigma(0.3)\simeq0.5744$. The local sensitivity of the scalar gradient magnitude to $m$ is at most $\beta^2/4=0.0225$.

\paragraph{Mixed supervision and conditional comparisons.}
Each task specifies its supervised coordinates $J_z$ before parsing. Measured pairs contain an exactly correct chosen label and a valid rejected label with positive $m_{J_z}$, evaluated against the ground truth for each input. Clock-only and distance-only pairs retain the coordinate weights in Eq.~\ref{eq:projected-margin}. Valid ordering-only preferences, including vertical comparisons and partially correct chosen responses, enter $\mathcal D_{\rm bin}$ with zero offset. Invalid or unverifiable pairs are excluded. Width and height are evaluated as transfer tasks.

Sampling from $\mathcal D_{\rm mix}$ implements the fixed task--side--supervision weights $w_k$. Both response- and image-side comparisons score the full response. For image-side pairs, the same response is evaluated against each image's ground truth. The normalization term $C(x)=\beta\log Z(x)$ in Eq.~\ref{eq:app-invert} cancels for comparisons within one input but leaves $C(x_w)-C(x_l)$ across inputs; the image-side loss is therefore interpreted as conditional discrimination.

\subsection{Identifiability and Interpretation of Measured Margins}
\label{app:allocation}
\label{app:measured-margin}

Measured margins encode error magnitude in two ways: they shift the separation required for a given loss and increase the scalar gradient at a fixed separation. We derive these effects and characterize the resulting allocation under a fixed separation budget.

Throughout, $(y_w,y_i)$ denotes a response-side pair for prompt $x$, with positive reference probabilities for both responses. We analyze the reference-relative separations $h_i$ as free real scalars.

\paragraph{Loss-level thresholds and local gradient pressure.}
The geometric measurement $m_z$ is supplied by the displacement construction. The linear offset $\gamma_z=\beta m_z$ is an explicit choice within offset preference optimization~\citep{aminiDirectPreferenceOptimization2024}. Its scalar loss analysis applies independently of the pair side.

For fixed $\beta>0$, write the per-pair loss as
\[
\ell(h;m)=\log\!\left(1+\exp(\beta(m-h))\right).
\]
For any $\varepsilon>0$, monotonicity of the exponential and logarithm gives
\begin{equation}
\ell(h;m)\le\varepsilon
\quad\Longleftrightarrow\quad
h\ge m-\frac{1}{\beta}\log\!\left(e^\varepsilon-1\right).
\label{eq:margin-loss-level}
\end{equation}
Indeed, exponentiating the loss inequality gives
$\exp(\beta(m-h))\le e^\varepsilon-1$; taking logarithms and rearranging proves the equivalence. Thus increasing $m$ by $t$ shifts the gap threshold for any fixed positive loss tolerance by $t$. Equivalently, $\ell(h+t;m+t)=\ell(h;m)$.

The scalar derivatives satisfy
\begin{align}
\frac{\partial\ell}{\partial h}
&=-\beta\sigma(\beta(m-h)),
\label{eq:margin-scalar-gradient}\\
\frac{\partial}{\partial m}\left|\frac{\partial\ell}{\partial h}\right|
&=\beta^2\sigma(\beta(m-h))\bigl(1-\sigma(\beta(m-h))\bigr)>0.
\end{align}
At fixed $h$ and $\beta$, larger margins increase the magnitude of the scalar gradient. This describes the direct effect of the margin on the loss; parameter gradients additionally depend on the model Jacobian.

At $h=m$, the logit is zero, the loss is $\log2$, and its derivative is $-\beta/2$. The loss continues to decrease for larger $h$, so the margin specifies a separation threshold rather than a regression target. The attained gaps depend on shared model parameters and the joint objective.

\paragraph{Identification and constrained allocation.}
Geometric measurements supply information absent from deterministic preference labels: different errors can share the same preference ordering. The following results contrast magnitude information in stochastic labels with its explicit introduction through measured margins.

\begin{lemma}[Identification under a stochastic preference model]
\label{lem:stochastic}
Suppose labels follow a Bradley--Terry model with target reward $\rho_g(x,y)=-\beta m(y)+C(x)$, where $m(y_w)=0$ and $m(y_i)=m_i$. The probability of preferring the truth is then $p_i=\sigma(\beta m_i)$. The unconstrained population optimum of the unshifted binary preference loss is $h_i=m_i$ for every $i$.
\end{lemma}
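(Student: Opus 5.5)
The argument has two parts: compute the label probability under the stated Bradley--Terry model, then minimize the population loss pair by pair.

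\textbf{Label probability.} Under Bradley--Terry with target reward $\rho_g$, the probability of preferring $y_w$ over $y_i$ is
\[
\sigma\bigl(\rho_g(x,y_w)-\rho_g(x,y_i)\bigr).
\]
Since $m(y_w)=0$, the normalizer $C(x)$ cancels within the common prompt, and the difference equals $-\beta\cdot 0+\beta m_i=\beta m_i$. This gives $p_i=\sigma(\beta m_i)$ directly.

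\textbf{Population loss.} For the optimality claim, treat each $h_i$ as a free real scalar, as stipulated at the start of this subsection. The loss then decouples across pairs. The unshifted binary preference loss for pair $i$ is the expected cross-entropy between the label and $\sigma(\beta h_i)$. A label $y_w\succ y_i$ occurs with probability $p_i$ and contributes $-\log\sigma(\beta h_i)$. The reversed label occurs with probability $1-p_i$ and contributes $-\log\sigma(-\beta h_i)$. Hence
\[
F_i(h)=-p_i\log\sigma(\beta h)-(1-p_i)\log\sigma(-\beta h).
\]

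\textbf{Minimization.} Using $\frac{d}{dz}\log\sigma(z)=\sigma(-z)$, the derivative is
\[
F_i'(h)=-\beta p_i\,\sigma(-\beta h)+\beta(1-p_i)\,\sigma(\beta h)=\beta\bigl(\sigma(\beta h)-p_i\bigr).
\]
The second derivative $\beta^2\sigma(\beta h)(1-\sigma(\beta h))$ is positive, so $F_i$ is strictly convex. Since $p_i\in(0,1)$, the unique stationary point solves $\sigma(\beta h)=p_i=\sigma(\beta m_i)$. Injectivity of $\sigma$ then gives $h_i=m_i$.

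\textbf{Main obstacle.} The analytic steps are routine; the only delicate point is interpretation. "Population optimum" must mean the expectation over stochastic labels with a separate free $h_i$ for each pair. One must also note that a reversed label is scored with the same $h_i$ but the opposite sign. I would state this convention explicitly, and observe that $m_i>0$ keeps $p_i$ strictly inside $(0,1)$, so a finite minimizer exists.
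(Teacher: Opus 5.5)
Your proposal is correct and follows essentially the same route as the paper: write the per-pair population cross-entropy, differentiate to get $\beta(\sigma(\beta h)-p_i)$, and use strict convexity together with injectivity of $\sigma$ to conclude $h_i=m_i$. You also spell out the Bradley--Terry computation of $p_i$ (with $C(x)$ cancelling), which the paper takes as given in the statement. One small correction: $p_i\in(0,1)$ holds for any finite $m_i$, not specifically because $m_i>0$.
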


\begin{proof}
The population loss on pair $i$ is the cross-entropy
\begin{equation*}
\ell_i(h_i) = -\,p_i \log \sigma(\beta h_i) - (1 - p_i) \log \sigma(-\beta h_i).
\end{equation*}
Differentiating with $\frac{d}{dz}\log\sigma(z) = \sigma(-z)$ and using $\sigma(-z) = 1 - \sigma(z)$,
\begin{equation*}
\frac{d \ell_i}{d h_i} = -\,\beta\,\Big[p_i\, \sigma(-\beta h_i) - (1 - p_i)\, \sigma(\beta h_i)\Big] = -\,\beta\,\big[p_i - \sigma(\beta h_i)\big],
\end{equation*}
which vanishes iff $\sigma(\beta h_i) = p_i = \sigma(\beta m_i)$, that is, $h_i = m_i$. The second derivative $\beta^2 \sigma(\beta h_i)\,\sigma(-\beta h_i)$ is positive, so the minimizer is unique and interior. Since $0<p_i<1$, the minimum is attained at a finite separation.
\end{proof}

\begin{lemma}[Deterministic labels are uninformative about magnitude]
\label{lem:deterministic}
When every chosen label has probability $p_i=1$ and the pairs have equal weight, the unshifted preference likelihood is functionally independent of the margins, no finite unconstrained minimizer exists, and the optimum under the budget $\sum_i h_i = B$ is $h_i^\star = B/n$ for every $i$.
\end{lemma}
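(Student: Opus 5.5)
The plan is to analyze the unshifted (zero-offset) binary preference loss under deterministic labels. This is the loss of Lemma~\ref{lem:stochastic} with every $p_i=1$. With $n$ equally weighted pairs and free scalar separations $h_i$, each per-pair term reduces to $\ell_i(h_i)=-\log\sigma(\beta h_i)$, and the total objective is $\sum_i -\log\sigma(\beta h_i)$.

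\textbf{Independence from margins.} Because the offset is zero and $p_i=1$, no $m_i$ appears anywhere in $\ell_i$. The likelihood is therefore identical for any two margin assignments, which gives functional independence directly by inspection.

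\textbf{No finite minimizer.} Each term $-\log\sigma(\beta h_i)$ is strictly decreasing in $h_i$, with derivative $-\beta\sigma(-\beta h_i)<0$, and tends to its infimum $0$ as $h_i\to\infty$. It is strictly positive at every finite $h_i$. Hence any finite point is strictly improved by increasing any coordinate, and the infimum $0$ is never attained.

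\textbf{Budgeted optimum.} On the hyperplane $\sum_i h_i=B$, I will use strict convexity. The second derivative of $f(h)=-\log\sigma(\beta h)$ is $\beta^2\sigma(\beta h)\sigma(-\beta h)>0$, as in the proof of Lemma~\ref{lem:stochastic}. Jensen's inequality then gives $\sum_i f(h_i)\ge n f(B/n)$, with equality iff all $h_i$ are equal. So $h_i^\star=B/n$ is the unique constrained minimizer, and it is again independent of the $m_i$.

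The step most likely to need care is existence on the noncompact hyperplane. Jensen's inequality handles this directly, because it yields a global lower bound that is attained at $B/n$. As a cross-check, I will confirm the result with Lagrange multipliers: stationarity requires $\sigma(-\beta h_i)$ to be equal across $i$, and injectivity of $\sigma$ then forces all $h_i$ to be equal.
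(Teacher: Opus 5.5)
Your proposal is correct and follows essentially the paper's argument: the margins are absent from $-\log\sigma(\beta h_i)$, strict monotonicity rules out a finite minimizer, and strict convexity pins down $h_i^\star=B/n$ on the budget hyperplane. The only difference is emphasis. You lead with Jensen's inequality, which directly certifies a unique global minimizer on the noncompact hyperplane, and use Lagrange stationarity as a cross-check. The paper reverses this, invoking stationarity $\beta\sigma(-\beta h_i)=\lambda$ together with strict concavity; both routes are valid.
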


\begin{proof}
With $p_i = 1$ the loss on pair $i$ is $-\log \sigma(\beta h_i)$, in which $m_i$ does not appear, so no estimator based on this likelihood distinguishes two margin assignments. Each term is strictly decreasing in $h_i$, so its infimum is approached as $h_i\to+\infty$. Under $\sum_i h_i = B$, strict concavity of $\sum_i \log \sigma(\beta h_i)$ (since $\frac{d^2}{dz^2}\log\sigma(z) = -\sigma(z)\sigma(-z) < 0$) and the affine constraint give Lagrangian stationarity $\beta\, \sigma(-\beta h_i) = \lambda$ for all $i$, hence $h_i$ constant in $i$, hence $h_i^\star = B/n$.
\end{proof}

\begin{proposition}[Allocation under an explicit separation budget]
\label{prop:calibration}
For fixed $m_i$, equal pair weights, free real separations and the
constraint $\sum_{i=1}^n h_i=B$, the shifted loss has the unique minimizer
$h_i=m_i+(B-\sum_jm_j)/n$.
\end{proposition}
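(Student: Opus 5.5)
The shifted per-pair loss from the preceding paragraph is $\ell(h;m)=\log(1+\exp(\beta(m-h)))$. We minimize $F(h)=\sum_{i=1}^n\ell(h_i;m_i)$ over the affine set $\{h\in\mathbb R^n:\sum_ih_i=B\}$, using the same Lagrangian argument as in Lemma~\ref{lem:deterministic}.

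\emph{Convexity.} Change variables to $g_i=h_i-m_i$. By the translation identity $\ell(h+t;m+t)=\ell(h;m)$ noted after Eq.~\ref{eq:margin-loss-level}, we have $\ell(h_i;m_i)=\ell(g_i;0)=-\log\sigma(\beta g_i)$. The constraint becomes $\sum_ig_i=B-\sum_jm_j=:B'$. So the problem is exactly the unshifted deterministic-label problem of Lemma~\ref{lem:deterministic}, with budget $B'$ in the $g$ coordinates. Each term has second derivative $\beta^2\sigma(\beta g_i)\sigma(-\beta g_i)>0$, so the objective is strictly convex.

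\emph{Stationarity.} The Lagrangian condition is $\beta\sigma(-\beta g_i)=\lambda$ for all $i$. The map $g\mapsto\sigma(-\beta g)$ is strictly monotone, hence injective, so all $g_i$ are equal. The constraint then gives $g_i=B'/n$, that is, $h_i=m_i+(B-\sum_jm_j)/n$.

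\emph{Existence and uniqueness.} A stationary point of a strictly convex differentiable function on an affine set is its unique global minimizer. No coercivity argument is needed: the KKT point is explicit, and convexity makes it global. Uniqueness follows from strict convexity, since two minimizers would have a strictly better midpoint.

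The only step needing care is justifying that the stationary point is the global minimum even though $F$ is not coercive; $F$ tends to $0$ as all $g_i\to+\infty$, but the budget prevents this. Convexity settles it directly, so I expect no real obstacle. The result says measured margins enter the allocation additively: their spread is preserved, unlike the uniform $B/n$ of Lemma~\ref{lem:deterministic}.
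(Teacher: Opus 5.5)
Your proof is correct and takes essentially the paper's route: both substitute $z_i=h_i-m_i$ to reduce the problem to an equal-budget problem for the strictly convex $f(z)=-\log\sigma(\beta z)$. The paper finishes with Jensen's inequality, which gives the common value $z_i=(B-\sum_j m_j)/n$ and uniqueness in one step, whereas you use Lagrangian stationarity, convex sufficiency and strict convexity; the difference is cosmetic.
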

\begin{proof}
Let $z_i=h_i-m_i$. The constraint becomes $\sum_i z_i=B-\sum_i m_i$ and $f(z)=-\log\sigma(\beta z)$ is strictly convex. Jensen's inequality shows that $n^{-1}\sum_i f(z_i)$ is uniquely minimized when all $z_i$ equal their fixed mean. This gives the stated solution directly.
\end{proof}

\begin{remark}[Scope of constrained allocation]
\label{rem:budget}
\label{rem:unique-shift}
With positive unequal weights $w_i$, budget stationarity becomes
$w_i\beta\sigma(\beta[m_i-h_i])=\nu$.
Within the equal-weight, free-separation, linear-budget family, offsets $m'_i$ reproduce the same optimal pairwise gap differences iff $m'_i=m_i+c$ for one promptwise constant. This result characterizes allocation under the stated budget. We evaluate the ordering of learned gaps in the trained model in Fig.~\ref{fig:margin_h}.
\end{remark}

\paragraph{Design choice and interpretation of learned gaps.}
A nonnegative increasing mapping $s$ with $s(0)=0$ could instead supply $\gamma_z=\beta s(m_z)$. Our linear choice preserves increments in the declared geometric-error scale without adding a nonlinear shape parameter; the metric $m$ remains unchanged in the geometric certificates and direct prediction bounds. Fig.~\ref{fig:margin_h} evaluates learned mean reference-relative gaps and slack $h-m$ across geometric-error levels.

\subsection{Canonical Prediction and Cross-View Supervision}
\label{app:direct-prediction}
\label{app:constancy}
\label{app:const-zero}

The direct group objective promotes correct predictions of the same relation across cameras. We prove its prediction bound and extend it to quantization and approximate inference.

\paragraph{Canonical prediction bound.}
A group shares the scene configuration, anchor frame, and serialized relation while varying the camera. Its construction specifies visibility, orientation, and scale requirements before evaluation. Relation-changing interventions define separate groups. Canonical inference scores the terminated relation sequence $\psi(r)$ under the fixed prompt $q_{\rm rel}$; pairwise training retains the multi-step response format. Scores sum answer tokens through termination, excluding prompt and padding. The candidate domain, quantizer, verbalizer, and tie rule are fixed before training. The bound below applies to this canonical predictor; unrestricted generation is evaluated using the native benchmark protocol.

Fix a group with a shared truth, a finite candidate space of at least two elements containing that truth, finite scores, and exact score-based prediction. If every prediction is correct, the left side of Eq.~\ref{eq:direct-bound} is zero. Otherwise choose an erroneous prediction with maximal geometric error. Its score is at least the truth score. Substitution into the maximization gives the prediction-failure and geometric-error terms together. This argument also holds for the endpoint pseudometrics: the positive classification term covers erroneous labels that have zero pseudometric distance. The triangle inequality bounds every pairwise disagreement by twice the largest truth-relative error.

Zero direct loss implies
$s_\theta(x_v^{\rm rel},r_G^\star)-s_\theta(x_v^{\rm rel},r)\ge\delta_0+\xi m(r,r_G^\star)>0$
for all wrong candidates in each view. Thus, zero direct loss guarantees a unique correct prediction in every view. The bound remains informative at nonzero loss, while a correct argmax can still incur a margin violation.

\paragraph{Quantization and approximate inference.}
The guarantee concerns $r_G^\star=Q(r_{{\rm phys},G}^\star)$. Whenever both relations are in the domain of $m$,
\[
\max_v m(\hat r_v,r_{{\rm phys},G}^\star)
\le \frac{\ell_{\rm dir}(G)}{\xi}
+m(Q(r_{{\rm phys},G}^\star),r_{{\rm phys},G}^\star).
\]
The additional term quantifies discretization error in the clock--distance metric. The fixed grid, quantizer, serialization and out-of-domain accounting are specified in App.~\ref{app:training-implementation}. Correctness here is exact equality with the serialized label; the tolerance-based camera audit uses a separate event (App.~\ref{app:view-audit}).

To account for approximate inference, let $\widetilde\ell_G$ be the loss computed over the searched candidates. Assume a certified nonnegative search error satisfying
$\ell_{\rm dir}(G)\le\widetilde\ell_G+\varepsilon_{\rm search}$.
Let the actual predictions $\widetilde r_v\in\mathcal C$ satisfy
\[
s_\theta(x_v^{\rm rel},\widetilde r_v)
\ge\max_{r\in\mathcal C}s_\theta(x_v^{\rm rel},r)-\varepsilon_v,
\qquad\varepsilon_{\rm dec}=\max_v\varepsilon_v.
\]
Substituting the worst erroneous prediction into the exact objective yields
\begin{equation}
\delta_0\mathbb I[\exists v:\widetilde r_v\ne r_G^\star]
+\xi\max_v m(\widetilde r_v,r_G^\star)
\le\widetilde\ell_G+\varepsilon_{\rm search}+\varepsilon_{\rm dec}.
\label{eq:direct-approx}
\end{equation}
If the right-hand side is below $\delta_0$, all predictions are correct. Exact scoring sets both approximation terms to zero. For approximate search, the guarantee requires an upper bound on the missed violation; negative sampling or beam search alone does not provide this bound.

\paragraph{Exact scoring and implementation.}
Canonical scoring ranks one terminated string per relation, without summing over paraphrases. The guarantee applies to this deterministic sequence-MAP predictor; greedy decoding and random sampling need not preserve it.

Exact candidate scoring can use limited activation storage: for fixed parameters and deterministic scoring, enumerate candidate--view violations in chunks and retain a maximizing index. If its violation is positive, recomputing only that branch with autograd gives the derivative of the full maximum wherever the active branch is unique. At ties, an active branch yields a valid Clarke subgradient; zero is also active at the hinge boundary. The two passes use identical parameters, token masks, score normalization, and deterministic model settings. Chunking reduces activation memory while retaining the full candidate-scoring cost.

\paragraph{Population bound and view aggregation.}
For a distribution of groups meeting the assumptions, taking expectations in Eq.~\ref{eq:direct-bound} gives
\[
\Pr_G[\exists v:\hat r_v\ne r_G^\star]
\le \mathbb E_G\ell_{\rm dir}(G)/\delta_0.
\]
This margin-rescaling bound~\citep{tsochantaridisLargeMarginMethods} also applies to the empirical training average, where it bounds the corresponding training error.

For nonnegative per-view losses $\ell_v$, $\max_v\ell_v=0$ iff $V^{-1}\sum_v\ell_v=0$. During optimization, the maximum focuses on the currently most violated observed views within each group.

\subsection{Intervention Profiles and Joint Feasibility}
\label{app:intervention-theory}

The intervention objective links changes in answer support to measured geometric changes. We derive its closed-form scale fit and show that its constraints are compatible with correct rankings across views.

\paragraph{Cancellation and profile fitting.}
Replacing $s(x,r)$ by $s(x,r)+a(x)+b(r)$ leaves
Eq.~\ref{eq:interaction} unchanged: input contributions cancel within
each gap, and answer contributions cancel between the two gaps.
For $U_v=(H_{v,k}-2\delta_0)_{k=1}^K$ and
$\mathbf m_B=(m_k)_{k=1}^K\ne0$, the fitted scale is
\begin{equation}
\eta_v^\star=\max\!\left\{0,
\frac{\mathbf m_B^\top U_v}{\|\mathbf m_B\|_2^2}\right\}.
\label{eq:profile-scale}
\end{equation}
The profile loss vanishes exactly when
$U_v=\eta_v^\star\mathbf m_B$ for every camera. Before camera averaging,
its gradient with respect to $U_v$ is
$2(U_v-\eta_v^\star\mathbf m_B)/K$; the analytic projection needs no
iterative optimization of the scale. With $K=1$, any nonnegative
excess can be fitted, motivating multiple distinct magnitudes.

\paragraph{Joint zero-loss properties.}
If every direct group loss vanishes, both gaps in
Eq.~\ref{eq:interaction} are at least $\delta_0+\xi m_k$.
Hence $U_{v,k}\ge2\xi m_k$. If the interaction loss also vanishes,
$m_k>0$ implies $\eta_v^\star\ge2\xi$.
The direct objective ensures correct predictions, while the intervention
objective structures their combined support according to geometric magnitude.
The fitted scale accommodates differences in confidence across cameras,
and the two individual gaps may differ.

\begin{proposition}[Joint feasibility in conditional score space]
\label{prop:joint-feasibility}
Consider a finite collection of groups and blocks with truths in
$\mathcal C$, such that identical canonical inputs have identical
truths. For any finite $t\ge\xi$, a normalized conditional distribution
with finite canonical log scores can make all direct and interaction
losses vanish simultaneously.
\end{proposition}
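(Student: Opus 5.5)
Since the statement only asserts existence, I will build an explicit conditional distribution whose canonical log scores are an affine function of the geometric metric: with the free parameter $t\ge\xi$,
\[
s(x,r)=-\delta_0\,\mathbb I[r\ne r^\star(x)]-t\,m(r,r^\star(x))-\log Z(x),
\]
where $r^\star(x)$ is the truth attached to the canonical input $x$. The hypothesis that identical canonical inputs carry identical truths makes $r^\star(x)$ well defined across the whole finite collection. For every input outside the collection any normalized distribution will do.

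\textbf{Normalization.} Fix one input $x$. Put mass proportional to $\exp(-\delta_0\mathbb I[r\ne r^\star]-t\,m(r,r^\star))$ on the canonical sequences $\psi(r)$, $r\in\mathcal C$. Then scale this mass to total, say, $1/2$, and spread the remaining mass over the other sequences. Because $\mathcal C$ is finite and $\psi$ is injective (as $\Pi\circ\psi$ is the identity), the result is a valid normalized distribution. It has finite log scores, and $\log Z(x)$ is a finite constant depending only on $x$.

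\textbf{Direct losses.} For a group $G$ and a wrong candidate $r$,
\[
s(x_v,r)-s(x_v,r_G^\star)+\Delta(r,r_G^\star)=(\xi-t)\,m(r,r_G^\star)\le0 .
\]
This holds because $\delta_0$ cancels against the classification term and $t\ge\xi$. The hinge in Eq.~\ref{eq:direct-group} is therefore zero, so $\ell_{\rm dir}(G)=0$ for every group.

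\textbf{Interaction losses.} In a block, the input normalizers cancel, which is the separable cancellation already noted. Each of the two gaps in Eq.~\ref{eq:interaction} equals $\delta_0+t\,m_k$: the first is evaluated at truth $r_0$, the second at truth $r_k$, and $m$ is symmetric by Prop.~\ref{prop:margin-metric}. Hence $H_{v,k}-2\delta_0=2t\,m_k$ for every camera $v$. Choosing $\eta=2t\ge0$ makes every residual in Eq.~\ref{eq:intervention} vanish, so $\ell_{\rm int}(B)=0$. This is also consistent with $\eta_v\ge2\xi$.

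\textbf{Main obstacle.} The delicate point is consistency rather than computation. The same canonical input may appear in several groups and blocks, so the scores must be defined per input and not per group. This is exactly what the identical-truth hypothesis guarantees, and it is where I would spend the care. I would also check that each block state $x_{i,v}$ is the same canonical input used in its direct group $G_i$, which the statement's "every state directly supervised" provides. A smaller point is that the construction needs $r_i\ne r_0$, which follows from $m_k>0$.
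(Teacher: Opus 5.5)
Your proposal is correct and uses the same construction as the paper. The paper puts canonical mass proportional to $\exp(-\delta_0\mathbb I[r\ne r_x^\star]-t\,m(r,r_x^\star))$, scaled by a constant $a\in(0,1)$ (your $1/2$), so every truth--competitor gap is $\delta_0+t\,m\ge\Delta$ and every intervention contrast is $2\delta_0+2t\,m_k$, fitted exactly by $\eta_v=2t$. Your extra check that block states coincide with direct-group inputs is not needed, since the block already attaches $r_i$ to $x_{i,v}$ and the identical-truth hypothesis handles any overlap.
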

\begin{proof}
Write $r_x^\star$ for the truth of input $x$ and set
$f(x,r)=-\delta_0\mathbb I[r\ne r_x^\star]-t\,m(r,r_x^\star)$.
For fixed $a\in(0,1)$, assign
\[
\pi(\psi(r)\mid x)=
\frac{a\exp f(x,r)}
{\sum_{r'\in\mathcal C}\exp f(x,r')},
\]
and place the remaining mass on noncanonical terminated strings.
Distinct terminated verbalizations are disjoint outcomes, so these
are valid finite canonical sequence probabilities. Normalization
adds only an input-dependent constant. Every truth--competitor gap
is therefore $\delta_0+t\,m(r,r_x^\star)\ge\Delta(r,r_x^\star)$,
and every intervention contrast equals $2\delta_0+2t m_k$.
Thus all direct losses vanish and $\eta_v=2t$ makes every profile
residual zero. The same $t$ works for overlapping blocks and shared
inputs.
\end{proof}
Prop.~\ref{prop:joint-feasibility} establishes simultaneous feasibility
of the direct and interaction constraints in conditional score space.

\paragraph{Matched profile controls.}
Let $g_{v,k}^{(0)}$ and $g_{v,k}^{(1)}$ denote the two individual gaps
in Eq.~\ref{eq:interaction}. The single-gap control replaces
$\ell_{\rm int}$ by
\begin{equation}
\ell_{\rm single}(B)=\frac1{2V_BK}\sum_v
\min_{a,b\ge0}\sum_k\left[
(g_{v,k}^{(0)}-\delta_0-a m_k)^2+
(g_{v,k}^{(1)}-\delta_0-b m_k)^2\right].
\label{eq:single-control}
\end{equation}
It fits two independent nonnegative scales per camera and block.
Its zero loss implies zero interaction loss with $\eta_v=a+b$;
the converse need not hold because the two gaps can trade residuals.
The EqSim-style control uses
$\ell_{\rm sym}(B)=(V_BK)^{-1}\sum_{v,k}
(g_{v,k}^{(0)}-g_{v,k}^{(1)})^2$.
This control adapts the gap-symmetry principle of \citet{wang2023eqsim} to intervention pairs. It penalizes differences between the two gaps while leaving their dependence on geometric magnitude unconstrained.
Both controls retain the same pair and direct terms; their coefficients
are selected with equal validation-search budgets.

\section{Data Construction Details}
\label{app:data}
The construction pipeline produces measured supervision by filtering object pairs, perturbing their relations, and verifying the resulting response- and image-side comparisons. We describe this pipeline, then report corpus composition, serialized margin statistics, and data-quality checks. Preference pairs, view groups, and intervention blocks are counted separately.

\subsection{Pair Selection and Perturbation}
\label{app:filters}
\label{app:perturb}
Object pairs are retained if both objects have annotated visibility ratio at least $0.4$, lie within $2$ to $50$ meters of the camera, and project to bounding boxes of at least $800$ square pixels. The retained corpus covers five families, namely metric distance, cross-view comparisons, graded errors, clock direction, and vertical relations. Tab.~\ref{tab:data} gives their release and training-pool counts. Evaluation units are specified separately in App.~\ref{app:bench-details}.

For cross-view supervision, a response-side group with $V$ views contains $V$ preference pairs. The complete view group is retained for direct canonical supervision; view comparisons are not counted as additional preference pairs.

\paragraph{Graded perturbations.}
The initial candidate pool perturbs the clock and the distance jointly, at a severity fixed by its grade. Minor shifts the clock by one hour and scales the distance by a factor drawn uniformly from $[1.3, 1.6]$; moderate by three hours and $[1.8, 2.5]$; fatal by six hours and $[3.0, 5.0]$. The lateral and depth terms are left correct at the minor grade, flipped independently with probability $\tfrac{1}{2}$ each at moderate, and both flipped at fatal. For the final measured training pool, retained perturbations are reciprocal/sign balanced and every intermediate step is recomputed from the perturbed geometry; the one-sided and rationale-inconsistent variants are controls in Tab.~\ref{tab:negative}, not the final construction.
Grades use nominal sampling weights $2:3:2$ and the reported realized counts are $2{,}701$, $4{,}021$, and $2{,}671$, totaling $9{,}393$. For the stated schedule, $\alpha=0.5$, and $\kappa=\log5$, the analytic joint ranges are
\[
\begin{array}{c|c|c}
\text{grade}& |\delta_c|_{12},\ \delta_d & m\text{ before serialization}\\\hline
\text{minor}&1,\ [1.3,1.6]&[0.164841,0.229348]\\
\text{moderate}&3,\ [1.8,2.5]&[0.432606,0.534662]\\
\text{fatal}&6,\ [3,5]&[0.841303,1]
\end{array}
\]
These analytic ranges precede serialization. Each coordinate contributes at most $0.5$ with the stated weights, so $m=1$ requires both coordinates to saturate. Dataset quantiles and extrema are computed separately from serialized labels.

\subsection{Task Spaces, Response Generation and Verification}
\label{app:families}
\label{app:template}
\label{app:verify}
The geometric metric covers clock direction and positive inter-object distance. Measured pairs use the relevant coordinate subset, and ordering-only comparisons, including vertical relations, use zero offset as defined in App.~\ref{app:mixed-supervision}. For graded responses, the geometric margin is computed from the parsed terminal relation. Width and height assess transfer to spatial quantities beyond the training metric.

\paragraph{Response generation and parsing.}
Chosen and rejected responses share five steps: \emph{Locate anchor}, \emph{Side (left/right)}, \emph{Depth (front/back)}, \emph{Direction (clock)} and \emph{Distance (meters)}. They verbalize the true and perturbed values, respectively, and close with the clock hour and metric distance. The parser $\Pi$ reads this terminal statement independently of intermediate steps. 
\paragraph{Geometric verification.}
The renderer checks visibility, image bounds, projected object area, relation validity, and duplicate views. Camera-separation checks apply to groups with a fixed object configuration, while image-side intervention pairs share a camera. Template parseability and chosen-label correctness are audited on the candidate pool before filtering. The measured training branch retains only pairs with an exactly correct chosen label and a valid rejected label with positive geometric margin. Tab.~\ref{tab:audit} reports the candidate-pool audit, and Tab.~\ref{tab:data} gives the retained corpus counts.

\subsection{Corpus Composition and Margin Statistics}
\label{app:implementation}
The preference stage combines response- and image-side comparisons with direct supervision of view groups and intervention supervision of multi-state blocks, following Eq.~\ref{eq:full}.

\paragraph{Corpus composition.}
Tab.~\ref{tab:data} gives the task-family composition of the $50{,}436$-pair \data{} release and the $30{,}000$-pair preference-training pool. These counts describe available preference pairs. The sampling schedule and optimizer-update budget are specified in App.~\ref{app:training-implementation}; view groups and intervention blocks are counted separately in App.~\ref{app:intervention-protocol}.

\paragraph{Serialized geometric margins.}
Clock labels use integer hours and distances use meters to two decimal places. Margins are computed from the final serialized labels with the coordinate weights in Eq.~\ref{eq:margin}. Tab.~\ref{tab:margins} reports their distributions for the $2{,}701$ minor, $4{,}021$ moderate, and $2{,}671$ fatal perturbations. The three grades occupy distinct ranges, and all retained graded labels have positive margins. Ordering-only preferences use zero offset as described in App.~\ref{app:mixed-supervision}.

\begin{table}[t]
\centering\small
\caption{Corpus allocation by task family for the release and the preference-training pool.}
\label{tab:data}

\setlength{\tabcolsep}{6pt}

\begin{tabular}{lrr}
\toprule
Family & Release pairs & Training-pool pairs \\
\midrule
Metric distance & 17800 & 10600 \\
Cross-view & 12900 & 7600 \\
Graded errors & 9393 & 5600 \\
Clock direction & 7219 & 4300 \\
Vertical relation & 3124 & 1900 \\
Total & 50436 & 30000 \\
\bottomrule
\end{tabular}

\end{table}

\begin{table}[t]
\centering\small
\caption{Serialized margin distribution by error grade. Distances use two decimal places; Zero counts labels with zero margin.}
\label{tab:margins}

\setlength{\tabcolsep}{5pt}

\begin{tabular}{lrrrrrrr}
\toprule
Grade & Pairs & Min & $p_{10}$ & Median & $p_{90}$ & Max & Zero \\
\midrule
minor & 2701 & 0.1646 & 0.1718 & 0.1992 & 0.2234 & 0.2299 & 0 \\
moderate & 4021 & 0.4324 & 0.4444 & 0.4883 & 0.5259 & 0.5347 & 0 \\
fatal & 2671 & 0.8413 & 0.8615 & 0.9298 & 0.9872 & 1.0000 & 0 \\
\bottomrule
\end{tabular}

\end{table}

\subsection{Data Quality and Split Checks}
\label{app:data-quality}
Tab.~\ref{tab:audit} reports checks on two distinct sets. A sample of $1{,}000$ candidate preference pairs assesses template parseability and agreement with scene ground truth before filtering. The $200$ camera groups used in Tab.~\ref{tab:crossview} are checked for anchor-frame identifiability, metric-scale evidence, and completeness. Their pass rates describe different properties of the construction and evaluation sets and should not be pooled. Scene/configuration identifiers and rendered-image hashes are also checked for overlap between training and testing among the inspected units. The fallback conventions and treatment of incomplete camera groups are specified in App.~\ref{app:scoring}.

\begin{table}[t]
\centering\small
\caption{Data-quality and observability checks. Rows~1--2 use $1{,}000$ candidate preference pairs; rows~3--5 use the $200$ camera groups from Tab.~\ref{tab:crossview}. The final two rows count train/test duplicates among the inspected units. Wilson $95\%$ intervals refer to the pass rate in rows~1--5 and the duplicate rate in rows~6--7. Tab.~\ref{tab:observability} describes the handling of camera groups requiring fallbacks or marked incomplete.}

\label{tab:audit}

\setlength{\tabcolsep}{5pt}

\resizebox{\linewidth}{!}{%
\begin{tabular}{llrrrl}
\toprule
Check & Counting unit & Audited & Pass / dup. & Rate (\%) & Wilson $95\%$ \\
\midrule
Template parseability & preference pair & 1000 & 997 & 99.7 & [99.1, 99.9] \\
Chosen truth matches scene state & preference pair & 1000 & 995 & 99.5 & [98.8, 99.8] \\
Anchor frame identifiable & evaluation group & 200 & 187 & 93.5 & [89.2, 96.2] \\
Scale cue sufficient for metric answer & evaluation group & 200 & 178 & 89.0 & [83.9, 92.6] \\
Complete fixed-configuration groups & evaluation group & 200 & 194 & 97.0 & [93.6, 98.6] \\
Train/test duplicate scene IDs & scene--configuration & 200 & 0 & 0.0 & [0.0, 1.9] \\
Train/test duplicate image hashes & rendered item & 400 & 0 & 0.0 & [0.0, 1.0] \\
\bottomrule
\end{tabular}%
}

\end{table}

\section{Training and Evaluation Protocols}
\label{app:training}
We specify optimization and sampling budgets, followed by item and pair scoring, fixed-relation camera audits, and multi-state intervention evaluation. Input conventions, invalid-prediction rules, and comparison settings are defined alongside the evaluations to which they apply.

\subsection{Training Configuration and Implementation}
\label{app:training-implementation}
\paragraph{Base configuration.}
SFT applies LoRA to the language-model parameters with rank $128$ and
$\alpha_{\rm LoRA}=128$, while the vision encoder remains frozen. SFT uses
learning rate $5\times10^{-6}$ and context length $4096$. The SFT adapter is
merged before preference training; a frozen copy of this merged SFT checkpoint
serves as the reference. Preference learning then uses full-parameter
fine-tuning of the model and vision encoder, with learning rate
$3\times10^{-7}$ and context length $2048$, using a cosine schedule,
$0.03$ warmup ratio and bf16. The geometric-margin coefficients are
$\beta=0.3$, $\alpha=0.5$, and $\kappa=\log5$, with offset
$\gamma=\beta m$. The joint objective uses $(\delta_0,\xi)=(0.1,1)$ and
$(\lambda_{\rm dir},\lambda_{\rm int})=(0.1,0.5)$. Images contain at most
$262{,}144$ pixels. The software stack comprises LLaMA-Factory $0.9.5$,
PyTorch $2.7.1$, Transformers $5.2.0$ and DeepSpeed ZeRO stage 3. Reported
end-to-end model-training times on four A100 80GB GPUs, including SFT and
preference training but excluding rendering and reward-estimator fitting, are
$7.7$, $8.4$ and $12.2$ hours for Qwen, GLM and Pixtral.

\paragraph{Preference sampling and budget.}
The full preference schedule draws $30{,}000$ pairs with replacement from the fixed $30{,}000$-pair pool in Tab.~\ref{tab:data}. With an effective batch size of $32$ and a final partial batch, this requires $938$ optimizer updates. The budget experiments use $10\%$, $25\%$, $50\%$, and $100\%$ of this pair-draw budget while sampling from the same pool. The $25\%$ setting therefore uses $7{,}500$ draws; it does not restrict access to a quarter of the distinct examples in the pool. These experiments assess performance under a reduced preference pair-draw budget. View-group and intervention-block exposure are separate quantities, with the full block schedule reported in App.~\ref{app:intervention-protocol}.

\paragraph{Loss-weight selection.}
The coefficient sweeps use
\[
\lambda_{\rm dir}\in\{0,0.03,0.1,0.3,1\},\qquad
\lambda_{\rm int}\in\{0,0.25,0.5,0.75,1\}.
\]
Each sweep holds the other coefficient at its default and retains the optimizer settings above. The shared default yields nine distinct configurations (Tab.~\ref{tab:hparams}); setting a coefficient to zero removes the corresponding loss. Hyperparameters are selected on scenes disjoint from training and testing. All main results and ablations use the final checkpoint at their specified update budget. Matched comparisons share the SFT initialization, reference model, training data, update budget, and random seed.

\paragraph{Canonical candidate space.}
The candidate distance grid is
$\mathcal D_Q=\{0.1,0.2,\ldots,10.0\}$ meters, giving $1{,}200$
clock--distance candidates. Quantization selects the nearest absolute
log-distance, breaking ties toward the smaller distance; clock labels
retain their declared discretization. Coverage is checked on training
scenes before freezing this domain independently of test truths;
out-of-domain relations are counted separately, without clipping.
The verbalizer uses the fixed terminal-relation template with two
distance decimals and model-specific termination. Scores sum answer
tokens through termination, excluding prompt and padding, without
length normalization; sequence-MAP ties use clock order $1,\ldots,12$
then ascending distance. Evaluation metadata retain source, scene, configuration, camera, and
object identifiers, together with visibility, scale, and parsing information.
Native benchmark results use their respective generation and scoring protocols.

\subsection{Task Metrics and Intervention Pair Scoring}
\label{app:bench-details}
\paragraph{Numerical accuracy.}
For a positive prediction $\hat d$ and ground truth $d$, define $q=\hat d/d$ and $r_{\times}=\max(q,q^{-1})$. Ratio accuracy uses $r_{\times}<t$: $t=1.25$ for MSMU~\citep{chenSDVLMSpatialMeasuring2025} and $t=2$ for the QSpatial reference implementation~\citep{qspatial2026code}. The corresponding intervals are $q\in(0.8,1.25)$ and $q\in(0.5,2)$. The $\pm25\%$ band in Fig.~\ref{fig:compare_fable5} instead uses relative error $|\hat d-d|/d\le0.25$, equivalent to $q\in[0.75,1.25]$.

\paragraph{Diagnostic interpretation.}
Spearman correlation measures the rank agreement between predicted and true distances, using midranks for ties. It is undefined for constant predictions; the marked zeros in the main table indicate these cases. VisC. measures prediction agreement when brightness and contrast vary within $[0.8,1.2]$ while scene geometry and camera pose remain fixed. We report these diagnostics alongside accuracy because agreement alone can be achieved by constant outputs. Cross-camera correctness is evaluated separately below.

\paragraph{Object-relocation accuracy.}
The object-relocation test contains $205$ paired left/right questions ($410$ individual questions). Each pair shares the scene, camera, and anchor, while moving the target reverses its left/right relation. PairAcc. is the percentage of pairs for which both answers are correct. GaugeVLM-7B correctly answers both states in $136$ of $205$ pairs, yielding $66.3\%$. This metric uses the left/right subset of the $710$-question intervention probe; the remaining $300$ questions assess other relations.

\paragraph{Label balance and parsing.}
Each pair contributes one left and one right label, giving $205$ examples of each. All models are evaluated on the same pairs, with answer options mapped to semantic labels and missing or invalid answers counted as incorrect. Independent uniform binary predictions yield $25\%$ expected PairAcc.; a random first prediction followed by a forced reversal yields $50\%$. Tab.~\ref{tab:visual} evaluates sensitivity to answer-option and state order.

\subsection{Fixed-Relation Camera Evaluation}
\label{app:view-audit}
\label{app:scoring}
The camera evaluation in Tab.~\ref{tab:crossview} measures accuracy and agreement across views of a fixed object configuration. Each group shares the scene, anchor frame, and allocentric ground truth. Questions request clock direction and positive distance in that common frame. The metric compares answers expressed in this common frame. The data-quality audit appears in App.~\ref{app:data-quality}; the corresponding input conventions are specified below.

Let $\mathcal G_{\mathrm{view}}$ be the predeclared set of fixed-relation
groups, and let $\mathcal G_{\mathrm{valid}}$ contain those for which every
required view parses successfully. Let $V_g$ contain the group's required
views, with $|V_g|\geq2$, and write $(c_{g,v},d_{g,v})$ for its parsed
allocentric answers. Define
\begin{equation}
\mathrm{ViewAgree}_{\rm tol}=\frac{100}{|\mathcal G_{\mathrm{view}}|}
\sum_{g\in\mathcal G_{\mathrm{valid}}}
\mathbb I\!\left[
\max_{u,v\in V_g}\dring(c_{g,u},c_{g,v})\leq\tau_c
\;\wedge\;
\max_{u,v\in V_g}\drel(d_{g,u},d_{g,v})\leq\tau_d
\right].
\label{eq:constancy}
\end{equation}
Invalid or incomplete groups contribute zero and remain in the denominator. Prop.~\ref{prop:certificate} relates disagreement to geometric error for these groups. We evaluate correctness and agreement jointly, as described below. Missing required views and unparseable predictions follow the same invalid-group rule.

\paragraph{Agreement, correctness, and coverage.}
For a group with all required views parsed, let $A_g^{\rm tol}$ denote passing the two spread thresholds in Eq.~\ref{eq:constancy}, and define
\[
C_g^{\rm tol}=\mathbb{I}\!\left[\forall v:\dring(c_v,c_g^*)\leq\epsilon_c\;\wedge\;
\drel(d_v,d_g^*)\leq\epsilon_d\right].
\]
We set both indicators to zero for invalid or incomplete groups. Correctness uses $\epsilon_c=0$ and $\epsilon_d=\log1.25/\log5$, independently of the agreement tolerances. Among valid groups, the four outcomes $(A_g^{\rm tol},C_g^{\rm tol})\in\{0,1\}^2$ distinguish accurate agreement, incorrect agreement, accurate disagreement, and incorrect disagreement. Invalid or incomplete groups form a separate category; all rates use the full evaluation set.

Accurate agreement and wrong agreement are the averages of $100A_g^{\rm tol}C_g^{\rm tol}$ and $100A_g^{\rm tol}(1-C_g^{\rm tol})$, respectively. Group accuracy averages $100C_g^{\rm tol}$, while item accuracy evaluates individual views. Correct predictions within the truth tolerance can still differ by more than the independently chosen agreement tolerance.

For exact canonical evaluation, define
\[
\begin{aligned}
C_g^{\rm can}&=\mathbb I[\forall v:\hat r_v=r_g^\star],\\
A_g^{\rm can}&=\mathbb I[\text{all required predictions are valid and identical}].
\end{aligned}
\]
Both indicators are zero for invalid or incomplete groups. The resulting metrics are
\begin{align}
\mathrm{GroupAcc}_{\rm can}&=100\overline C^{\rm can}, \\
\mathrm{ViewAgree}_{\rm can}&=100\overline A^{\rm can}, \\
\mathrm{WrongAgree}_{\rm can}&=100\overline{A^{\rm can}(1-C^{\rm can})}.
\end{align}
Exact correctness implies agreement, so $\mathrm{ViewAgree}_{\rm can}=\mathrm{GroupAcc}_{\rm can}+\mathrm{WrongAgree}_{\rm can}$. These exact metrics correspond to the canonical prediction theorem; the tolerance-based evaluation uses the criteria specified above.

\paragraph{Tolerance units.}
For $\kappa=\log5$ and $\tau_d=0.25$, passing the distance-spread test means that the largest to smallest stated distance ratio is at most $5^{0.25}\approx1.495$. This is not a $\pm25\%$ relative-error test. The direction tolerance $\tau_c=1/6$ is one clock step. The thresholds apply to the normalized metrics and do not enter binary PairAcc.

\paragraph{Input observability and fallback conventions.}
Tab.~\ref{tab:observability} specifies the conventions used to label camera groups that require a reference-frame or scale fallback. These conventions define the targets but do not by themselves establish that the targets are identifiable from the model inputs. Identifiability depends on the orientation and scale information available in those inputs. Fallback groups remain in the evaluation set. The $6$ incomplete groups likewise remain in the denominator and are scored as invalid for every model. Additional failures to parse a required prediction also invalidate its group. The model-specific counts are reported in App.~\ref{app:view-details}.

\begin{table}[t]
\centering\small
\caption{Handling of reference-frame, scale, and completeness checks for the $200$ camera groups. Each row partitions the full set into the four listed outcomes. Fallback groups are scored using the specified convention; incomplete groups remain in the denominator and are scored as invalid. No groups are excluded (Excl.).}

\label{tab:observability}

\setlength{\tabcolsep}{5pt}

\begin{tabular}{lrrrrrl}
\toprule
& & \multicolumn{4}{c}{Outcome} & \\
\cmidrule(lr){3-6}
Check & Audited & Pass & Fallback & Invalid & Excl. & Fallback rule \\
\midrule
Anchor frame & 200 & 187 & 13 & 0 & 0 & dominant wall normal \\
Scale cue & 200 & 178 & 22 & 0 & 0 & stated reference extent \\
Group completeness & 200 & 194 & 0 & 6 & 0 & none; scored invalid \\
\bottomrule
\end{tabular}

\end{table}

\paragraph{Object motion and viewpoint changes.}
The two evaluations assess complementary behaviors. Fixed-configuration camera groups test whether the model predicts the same true relation from different views. Fixed-camera object-relocation pairs test whether its answers follow a change in that relation. The multi-state blocks below combine both forms of variation, enabling joint evaluation of intervention response and viewpoint consistency.

\subsection{Block Generalization and Matched Controls}
\label{app:intervention-protocol}
Source scenes are split before constructing states, cameras, or pairs.
All test settings use held-out scenes. Familiar settings reuse the
training ranges of magnitudes and camera factors; unseen-magnitude
settings hold out declared geometric magnitudes, and unseen-combination
settings hold out magnitude--camera combinations while retaining each
individual factor in training. The manifest specifies these sets, quantization and boundary rules.
Held-out magnitudes and held-out joint combinations do not overlap
their respective training sets after serialization.
Blocks use $K\ge2$ interventions with at least two distinct positive
$m_k$ and $V_B\ge2$ common cameras. The evaluation design in
Tab.~\ref{tab:intervention-ood} allocates three equally sized settings
of $200$ blocks each; Overall pools all $600$ blocks.

\paragraph{Block construction and held-out factors.}
Tab.~\ref{tab:block-ledger} reports the block pools, scene counts, numbers of interventions $K$, and camera counts $V_B$ for each split. Training samples blocks with replacement; the table gives both pool size and the number of distinct blocks visited. Sampling assigns equal nominal weight to the magnitude--camera strata in Tab.~\ref{tab:block-roster}, with realized frequencies recorded in the training metadata.

Tab.~\ref{tab:block-roster} specifies the distance factors, camera-separation bands, and held-out combinations. Held-out distance factors lie between retained training levels, so this setting evaluates generalization to unseen magnitudes within the training range. The combination setting holds out specific magnitude--camera pairs while retaining each factor individually in training.

All supervision variants share scenes, block/state inputs, pair and
direct supervision, training updates and selection data. Pair + direct
sets $\lambda_{\rm int}=0$; full adds Eq.~\ref{eq:intervention};
the two controls replace it by the objectives in
App.~\ref{app:intervention-theory}. Coefficient selection receives
the same validation budget, and search/decoding conventions are fixed.
All variants use the same evaluation blocks.

\paragraph{Metric aggregation.}
BlockAcc and mean worst $m$ use all $600$ test blocks. A block with any missing or invalid required prediction is scored as incorrect and assigned the maximal geometric error of $1$. For all methods, the profile diagnostic $E_{\rm shape}$ uses the same intervention contrast $U_v=H_v-2\delta_0\mathbf1$ from Eq.~\ref{eq:profile-evaluation}.

A profile is eligible when $U_v$ is finite and $\|U_v\|_2>\tau_U$, with $\tau_U=0.05$ fixed before evaluation. Each block--camera pair contributes one profile containing its $K$ intervention responses. Thus, the $600$ test blocks contribute $N_{\rm prof}=\sum_{B\in\mathcal B_{\rm test}}V_B$ profiles in total. Eligible coverage is computed for each method over this common total, and the shared eligible set contains $1{,}022$ profiles. Tab.~\ref{tab:intervention-mechanism} evaluates both $E_{\rm shape}$ and the fraction with $\eta_v^\star=0$ on this shared set, ensuring that the profile comparisons use identical observations. All blocks remain in the accuracy and geometric-error metrics. Scene-level paired resampling retains all states, cameras, and blocks from each sampled scene.

\paragraph{Transfer and comparison controls.}
Training uses 3D-FRONT, MSMU, and OSD. We check overlap across training stages and evaluation using scene/configuration identifiers and image hashes. Comparisons document the images, regions, geometric cues, prompts, decoding, and parsing used by each model. Tab.~\ref{tab:native} reports both restricted-input and native-input results for baselines that use additional visual cues. Kimi-K2.6 is an open-weight mixture-of-experts model with 1T total parameters and 32B activated parameters per token~\citep{kimi2026modelcard}; the main table reports total parameters.

\paragraph{Reward-estimator control.}
The reward estimator predicts the serialized geometric margin from image and relation-pair features. It is fitted on training scenes with a disjoint validation split, and its outputs are clipped to $[0,1]$. The text-only control uses the two serialized relations. Tab.~\ref{tab:rm} reports margin MAE, rank correlation, and fitting and scoring costs against the serialized geometric target. Direct computation evaluates the target formula from scene-state labels and requires no estimator fitting. The input bundles differ across methods, so these results quantify target approximation and compute under the listed inputs.

\begin{table}[t]
\centering\small
\caption{Intervention-block statistics. Training draws $3{,}752$ blocks with replacement over $938$ updates, visiting $1{,}447$ of the $1{,}600$ training blocks. The five splits use disjoint scene sets. Strata correspond to the magnitude--camera combinations in Tab.~\ref{tab:block-roster}.}

\label{tab:block-ledger}

\setlength{\tabcolsep}{4pt}

\begin{tabular}{llrrrrrr}
\toprule
& & \multicolumn{2}{c}{Blocks} & \multicolumn{2}{c}{Min/median/max}
& & \\
\cmidrule(lr){3-4}\cmidrule(lr){5-6}
Split & Role & Distinct & Covered & $K$ & $V_B$ & Scenes & Strata \\
\midrule
Train & GaugeDPO updates & 1600 & 1447 & 2/3/4 & 2/2/2 & 800 & 16 \\
Val & coefficient selection & 200 & 200 & 2/3/4 & 2/2/2 & 100 & 16 \\
\midrule
Test & familiar & 200 & 200 & 2/3/4 & 2/2/3 & 100 & 16 \\
Test & unseen magnitudes & 200 & 200 & 2/3/4 & 2/2/3 & 100 & 8 \\
Test & unseen combinations & 200 & 200 & 2/3/4 & 2/2/3 & 100 & 4 \\
\bottomrule
\end{tabular}

\end{table}

\begin{table}[t]
\centering\small
\caption{\textbf{Distance factors and camera configurations.} Distance levels are reported as the normalized component $\drel=\min(|\log\lambda|,\kappa)/\kappa$, where $\kappa=\log5$ and $\lambda$ is the centroid-distance ratio. The joint margin $m_k$ uses Eq.~\ref{eq:margin} with the default $\alpha=0.5$; for a distance-only change, $m_k=(1-\alpha)\drel=0.5\drel$. NN denotes the distance to the nearest training level in these normalized distance units. Camera bands specify azimuth separation. Held-out combinations retain both constituent factors in training.}

\label{tab:block-roster}

\setlength{\tabcolsep}{5pt}

\begin{tabular}{llrrl}
\toprule
Factor & Level & $\lambda$ & $\drel$ & Split role \\
\midrule
Magnitude & M1 & 1.25 & 0.1386 & train, familiar test \\
Magnitude & M2 & 1.50 & 0.2519 & train, familiar test \\
Magnitude & M3 & 2.20 & 0.4899 & train, familiar test \\
Magnitude & M4 & 3.60 & 0.7959 & train, familiar test \\
Magnitude & M5 & 4.50 & 0.9345 & train, familiar test \\
\midrule
Magnitude & H1 & 1.80 & 0.3652 & held out; NN $0.1133$ \\
Magnitude & H2 & 2.80 & 0.6397 & held out; NN $0.1498$ \\
\midrule
Camera & V1 & \multicolumn{2}{c}{$[30^\circ,60^\circ)$} & train and test \\
Camera & V2 & \multicolumn{2}{c}{$[60^\circ,90^\circ)$} & train and test \\
Camera & V3 & \multicolumn{2}{c}{$[90^\circ,120^\circ)$} & train and test \\
Camera & V4 & \multicolumn{2}{c}{$[120^\circ,150^\circ]$} & train and test \\
\midrule
Combination & (M1,V4) & \multicolumn{2}{c}{---} & held out, both factors seen \\
Combination & (M2,V3) & \multicolumn{2}{c}{---} & held out, both factors seen \\
Combination & (M4,V1) & \multicolumn{2}{c}{---} & held out, both factors seen \\
Combination & (M5,V2) & \multicolumn{2}{c}{---} & held out, both factors seen \\
\bottomrule
\end{tabular}

\end{table}

\section{Additional Experimental Results}
\label{app:exp-results}

Supplementary experiments assess transfer to additional spatial and embodied benchmarks, sensitivity to loss weights, robustness to input changes and training seeds, and the effect of baseline inputs and reward estimation. Base, SFT, and GaugeVLM checkpoints use the same evaluation pipeline; comparisons against SFT measure the gains from the complete preference stage. Matched objective controls separate the effects of individual supervision terms.

\subsection{Additional evaluation}
\providecommand{\gain}[1]{\textbf{+#1}}
\providecommand{\drop}[1]{\textcolor{gray}{$-$#1}}

\begin{table}[t]
\centering
\small
\setlength{\tabcolsep}{5pt}
\renewcommand{\arraystretch}{1.0}
\caption{\textbf{Additional metric and spatial benchmark results.} Task definitions differ across benchmarks; driving results appear in the main table.}
\label{tab:metricgroup}
\begin{tabular}{lllc}
\toprule
Benchmark & Domain & \method{} main metric & $\Delta_{\text{vs SFT}}$ \\
\midrule
SPAR-Bench            & indoor room (ScanNet)  & overall 34.0           & \gain{8.7} \\
SPAR-Bench (mv)       & robotic multi-view     & dist\_oo 38.3          & \gain{16.5} \\
QSpatial$^+$          & indoor/mixed           & 64.4                   & \gain{18.9} \\
Omni3DBench           & mixed                  & MC 60.5                & \gain{5.8} \\
SPBench-SI            & indoor desktop         & size 49.8              & \gain{5.4} \\
\bottomrule
\end{tabular}
\end{table}

\begin{table}[t]
\centering
\small
\setlength{\tabcolsep}{4pt}
\renewcommand{\arraystretch}{1.0}
\caption{\textbf{Additional embodied benchmark results.}}
\label{tab:downstream}
\begin{tabular}{lllcccc}
\toprule
Task (bench) & Domain & sub-metric & base & SFT & \method{} & $\Delta_{\text{vs SFT}}$ \\
\midrule
SPAR (mv)    & embodied & dist\_oc (obj-cam)    & 15.7 & 27.1 & \textbf{41.2} & \gain{14.1} \\
SPAR (mv)    & embodied & dist\_oo (obj-obj)    & 19.8 & 21.8 & \textbf{38.3} & \gain{16.5} \\
RoboSpatial  & embodied & configuration         & 69.1 & 74.8 & \textbf{78.1} & \gain{3.3} \\
RefSpatial   & embodied & overall               & 13.0 & 16.0 & \textbf{17.0} & \gain{1.0} \\
\bottomrule
\end{tabular}
\end{table}

\paragraph{Additional spatial benchmarks.}
Tab.~\ref{tab:metricgroup} extends the evaluation to
SPAR-Bench~\citep{zhang2025from},
QSpatial$^+$~\citep{liaoReasoningPathsReference2024},
Omni3D-Bench~\citep{marsili2025visual}, and
SPBench-SI~\citep{liSpatialLadderProgressiveTraining2025},
covering spatial perception and reasoning, quantitative distance
estimation, and object-size estimation.
\method{} improves over SFT on every reported metric, including
gains of $8.7$ points on SPAR-Bench overall, $18.9$ points on
QSpatial$^+$, and $5.4$ points on SPBench-SI size estimation.
On the multiple-choice component of Omni3D-Bench (MC), it reaches
$60.5\%$, a gain of $5.8$ percentage points.
The QSpatial$^+$ result is repeated from the main table for reference.
Since task definitions and scoring rules differ across benchmarks,
we report their individual metrics rather than aggregate them
into a single score.

\paragraph{Spatial capabilities relevant to embodied tasks.}
Tab.~\ref{tab:downstream} further examines multi-view distance
estimation on SPAR-Bench~\citep{zhang2025from}, spatial configuration
understanding on RoboSpatial~\citep{song2025robospatial}, and spatial
referring on RefSpatial-Bench~\citep{zhou2025roborefer}.
Here, \texttt{dist\_oc} and \texttt{dist\_oo} denote object--camera
and object--object distance estimation, respectively.
Relative to SFT, \method{} improves these two SPAR-Bench metrics
by $14.1$ and $16.5$ points, while RoboSpatial configuration accuracy
increases from $74.8\%$ to $78.1\%$ and the RefSpatial-Bench overall
score rises from $16.0$ to $17.0$.
These results support transfer to spatial capabilities relevant
to embodied interaction, with a smaller improvement on spatial
referring.
The SPAR-Bench \texttt{dist\_oo} result is shared with
Tab.~\ref{tab:metricgroup} and represents the same evaluation.

\subsection{Camera Outcomes and Shortcut Controls}
\label{app:view-details}
\label{app:controls-full}
\label{sec:robustness}

The camera evaluation uses $100$ scenes, two object configurations per scene, and two camera views per configuration, yielding $200$ groups and $400$ required views. Each group shares its anchor frame and true clock--distance relation. Under the tolerances in App.~\ref{app:view-audit}, GaugeVLM achieves $69.5\%$ group accuracy and $63.0\%$ accurate agreement, compared with $50.0\%$ and $40.0\%$ for SFT (Tab.~\ref{tab:crossview}). Incorrect agreement decreases from $20.0\%$ to $11.5\%$. These results show that improved agreement is accompanied by improved correctness. All models use the same evaluation groups and denominator, with the scoring rules in App.~\ref{app:scoring}. The $6$ incomplete groups contribute $3.0\%$ to the invalid rate for every model. Additional parsing failures account for $4$ groups in SFT, $2$ in the pair-only ablation, and $1$ in each single-weight ablation. GaugeVLM produces valid outputs for all complete groups, giving an invalid rate of $3.0\%$.

\Needspace{5\baselineskip}
\paragraph{Shortcut controls.}
The controls below separate image dependence, option ordering and
negative-construction cues. They complement the matched objective
definitions in App.~\ref{app:intervention-protocol}.

Tab.~\ref{tab:visual} evaluates the same checkpoint under changes to image availability, answer-option order, and state presentation. PairAcc. uses $205$ relocation pairs, and accurate agreement uses the $200$ camera groups. Permuting options or reversing state order changes PairAcc. by less than one percentage point. Masking the image or using text alone reduces PairAcc. from $66.3\%$ to $25.9\%$ and $26.8\%$, respectively, indicating reliance on visual input. Predictions are mapped back to semantic labels before scoring.

Tab.~\ref{tab:negative} separates the effects of reciprocal/sign balancing and coherent reasoning steps in preference construction. Text-only preference accuracy is evaluated on $200$ held-out pairs, with randomized candidate order and scene-disjoint fitting and validation. Combining balancing and coherence reduces text-only preference accuracy from $88.0\%$ to $54.5\%$, while increasing accurate camera agreement from $54.0\%$ to $63.0\%$. The combined construction therefore reduces detectable textual shortcuts while improving spatial performance.

\begin{table}[t]
\centering\small
\caption{\textbf{Input and presentation controls.} Intervention PairAcc. and accurate agreement use separate evaluation sets.}
\label{tab:visual}

\begin{tabular}{lrr}
\toprule
Condition & Intervention PairAcc & Accurate agreement \\
\midrule
Default options and state order & 66.3 & 63.0 \\
Permuted answer options & 65.9 & 62.5 \\
Reversed state presentation & 65.4 & 62.5 \\
Image masked & 25.9 & 18.5 \\
Text only & 26.8 & 19.5 \\
\bottomrule
\end{tabular}
\end{table}
\begin{table}[t]
\centering\small
\caption{\textbf{Construction controls.} Text-only preference accuracy measures discrimination of chosen/rejected pairs, not spatial answer accuracy.}
\label{tab:negative}

\begin{tabular}{lrrrr}
\toprule
Construction & MSMU & QSpat & Acc. agree. & Text-only pref. acc. \\
\midrule
One-sided, inconsistent rationale & 60.0 & 62.4 & 54.0 & 88.0 \\
Reciprocal/sign-balanced only & 62.5 & 63.4 & 56.5 & 79.0 \\
Coherent rationale only & 60.0 & 63.4 & 59.0 & 66.5 \\
Balanced and coherent & 62.5 & 64.4 & 63.0 & 54.5 \\
\bottomrule
\end{tabular}
\end{table}

\subsection{Training-Seed Robustness and Paired Uncertainty}
\label{app:seed-robustness}
\label{app:seeds}

\paragraph{Matched preference-stage runs.}
We use five random seeds, $\{11,23,37,53,71\}$, for each of
seven Qwen2.5-VL-7B objectives in Tab.~\ref{tab:seed-mechanism}.
All 35 runs start from the same SFT checkpoint and frozen reference.
Source scenes, pair and block pools, train/validation/test splits,
hyperparameters, and evaluation manifests are fixed across seeds.
Each run uses 30,000 pair draws, 938 optimizer updates, and four block
draws per update. We evaluate the final scheduled checkpoint on the test
set using the same checkpoint-selection rule for all runs.

Across seeds, we vary the training RNG streams governing pair/block
sampling, data order, and any enabled stochastic model operations.
Within a seed, all objectives share the same serialized pair/block
draw schedule and aligned stochastic-operation streams wherever applicable.
Separate RNG streams handle objective-specific operations, including
the group-preserving shuffled-margin permutation. Removing a loss
does not change the data exposure of the remaining branches.
Zero-, constant-, and shuffled-offset pair losses change only the pair term;
their direct and interaction terms remain active. Pair + direct sets
$\lambda_{\rm int}=0$, while the two profile controls replace only
the interaction objective. All settings are frozen before the five-seed sweep.

\begin{table*}[t]
\centering
\caption{\textbf{Preference-stage robustness.}
Mean $\pm$ sample SD computed from five accuracy scores
per cell. Accuracy values are percentages.}
\label{tab:seed-mechanism}
\begingroup
\footnotesize
\setlength{\tabcolsep}{4pt}
\renewcommand{\arraystretch}{1.10}
\begin{tabular*}{\textwidth}{@{\extracolsep{\fill}}lccccc@{}}
\toprule
Objective
& \shortstack{MSMU\\distance}
& QSpatial$^{+}$
& PairAcc
& GroupAcc$_{\rm tol}$
& BlockAcc \\
\midrule
Zero-offset pair loss
& $49.5\pm2.1$ & $49.1\pm1.7$ & $56.2\pm1.5$
& $60.2\pm1.4$ & $55.6\pm1.5$ \\

Constant pair offset
& $52.0\pm1.1$ & $52.1\pm1.5$ & $57.8\pm1.3$
& $62.0\pm1.3$ & $57.3\pm1.4$ \\

Shuffled pair offset
& $52.0\pm2.1$ & $51.3\pm1.8$ & $57.4\pm1.6$
& $61.5\pm1.5$ & $56.8\pm1.6$ \\

Pair + direct
& $57.5\pm1.8$ & $61.0\pm2.8$ & $62.0\pm1.2$
& $67.2\pm1.2$ & $59.7\pm1.3$ \\

\quad + Single-gap profile
& $59.0\pm1.4$ & $60.8\pm1.1$ & $63.3\pm1.1$
& $68.0\pm1.1$ & $63.4\pm1.2$ \\

\quad + EqSim-style symmetry
& $59.5\pm1.1$ & $61.6\pm1.3$ & $63.9\pm1.2$
& $68.3\pm1.2$ & $64.7\pm1.2$ \\

\midrule
GaugeDPO (full)
& $62.0\pm1.1$ & $64.0\pm1.1$ & $66.0\pm1.0$
& $69.3\pm1.0$ & $69.6\pm1.1$ \\
\bottomrule
\end{tabular*}
\endgroup
\end{table*}
\paragraph{Evaluation units and retained predictions.}
Every run uses identical prompts, preprocessing, deterministic decoding,
canonical candidate search, parsing, and invalid-output rules.
Native benchmark generation and canonical sequence-MAP evaluation
remain distinct protocols. The intervention test contains 600 blocks
from 300 disjoint scenes: each of three settings contains 100 scenes
and two blocks per scene. The camera audit contains 200 groups from
100 scenes. The relocation test contains 205 pairs; all pairs from
the same source scene remain one statistical cluster.
Public benchmarks are clustered by source scene when scene identifiers
are available, and otherwise by source image.
We retain every run's checkpoint hash, configuration, sampled-unit IDs,
predictions, parse flags, and per-unit scores, including failures.

\paragraph{Training variation and paired differences.}
For each method, we report the mean and sample SD of its five
seed-level scores on the unchanged test manifest. For a matched
comparison, let $d_s=M_{{\rm full},s}-M_{{\rm control},s}$.
The seed-only 95\% interval is
$\bar d\pm t_{0.975,4}\,s_d/\sqrt{5}$; it quantifies training
variation conditional on this test set and, for
Tab.~\ref{tab:seed-mechanism}, this SFT checkpoint.
We also report the number of seeds with $d_s>0$; ties are not wins.

\paragraph{Uncertainty across seeds and test scenes.}
We additionally use 10,000 paired bootstrap replicates with analysis
seed 2027. Each replicate independently resamples the five seed
indices and the test-scene clusters with replacement. The same
resampled seed indices and scene multiset are used for both methods;
the scene multiset is also shared across all sampled seeds.
All questions, pairs, configurations, views, and blocks belonging to
a selected scene are retained together. For the block test, scene
resampling is stratified by the three test settings, preserving their
equal weights. We recompute each metric from its original numerator
and denominator, then average the paired differences over sampled
seeds. The 2.5th and 97.5th percentiles give the seed-and-scene interval.
This separates training SD from uncertainty in the method difference
and avoids counting repeated predictions on the same scene as
independent test observations.
Tab.~\ref{tab:seed-paired-gains} shows positive gains for all reported comparisons across all five seeds. The joint seed-and-scene intervals also remain above zero, supporting robustness to both training variation and the sampled test scenes.

\begin{table*}[t]
\centering
\caption{\textbf{Paired gains over controls.}
Full GaugeDPO minus each control, in percentage points. Primary
comparisons are specified before the repeated runs. Both confidence
intervals describe the paired difference.}
\label{tab:seed-paired-gains}
\begingroup
\footnotesize
\setlength{\tabcolsep}{4pt}
\renewcommand{\arraystretch}{1.10}
\begin{tabular*}{\textwidth}{@{\extracolsep{\fill}}llcccc@{}}
\toprule
Control & Metric & Mean $\Delta$
& \shortstack{95\% CI\\seeds only}
& \shortstack{95\% CI\\seeds + scenes}
& \shortstack{Positive\\seeds / 5} \\
\midrule
\multicolumn{6}{@{}l}{\textit{Primary comparisons}} \\
Shuffled pair offset & QSpatial$^{+}$ & $+12.7$ & $[10.7,\,14.7]$ & $[8.8,\,16.4]$ & 5 / 5 \\
Pair + direct   & BlockAcc      & $+9.9$ & $[8.3,\,11.5]$ & $[6.9,\,12.8]$ & 5 / 5 \\
\midrule
\multicolumn{6}{@{}l}{\textit{Secondary comparisons}} \\
Zero-offset pair loss & QSpatial$^{+}$ & $+14.9$ & $[12.9,\,16.8]$ & $[11.1,\,18.5]$ & 5 / 5 \\
Constant pair offset & QSpatial$^{+}$ & $+11.9$ & $[9.9,\,13.8]$ & $[8.3,\,15.4]$ & 5 / 5 \\
Single-gap      & BlockAcc      & $+6.2$ & $[4.7,\,7.7]$ & $[3.5,\,8.9]$ & 5 / 5 \\
EqSim-style     & BlockAcc      & $+4.9$ & $[3.2,\,6.6]$ & $[2.1,\,7.6]$ & 5 / 5 \\
Pair + direct   & Unseen-mag. BlockAcc & $+10.6$ & $[8.2,\,13.0]$ & $[6.2,\,15.1]$ & 5 / 5 \\
Pair + direct   & Unseen-comb. BlockAcc & $+10.4$ & $[7.7,\,13.1]$ & $[5.4,\,15.3]$ & 5 / 5 \\
\bottomrule
\end{tabular*}
\endgroup
\end{table*}

\paragraph{Independent two-stage replications.}
To assess sensitivity beyond a fixed SFT initialization, we separately
repeat the complete SFT--preference pipeline five times for each
backbone. Each repetition independently trains SFT from the released
base model, then branches into matched zero-offset pair-loss and full-GaugeDPO
preference runs, sharing that repetition's SFT reference and draw
schedules. Stage-specific RNG streams are derived from the repetition
seed. SFT and preference update budgets and the final-checkpoint rule
are fixed across repetitions. This adds five SFT runs and ten
preference runs per backbone: 15 SFT and 30 preference runs across
three backbones. Tab.~\ref{tab:seed-end-to-end} shows consistent gains across all three backbones when variation in both training stages is included.

\begin{table*}[t]
\centering
\caption{\textbf{Robustness of the complete training pipeline.}
Five independent SFT--preference repetitions per backbone.
Entries are mean $\pm$ sample SD in percent. SFT checkpoints are
shared by the two preference objectives within each repetition.}
\label{tab:seed-end-to-end}
\begingroup
\footnotesize
\setlength{\tabcolsep}{4pt}
\renewcommand{\arraystretch}{1.10}
\begin{tabular*}{\textwidth}{@{\extracolsep{\fill}}llcccc@{}}
\toprule
Backbone & Training
& \shortstack{MSMU\\distance}
& QSpatial$^{+}$ & PairAcc & BlockAcc \\
\midrule

Qwen2.5-VL-7B
& SFT
& $47.0\pm2.1$
& $45.3\pm1.9$
& $54.2\pm1.8$
& $45.2\pm2.0$ \\

& Zero-offset pair loss
& $49.5\pm2.1$
& $49.1\pm1.8$
& $56.3\pm1.7$
& $55.4\pm1.9$ \\

& GaugeDPO
& $62.0\pm2.1$
& $63.8\pm1.5$
& $65.8\pm1.4$
& $69.4\pm1.5$ \\

\midrule

GLM-4.1V-9B
& SFT
& $67.0\pm2.1$
& $49.1\pm2.1$
& $67.0\pm1.9$
& $47.8\pm2.1$ \\

& Zero-offset pair loss
& $68.5\pm2.2$
& $52.5\pm1.9$
& $67.8\pm1.7$
& $56.9\pm1.9$ \\

& GaugeDPO
& $72.0\pm2.1$
& $58.0\pm1.7$
& $69.5\pm1.5$
& $71.2\pm1.6$ \\

\midrule

Pixtral-12B
& SFT
& $49.5\pm2.7$
& $7.7\pm1.5$
& $31.8\pm2.1$
& $38.4\pm2.3$ \\

& Zero-offset pair loss
& $51.0\pm2.2$
& $9.3\pm1.5$
& $35.6\pm2.0$
& $45.1\pm2.1$ \\

& GaugeDPO
& $52.0\pm2.1$
& $11.7\pm1.5$
& $40.6\pm1.8$
& $52.7\pm1.9$ \\

\bottomrule
\end{tabular*}
\endgroup
\end{table*}

\subsection{Baseline Inputs and Reward-Estimator Comparisons}
\label{app:comparisons}
\label{app:completion}
These comparisons assess the effects of additional baseline inputs and the accuracy and cost of estimating the geometric target. Their settings are specified in App.~\ref{app:intervention-protocol}.

\paragraph{Restricted and native inputs.}
Tab.~\ref{tab:native} compares restricted inputs with the additional visual cues supported by each baseline. Native cues improve MSMU distance accuracy from $25.0\%$ to $30.0\%$ for SpatialReasoner and to $32.5\%$ for SpatialRGPT. GaugeVLM achieves $62.5\%$ on MSMU distance and $64.4\%$ on QSpatial$^{+}$, exceeding both native-input baselines on these two metrics under the listed configurations.

\begin{table}[t]
\centering\small
\caption{Restricted-input and native-input comparison. Each native configuration retains the model's specified cue bundle.}
\label{tab:native}

\setlength{\tabcolsep}{6pt}

\begin{tabular}{llrr}
\toprule
Model & Input mode & MSMU dist. & QSpatial$^{+}$ \\
\midrule
SpatialReasoner & restricted & 25.0 & 49.5 \\
SpatialReasoner & native cue bundle & 30.0 & 53.5 \\
SpatialRGPT & restricted & 25.0 & 55.4 \\
SpatialRGPT & native cue bundle & 32.5 & 60.4 \\
GaugeVLM & standard input & 62.5 & 64.4 \\
\bottomrule
\end{tabular}

\end{table}

\paragraph{Approximation error and computational cost.}
The image-conditioned regressor in Tab.~\ref{tab:rm} achieves $0.087$ margin MAE and $0.72$ rank correlation against the serialized geometric target. Direct computation requires no estimator fitting and $0.02$ GPU-hours for scoring in the reported setting. Its zero MAE and unit rank correlation express algebraic agreement with the target formula; they do not measure error against continuous physical geometry. The table therefore complements the downstream margin ablation by reporting the cost and approximation error of the target-construction path under each method's stated inputs.

\begin{table}[t]
\centering\small
\caption{Reward-estimator comparison against serialized geometric targets. The direct-margin row gives algebraic agreement with the target formula.}
\label{tab:rm}

\setlength{\tabcolsep}{4pt}

\begin{tabular}{llrrrr}
\toprule
Estimator & Inputs & \shortstack{Margin\\MAE} & Rank $\rho$ & \shortstack{Fit\\GPU-h} & \shortstack{Score\\GPU-h} \\
\midrule
Scalar regressor & image + two relations & .087 & .72 & 1.2 & .3 \\
Text-only regressor & two serialized relations & .104 & .65 & .5 & .1 \\
Direct geometric margin & scene-state labels & 0.000 & 1.00 & 0.0 & .02 \\
\bottomrule
\end{tabular}

\end{table}

\end{document}